\newtheorem{theorem}{Theorem}
\newtheorem{assumption}{Assumption}
\newtheorem{corollary}{Corollary}
\newtheorem{lemma}{Lemma}
\newenvironment{proof}[1][Proof]{\textbf{#1.} }{\ \rule{0.5em}{0.5em}}
\newcommand{\Actions}{\mathcal{A}}
\newcommand{\States}{\mathcal{S}}
\newcommand{\Pfcn}{\mathrm{P}}
\newcommand{\Rfcn}{r}
\newcommand{\EE}{\mathbb{E}}
\newcommand{\RR}{\mathbb{R}}
\newcommand{\tmn}{t_{MN}}
\newcommand{\defeq}{\mathrel{\overset{\makebox[0pt]{\mbox{\normalfont\tiny\sffamily def}}}{=}}}
\DeclareMathOperator*{\argmax}{arg\,max}
\newcommand{\figwidthtwo}{0.48\textwidth}
\newcommand{\figwidththree}{0.32\textwidth}
\newif\ifconsiderlater
	\newcommand{\todo}[1]{{ \textbf{XXX [#1] XXX}}}
\newcommand{\todo}[1]{}
\title{Maxmin Q-learning: Controlling the Estimation Bias of Q-learning}
\author{Qingfeng Lan, Yangchen Pan, Alona Fyshe, Martha White\\
Department of Computing Science\\
University of Alberta\\
Edmonton, Alberta, Canada\\
\texttt{\{qlan3,pan6,alona,whitem\}@ualberta.ca}\\
}
\begin{document}

\maketitle

\begin{abstract}
Q-learning suffers from overestimation bias, because it approximates the maximum action value using the maximum estimated action value. Algorithms have been proposed to reduce overestimation bias, but we lack an understanding of how bias interacts with performance, and the extent to which existing algorithms mitigate bias. In this paper, we 1) highlight that the effect of overestimation bias on learning efficiency is environment-dependent; 2) propose a generalization of Q-learning, called \emph{Maxmin Q-learning}, which provides a parameter to flexibly control bias; 3) show theoretically that there exists a parameter choice for Maxmin Q-learning that leads to unbiased estimation with a lower approximation variance than Q-learning; and 4) prove the convergence of our algorithm in the tabular case, as well as convergence of several previous Q-learning variants, using a novel Generalized Q-learning framework. We empirically verify that our algorithm better controls estimation bias in toy environments, and that it achieves superior performance on several benchmark problems. \footnote{Code is available at \url{https://github.com/qlan3/Explorer}} 
\end{abstract}

\section{Introduction}

Q-learning~\citep{watkins1989learningfd} is one of the most popular reinforcement learning algorithms.
One of the reasons for this widespread adoption is the simplicity of the update. On each step, the agent updates its action value estimates towards the observed reward and the estimated value of the maximal action in the next state. This target represents the highest value the agent thinks it could obtain from the current state and action, given the observed reward. 

Unfortunately, this simple update rule has been shown to suffer from overestimation bias~\citep{thrun1993issuesfa,hasselt_double_2010}. The agent updates with the maximum over action values might be large because an action's value actually is high, or it can be misleadingly high simply because of the stochasticity or errors in the estimator. With many actions, there is a higher probability that one of the estimates is large simply due to stochasticity and the agent will overestimate the value. This issue is particularly problematic under function approximation, and can significant impede the quality of the learned policy~\citep{thrun1993issuesfa,szita2008optimism,strehl2009pacmdp} or even lead to failures of Q-learning~\citep{thrun1993issuesfa}. More recently, experiments across several domains suggest that this overestimation problem is common~\citep{van2016deep}. 

Double Q-learning~\citep{hasselt_double_2010} is introduced to instead ensure \emph{under}estimation bias. The idea is to maintain two unbiased independent estimators of the action values. The expected action value of estimator one is selected for the maximal action from estimator two, which is guaranteed not to overestimate the true maximum action value. Double DQN~\citep{van2016deep}, the extension of this idea to Q-learning with neural networks, has been shown to significantly improve performance over Q-learning. However, this is not a complete answer to this problem, because trading overestimation bias for underestimation bias is not always desirable, as we show in our experiments.  

Several other methods have been introduced to reduce overestimation bias, without fully moving towards underestimation. Weighted Double Q-learning~\citep{zhang_weighted_2017} uses a weighted combination of the Double Q-learning estimate, which likely has underestimation bias, and the Q-learning estimate, which likely has overestimation bias. Bias-corrected Q-Learning~\citep{lee_bias-corrected_2013} reduces the overestimation bias through a bias correction term. Ensemble Q-learning and Averaged Q-learning~\citep{anschel_averaged-dqn:_2016} take averages of multiple action values, to both reduce the overestimation bias and the estimation variance. However, with a finite number of action-value functions, the average operation in these two algorithms will never completely remove the overestimation bias, as the average of several overestimation biases is always positive. Further, these strategies do not guide how strongly we should correct for overestimation bias, nor how to determine---or control---the level of bias.

The overestimation bias also appears in the actor-critic setting \citep{fujimoto2018addressing, haarnoja2018soft}. For example, \citet{fujimoto2018addressing} propose the Twin Delayed Deep Deterministic policy gradient algorithm (TD3) which reduces the overestimation bias by taking the minimum value between two critics. However, they do not provide a rigorous theoretical analysis for the effect of applying the minimum operator. There is also no theoretical guide for choosing the number of estimators such that the overestimation bias can be reduced to 0. 

In this paper, we study the effects of overestimation and underestimation bias on learning performance, and use them to motivate a generalization of Q-learning called Maxmin Q-learning. Maxmin Q-learning directly mitigates the overestimation bias by using a minimization over multiple action-value estimates. Moreover, it is able to control the estimation bias varying from positive to negative which helps improve learning efficiency as we will show in next sections. We prove that, theoretically, with an appropriate number of action-value estimators, we are able to acquire an unbiased estimator with a lower approximation variance than Q-learning. We empirically verify our claims on several benchmarks. We study the convergence properties of our algorithm within a novel Generalized Q-learning framework, which is suitable for studying several of the recently proposed Q-learning variants. We also combine deep neural networks with Maxmin Q-learning (Maxmin DQN) and demonstrate its effectiveness in several benchmark domains. 

\section{Problem Setting}

We formalize the problem as a Markov Decision Process (MDP), 
 $(\States, \Actions, \Pfcn, \Rfcn,  \gamma)$, where
$\States$ is the state space, $\Actions$ is the action space, $\Pfcn : \States \times \Actions \times \States \rightarrow [0, 1]$ is the transition probabilities, $\Rfcn: \States \times \Actions \times \States \rightarrow \RR$ is the reward mapping, and $\gamma \in [0,1]$ is the discount factor.
At each time step $t$, the agent observes a state $S_{t} \in \States$ and takes an action $A_{t} \in \Actions$ and then transitions to a new state $S_{t+1} \in \States$ according to the transition probabilities $\Pfcn$ and receives a scalar reward $R_{t+1} = \Rfcn(S_t, A_t, S_{t+1}) \in \RR$. 
The goal of the agent is to find a policy $\pi: \States \times \Actions \rightarrow [0,1]$ that maximizes the expected return starting from some initial state.

Q-learning is an off-policy algorithm which attempts to learn the state-action values $Q: \States \times \Actions \rightarrow \RR$ for the optimal policy. It tries to solve for
%
\begin{equation*}
Q^*(s, a)=\EE\Big[R_{t+1} + \max_{a' \in \Actions} Q^*(S_{t+1}, a') \ \ \Big| \ \ S_t=s, A_t = a \Big]
\end{equation*}
The optimal policy is to act greedily with respect to these action values: from each $s$ select $a$ from $\argmax_{a \in \Actions} Q^*(s, a)$. The update rule for an approximation $Q$ for a sampled transition $s_t, a_t, r_{t+1}, s_{t+1}$ is:
\begin{align}\label{eq:Y^Q}
Q(s_t, a_t) &\leftarrow Q(s_t, a_t) + \alpha(Y^{Q}_t - Q(s_t, a_t)) && \text{for }  Y^{Q}_t \defeq r_{t+1}+\gamma \max _{a^\prime \in \Actions} Q(s_{t+1}, a^\prime)
\end{align}
where $\alpha$ is the step-size. The transition can be generated off-policy, from any behaviour that sufficiently covers the state space. This algorithm is known to converge in the tabular setting~\citep{tsitsiklis1994asynchronous}, with some limited results for the function approximation setting~\citep{melo2007q}.




\section{Understanding when Overestimation Bias Helps and Hurts}


In this section, we briefly discuss the estimation bias issue, and empirically show that both overestimation and underestimation bias may improve learning performance, depending on the environment. This motivates our Maxmin Q-learning algorithm described in the next section, which allows us to flexibly control the estimation bias and reduce the estimation variance. 

The overestimation bias occurs since the target $\max _{a^\prime \in \Actions} Q(s_{t+1}, a^\prime)$ is used in the Q-learning update. Because $Q$ is an approximation, it is probable that the approximation is higher than the true value for one or more of the actions. The maximum over these estimators, then, is likely to be skewed towards an overestimate. For example, even unbiased estimates $Q(s_{t+1}, a^\prime)$ for all $a^\prime$, will vary due to stochasticity. $Q(s_{t+1}, a^\prime) = Q^*(s_{t+1}, a^\prime) + e_{a^\prime}$, and for some actions, $e_{a^\prime}$ will be positive. As a result, $\EE[\max _{a^\prime \in \Actions} Q(s_{t+1}, a^\prime)] \ge \max _{a^\prime \in \Actions} \EE[Q(s_{t+1}, a^\prime)] = \max _{a^\prime \in \Actions} Q^*(s_{t+1}, a^\prime)$. 

This overestimation bias, however, may not always be detrimental. And, further, in some cases, erring towards an underestimation bias can be harmful. Overestimation bias can help encourage exploration for overestimated actions, whereas underestimation bias might discourage exploration. In particular, we expect more overestimation bias in highly stochastic areas of the world; if those highly stochastic areas correspond to high-value regions, then encouraging exploration there might be beneficial. An underestimation bias might actually prevent an agent from learning that a region is high-value. Alternatively, if highly stochastic areas also have low values, overestimation bias might cause an agent to over-explore a low-value region. 

We show this effect in the simple MDP, shown in Figure~\ref{fig:MDP}. The MDP for state $A$ has only two actions: Left and $\text{Right}$. It has a deterministic neutral reward for both the Left action and the $\text{Right}$ action. The Left action transitions to state $B$ where there are eight actions transitions to a terminate state with a highly stochastic reward. The mean of this stochastic reward is $\mu$. By selecting $\mu > 0$, the stochastic region becomes high-value, and we expect overestimation bias to help and underestimation bias to hurt. By selecting $\mu < 0$, the stochastic region becomes low-value, and we expect overestimation bias to hurt and underestimation bias to help. 

\begin{figure}[htbp]
	\begin{center}
		\includegraphics[width=\figwidthtwo]{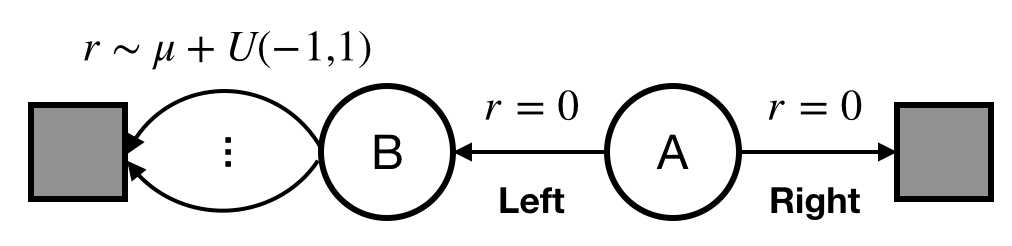}
		\caption{A simple episodic MDP, adapted from Figure $6.5$ in~\citet{sutton2011reinforcement} which is used to highlight the difference between Double Q-learning and Q-learning. This MDP has two non-terminal states $A$ and $B$. Every episode starts from $A$ which has two actions: Left and \text{Right}. The \text{Right} action transitions to a terminal state with reward $0$. The Left action transitions to state $B$ with reward $0$. From state $B$, there are 8 actions that all transition to a terminal state with a reward $\mu + \xi$, where $\xi$ is drawn from a uniform distribution $U(-1, 1)$. When $\mu > 0$, the optimal action in state $A$ is Left; when $\mu < 0$, it is \text{Right}.}
		\label{fig:MDP}
	\end{center}
\end{figure}

\begin{figure*}[htbp]
	\subfigure[$\mu=+0.1$ (overestimation helps)]{
		\includegraphics[width=\figwidthtwo]{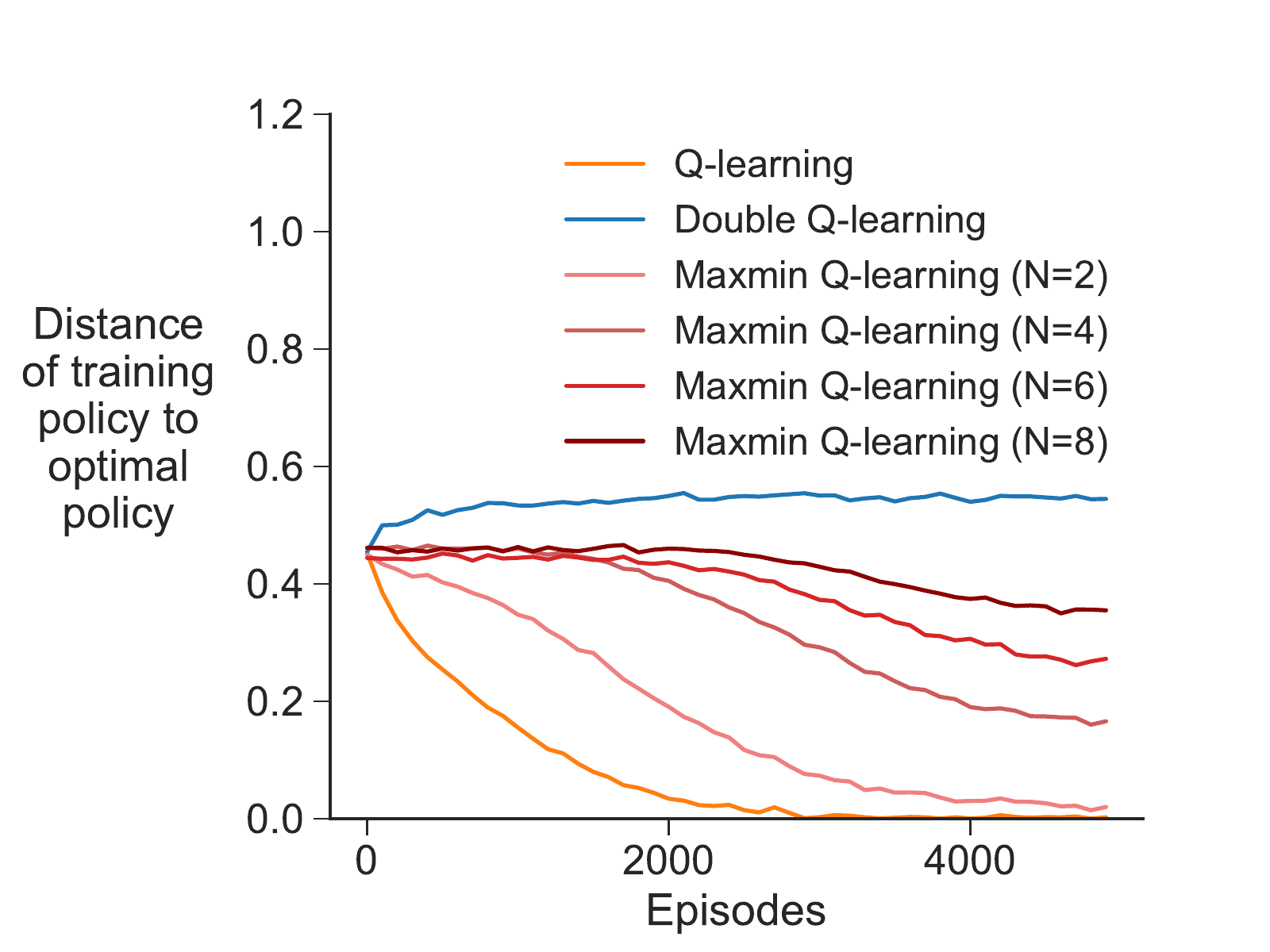}}\label{fig:mdp-policy-pos}
	\subfigure[$\mu=-0.1$ (underestimation helps)]{
		\includegraphics[width=\figwidthtwo]{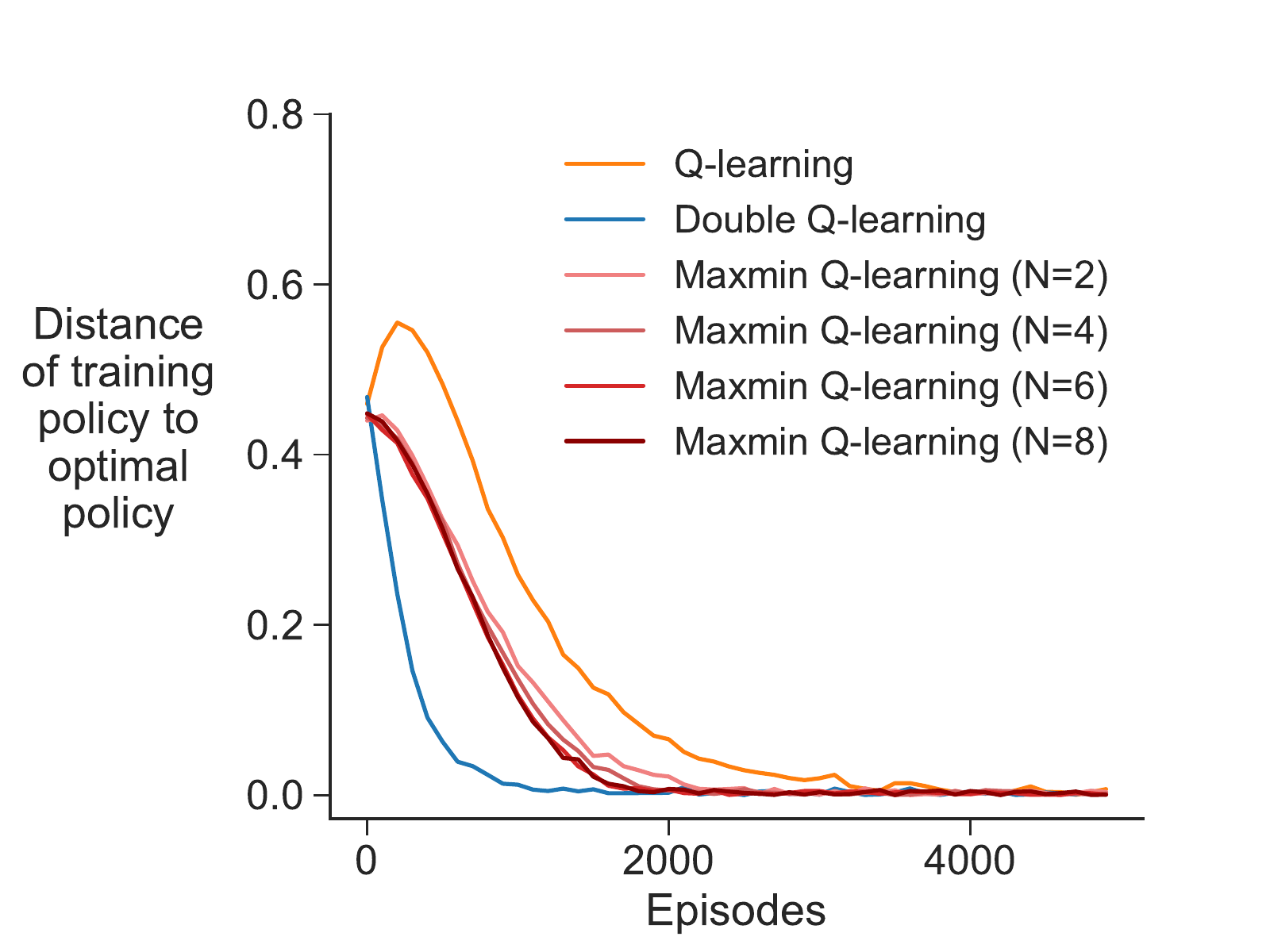}}\label{fig:mdp-policy-neg}
	\caption{
	Comparison of three algorithms using the simple MDP in Figure~\ref{fig:MDP} with different values of $\mu$, and thus different expected rewards. For $\mu = +0.1$, shown in (a), the optimal $\epsilon$-greedy policy is to take the Left action with $95\%$ probability. For $\mu=-0.1$, shown in in (b), the optimal policy is to take the Left action with $5\%$ probability. The reported distance is the absolute difference between the probability of taking the Left action under the learned policy compared to the optimal $\epsilon$-greedy policy. All results were averaged over $5,000$ runs. 
	}\label{fig:toymdp-policy}
\end{figure*}

We test Q-learning, Double Q-learning and our new algorithm Maxmin Q-learning in this environment. Maxmin Q-learning (described fully in the next section) uses $N$ estimates of the action values in the targets. For $N = 1$, it corresponds to Q-learning; otherwise, it progresses from overestimation bias at $N=1$ towards underestimation bias with increasing $N$. In the experiment, we used a discount factor $\gamma = 1$; a replay buffer with size $100$; an $\epsilon$-greedy behaviour with $\epsilon=0.1$; tabular action-values, initialized with a Gaussian distribution $\mathcal{N}(0, 0.01)$; and a step-size of $0.01$ for all algorithms. 

The results in Figure~\ref{fig:toymdp-policy} verify our hypotheses for when overestimation and underestimation bias help and hurt. Double Q-learning underestimates too much for $\mu = +1$, and converges to a suboptimal policy. Q-learning learns the optimal policy the fastest, though for all values of $N = 2, 4, 6, 8$, Maxmin Q-learning does progress towards the optimal policy. All methods get to the optimal policy for $\mu = -1$, but now Double Q-learning reaches the optimal policy the fastest, and followed by Maxmin Q-learning with larger $N$. 


\newcommand{\Qi}{Q^{i}}
\newcommand{\Qtrue}{Q^{*}}
\newcommand{\Qtruesa}[1]{Q_{#1}^{*}}

\section{Maxmin Q-learning}

In this section, we develop Maxmin Q-learning, a simple generalization of Q-learning designed to control the estimation bias, as well as reduce the estimation variance of action values. The idea is to maintain $N$ estimates of the action values, $\Qi$, and use the minimum of these estimates in the Q-learning target: $\max_{a^\prime} \min_{i\in \{1, \ldots, N\}} \Qi(s^\prime, a^\prime)$. For $N = 1$, the update is simply Q-learning, and so likely has overestimation bias. As $N$ increase, the overestimation decreases; for some $N > 1$, this maxmin estimator switches from an overestimate, in expectation, to an underestimate. We characterize the relationship between $N$ and the expected estimation bias below in Theorem~\ref{thm:1}. Note that Maxmin Q-learning uses a different mechanism to reduce overestimation bias than Double Q-learning; Maxmin Q-learning with $N = 2$ is not Double Q-learning.

The full algorithm is summarized in Algorithm \ref{algo:MaxminQ}, and is a simple modification of Q-learning with experience replay. We use random subsamples of the observed data for each of the $N$ estimators, to make them nearly independent. To do this training online, we keep a replay buffer. On each step, a random estimator $i$ is chosen and updated using a mini-batch from the buffer. Multiple such updates can be performed on each step, just like in experience replay, meaning multiple estimators can be updated per step using different random mini-batches. In our experiments, to better match DQN, we simply do one update per step. Finally, it is also straightforward to incorporate target networks to get Maxmin DQN, by maintaining a target network for each estimator. 

\begin{algorithm}[htbp]
 \KwIn {step-size $\alpha$, exploration parameter $\epsilon > 0$, number of action-value functions $N$}
 Initialize $N$ action-value functions $\{Q^1, \ldots, Q^N\}$ randomly\\
 Initialize empty replay buffer $D$\\
 Observe initial state $s$\\
 \While{Agent is interacting with the Environment}
 {
   $Q^{min}(s, a) \leftarrow \min _{k \in \{1,\ldots, N\}} Q^{k}(s, a)$, $\forall a \in \Actions$\\
   Choose action $a$ by $\epsilon$-greedy based on $Q^{min}$\\
   Take action $a$, observe $r$, $s^{\prime}$\\
   Store transition $(s, a, r, s^{\prime})$ in $D$\\
   Select a subset $S$ from $\{1,\ldots, N\}$ \hspace{0.1cm} (e.g., randomly select one $i$ to update)\\
   \For {$i \in S$}
   {
     Sample random mini-batch of transitions $(s_{D}, a_{D}, r_{D}, s^{\prime}_{D})$ from $D$\\
     Get update target: $Y^{MQ} \leftarrow r_{D} + \gamma \max_{a^{\prime} \in A} Q^{min}(s^{\prime}_{D}, a^{\prime})$\\
     Update action-value $Q^{i}$: $Q^{i}(s_{D}, a_{D}) \leftarrow Q^{i}(s_{D}, a_{D}) + \alpha [Y^{MQ}-Q^{i}(s_{D}, a_{D})]$\\
   }
   $s \leftarrow s^{\prime}$\\
 }
 \caption{Maxmin Q-learning}
 \label{algo:MaxminQ}
\end{algorithm}

\newcommand{\enoise}{\tau}


We now characterize the relation between the number of action-value functions used in Maxmin Q-learning and the estimation bias of action values. For compactness, we write $Q^{i}_{s a}$ instead of $Q^{i} (s, a)$. Each $\Qi_{sa}$ has random approximation error $e^{i}_{s a}$
\begin{equation*}
    Q^{i}_{s a} = \Qtruesa{sa} + e^{i}_{s a}.
\end{equation*}
%
We assume that $e^{i}_{s a}$ is a uniform random variable $U(-\enoise, \enoise)$ for some $\enoise>0$. The uniform random assumption was used by \citet{thrun1993issuesfa} to demonstrate bias in Q-learning, and reflects that non-negligible positive and negative $e^{i}_{s a}$ are possible. Notice that for $N$ estimators with $n_{sa}$ samples, the $\enoise$ will be proportional to some function of $n_{sa}/N$, because the data will be shared amongst the $N$ estimators. For the general theorem, we use a generic $\enoise$, and in the following corollary provide a specific form for $\enoise$ in terms of $N$ and $n_{sa}$. 

Recall that $M$ is the number of actions applicable at state $s^{\prime}$. Define the estimation bias $Z_{MN}$ for transition $s, a, r, s'$ to be
\begin{equation*}
\begin{aligned}
    Z_{MN} &\defeq (r + \gamma \max_{a^{\prime}} Q_{s^{\prime} a^{\prime}}^{min}) - (r + \gamma \max_{a^{\prime}} \Qtruesa{s^{\prime} a^{\prime}}) \\
    & =\gamma (\max_{a^{\prime}} Q_{s^{\prime} a^{\prime}}^{min} - \max_{a^{\prime}} \Qtruesa{s^{\prime} a^{\prime}})
\end{aligned}
\end{equation*}
where
\begin{equation*}
    Q_{s a}^{min} \defeq \min_{i \in \{1,\ldots, N\}} Q^{i}_{s a} = \Qtruesa{sa} + \min_{i \in \{1,\ldots, N\}} e^{i}_{s a}
\end{equation*}

We now show how the expected estimation bias $E[Z_{MN}]$ and the variance of $Q_{s a}^{min}$ are related to the number of action-value functions $N$ in Maxmin Q-learning.
\begin{theorem}
\label{thm:1}
Under the conditions stated above and assume all actions share the same true action-value,
\begin{enumerate}[label=(\roman*),leftmargin=0cm,itemindent=.5cm,itemsep=0em]
    \item the expected estimation bias is
    \begin{align*}
        E[Z_{MN}] = \gamma \enoise [1 - 2 \tmn] && \text{ where } \tmn 
        = \frac{M(M-1) \cdots 1}{(M+\frac{1}{N})(M-1+\frac{1}{N}) \cdots (1+\frac{1}{N})}.
    \end{align*}
    $E[Z_{MN}]$ decreases as $N$ increases:  $E[Z_{M,N=1}] = \gamma \enoise \frac{M-1}{M+1}$ and $E[Z_{M,N \rightarrow \infty}] = - \gamma \enoise$.
    \item 
    \vspace{-0.3cm}
    \begin{equation*}
        Var[Q_{s a}^{min}] = \frac{4N{\enoise}^2}{(N+1)^{2}(N+2)}.
    \end{equation*}
    $Var[Q_{s a}^{min}]$ decreases as $N$ increases: $Var[Q_{sa}^{min}]\!=\!\tfrac{{\enoise}^2}{3}$ for N=1 and $Var[Q_{sa}^{min}]=0$ for $N \rightarrow \infty$.
\end{enumerate} 
\end{theorem}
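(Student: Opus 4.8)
The plan is to reduce both quantities to order statistics of the uniform errors and then recognize a Beta integral.

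First, since all actions at $s'$ share the same true value, the term $\max_{a'} \Qtruesa{s'a'}$ is a constant that cancels, so $Z_{MN} = \gamma \max_{a'} \min_{i} e^{i}_{s' a'}$. I would transform each error via $u^{i} = (\enoise - e^{i})/(2\enoise) \sim U(0,1)$, so that $\min_{i} e^{i} = \enoise\bigl(1 - 2\max_{i} u^{i}\bigr)$. For a fixed action the inner minimum over the $N$ estimators thus becomes a maximum of $N$ i.i.d.\ standard uniforms; writing $B_{a'} = \max_{i} u^{i}_{a'}$, this has CDF $b^{N}$ on $[0,1]$, i.e.\ $B_{a'} \sim \mathrm{Beta}(N,1)$. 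Because $\min_i e^i$ is a \emph{decreasing} function of $B_{a'}$, the outer $\max_{a'}$ over the $M$ actions flips to a $\min_{a'}$: $\max_{a'}\min_{i} e^{i} = \enoise\bigl(1 - 2\min_{a'} B_{a'}\bigr)$. Hence $E[Z_{MN}] = \gamma\enoise\bigl(1 - 2\, E[\min_{a'} B_{a'}]\bigr)$, and it remains to show $E[\min_{a'} B_{a'}] = \tmn$.

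For this I would use the tail formula together with independence across the $M$ actions and $P(B_{a'} > b) = 1 - b^{N}$:
\begin{equation*}
E\bigl[\min_{a'} B_{a'}\bigr] = \int_{0}^{1} P\bigl(\min_{a'} B_{a'} > b\bigr)\,db = \int_{0}^{1} (1 - b^{N})^{M}\,db .
\end{equation*}
The substitution $v = b^{N}$ turns this into $\frac{1}{N}\int_{0}^{1}(1-v)^{M} v^{1/N - 1}\,dv = \frac{1}{N}\cdot\frac{\Gamma(1/N)\,\Gamma(M+1)}{\Gamma(M+1+1/N)}$, a Beta integral. Applying $\Gamma(z+1) = z\Gamma(z)$, once to absorb the prefactor as $\frac{1}{N}\Gamma(1/N) = \Gamma(1+1/N)$ and repeatedly to telescope $\Gamma(M+1+1/N)$ into the falling product $(M+\tfrac1N)(M-1+\tfrac1N)\cdots(1+\tfrac1N)\,\Gamma(1+\tfrac1N)$, cancels the Gamma factors and yields exactly $\tmn$. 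I expect the main delicacy to be the min/max flip induced by the decreasing transformation and keeping the Gamma recursion straight; the integral itself is routine once the Beta form is spotted. The limiting values then follow by setting $N=1$ (where the denominator is $(M+1)!$, giving $\tmn = 1/(M+1)$ and $E[Z_{M,1}] = \gamma\enoise\frac{M-1}{M+1}$) and $N\to\infty$ (where the denominator collapses to $M!$, giving $\tmn \to 1$ and $E[Z_{M,\infty}] = -\gamma\enoise$). Monotonicity is immediate: every denominator factor $(M-k+1/N)$ is strictly decreasing in $N$ while the numerator is fixed, so $\tmn$ increases in $N$ and therefore $E[Z_{MN}]$ decreases.

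For part (ii), I would note that $Q_{sa}^{min}$ differs from $\min_{i} e^{i}_{sa}$ only by the constant $\Qtruesa{sa}$, so $Var[Q_{sa}^{min}] = Var[\min_{i} e^{i}_{sa}] = 4\enoise^{2}\,Var[B]$ using the same transformation (now for a single action) with $B \sim \mathrm{Beta}(N,1)$. Substituting the standard variance $Var[B] = N/\bigl[(N+1)^{2}(N+2)\bigr]$ gives the stated formula $\frac{4N\enoise^{2}}{(N+1)^{2}(N+2)}$; evaluating at $N=1$ recovers the single-uniform variance $\enoise^{2}/3$, and letting $N\to\infty$ gives $0$ since the numerator is $O(N)$ against a denominator of $O(N^{3})$.
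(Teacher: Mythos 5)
Your proposal is correct and follows essentially the same route as the paper: both reduce the bias to $\gamma E[\max_{a^{\prime}} \min_{i} e^{i}_{s^{\prime} a^{\prime}}]$ for i.i.d.\ uniform errors, arrive at the identical integral $\tmn = \int_{0}^{1}(1-y^{N})^{M}\,dy$, and evaluate it as a Beta integral with the same Gamma-function telescoping, and likewise both obtain the variance as that of the minimum of $N$ uniforms. The only differences are cosmetic: you reach the integral via the tail formula for $E[\min_{a^{\prime}} B_{a^{\prime}}]$ with $B_{a^{\prime}} \sim \mathrm{Beta}(N,1)$, where the paper integrates the order-statistic density by parts, and you quote the $\mathrm{Beta}(N,1)$ variance where the paper computes the second moment of the minimum directly.
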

Theorem~\ref{thm:1} is a generalization of the first lemma in~\citet{thrun1993issuesfa}; we provide the proof in Appendix~\ref{app_thm1} as well as a visualization of the expected bias for varying $M$ and $N$. This theorem shows that the average estimation bias $E[Z_{MN}]$, decreases as $N$ increases. Thus, we can control the bias by changing the number of estimators in Maxmin Q-learning. Specifically, the average estimation bias can be reduced from positive to negative as $N$ increases. 
Notice that $E[Z_{MN}]=0$ when $\tmn=\frac{1}{2}$. This suggests that by choosing $N$ such that $\tmn \approx \frac{1}{2}$, we can reduce the bias to near $0$. 

Furthermore, $Var[Q_{s a}^{min}]$ decreases as $N$ increases. This indicates that we can control the estimation variance of target action value through $N$. We show just this in the following Corollary. The subtlety is that with increasing $N$, each estimator will receive less data. The fair comparison is to compare the variance of a single estimator that uses all of the data, as compared to the maxmin estimator which shares the samples across $N$ estimators. We show that there is an $N$ such that the variance is lower, which arises largely due to the fact that the variance of each estimator decreases linearly in $n$, but the $\enoise$ parameter for each estimator only decreases at a square root rate in the number of samples. 

\begin{corollary}\label{cor_1}
Assuming the $n_{sa}$ samples are evenly allocated amongst the $N$ estimators, then $\tau = \sqrt{3 \sigma^2 N/n_{sa}}$ where $\sigma^2$ is the variance of samples for $(s,a)$ and, for $Q_{s a}$ the estimator that uses all $n_{sa}$ samples for a single estimate, 
\begin{equation*}
    Var[Q_{s a}^{min}] = \frac{12N^2}{(N+1)^{2}(N+2)} Var[Q_{s a}].
\end{equation*}
 Under this uniform random noise assumption, for $N \geq 8$, $Var[Q_{s a}^{min}] < Var[Q_{s a}]$.    
\end{corollary}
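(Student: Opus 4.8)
The plan is to treat the corollary as three linked claims: the explicit form of $\enoise$, the variance ratio, and the threshold $N \geq 8$. The only genuinely new ingredient is the bridge between the abstract uniform-noise model of Theorem~\ref{thm:1} and the concrete sampling setup; once $\enoise$ is expressed in terms of $\sigma^2$, $N$, and $n_{sa}$, everything reduces to substituting into the variance formula from Theorem~\ref{thm:1}(ii) and a short polynomial comparison.

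First I would pin down $\enoise$ by matching second moments. Under the modelling assumption each error $e^i_{sa} \sim U(-\enoise, \enoise)$, so $Var[e^i_{sa}] = \enoise^2/3$. On the other hand, with the $n_{sa}$ samples split evenly, estimator $Q^i_{sa}$ is a sample mean over $n_{sa}/N$ draws of variance $\sigma^2$, hence its error has variance $\sigma^2 N / n_{sa}$. Equating $\enoise^2/3 = \sigma^2 N/n_{sa}$ yields $\enoise = \sqrt{3\sigma^2 N/n_{sa}}$, the stated expression.

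Next I would substitute $\enoise^2 = 3\sigma^2 N / n_{sa}$ into the identity $Var[Q_{sa}^{min}] = 4N\enoise^2 / [(N+1)^2(N+2)]$ supplied by Theorem~\ref{thm:1}(ii), giving $Var[Q_{sa}^{min}] = 12 N^2 \sigma^2 / [n_{sa}(N+1)^2(N+2)]$. Since the single pooled estimator $Q_{sa}$ is a sample mean over all $n_{sa}$ draws, $Var[Q_{sa}] = \sigma^2/n_{sa}$, and dividing the two expressions produces exactly the factor $12N^2/[(N+1)^2(N+2)]$.

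Finally, the comparison $Var[Q_{sa}^{min}] < Var[Q_{sa}]$ is equivalent to $12N^2 < (N+1)^2(N+2)$, i.e. to positivity of the cubic $N^3 - 8N^2 + 5N + 2$. I would check the boundary directly: this cubic is negative at $N=7$ (value $-12$) and positive at $N=8$ (value $42$), and its derivative $3N^2 - 16N + 5$ is positive for $N \geq 6$, so the cubic is increasing past the threshold and stays positive for every integer $N \geq 8$. The main obstacle, such as it is, is the variance-matching step above; the rest is mechanical algebra, so the proof mostly hinges on correctly identifying that the uniform bound $\enoise$ scales like $\sqrt{N/n_{sa}}$ while each estimator's variance scales like $N/n_{sa}$, which is precisely the mismatch the corollary exploits.
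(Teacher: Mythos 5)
Your proposal is correct and follows essentially the same route as the paper's proof: match the uniform error variance $\enoise^2/3$ to the sample-mean variance $\sigma^2 N/n_{sa}$ to get $\enoise = \sqrt{3\sigma^2 N/n_{sa}}$, substitute into the formula of Theorem~\ref{thm:1}(ii), and divide by $Var[Q_{sa}] = \sigma^2/n_{sa}$. The only difference is that you spell out the final threshold check (positivity of $N^3 - 8N^2 + 5N + 2$ for $N \geq 8$, via sign evaluation and monotonicity of the cubic), which the paper dismisses as ``easy to verify''; your version is a welcome completion of that step, not a different argument.
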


\section{Experiments}

In this section, we first investigate robustness to reward variance, in a simple environment (Mountain Car) in which we can perform more exhaustive experiments. Then, we investigate performance in seven benchmark environments. 

\paragraph{Robustness under increasing reward variance in Mountain Car}
Mountain Car~\citep{sutton2011reinforcement} is a classic testbed in Reinforcement Learning, where the agent receives a reward of $-1$ per step with $\gamma = 1$, until the car reaches the goal position and the episode ends. In our experiment, we modify the rewards to be stochastic with the same mean value: the reward signal is sampled from a Gaussian distribution $\mathcal{N}(-1, \sigma^2)$ on each time step. An agent should learn to reach the goal position in as few steps as possible. 

The experimental setup is as follows. We trained each algorithm with $1,000$ episodes. The number of steps to reach the goal position in the last training episode was used as the performance measure. The fewer steps, the better performance. All experimental results were averaged over $100$ runs. The key algorithm settings included the function approximator, step-sizes, exploration parameter and replay buffer size. All algorithm used $\epsilon$-greedy with $\epsilon=0.1$ and a buffer size of $100$. For each algorithm, the best step-size was chosen from $\{0.005, 0.01, 0.02, 0.04, 0.08\}$, separately for each reward setting. Tile-coding was used to approximate the action-value function, where we used $8$ tilings with each tile covering $1/8$th of the bounded distance in each dimension. For Maxmin Q-learning, we randomly chose one action-value function to update at each step. 

As shown in Figure~\ref{fig:MC}, when the reward variance is small, the performance of Q-learning, Double Q-learning, Averaged Q-learning, and Maxmin Q-learning are comparable. However, as the variance increases, Q-learning, Double Q-learning, and Averaged Q-learning became much less stable than Maxmin Q-learning. In fact, when the variance was very high ($\sigma=50$, see Appendix \ref{app_mc}), Q-learning and Averaged Q-learning failed to reach the goal position in $5,000$ steps, and Double Q-learning produced runs $>400$ steps, even after many episodes.

\begin{figure}[tbp]
\begin{center}
    \subfigure[Robustness under increasing reward variance]{
\includegraphics[width=\figwidthtwo]{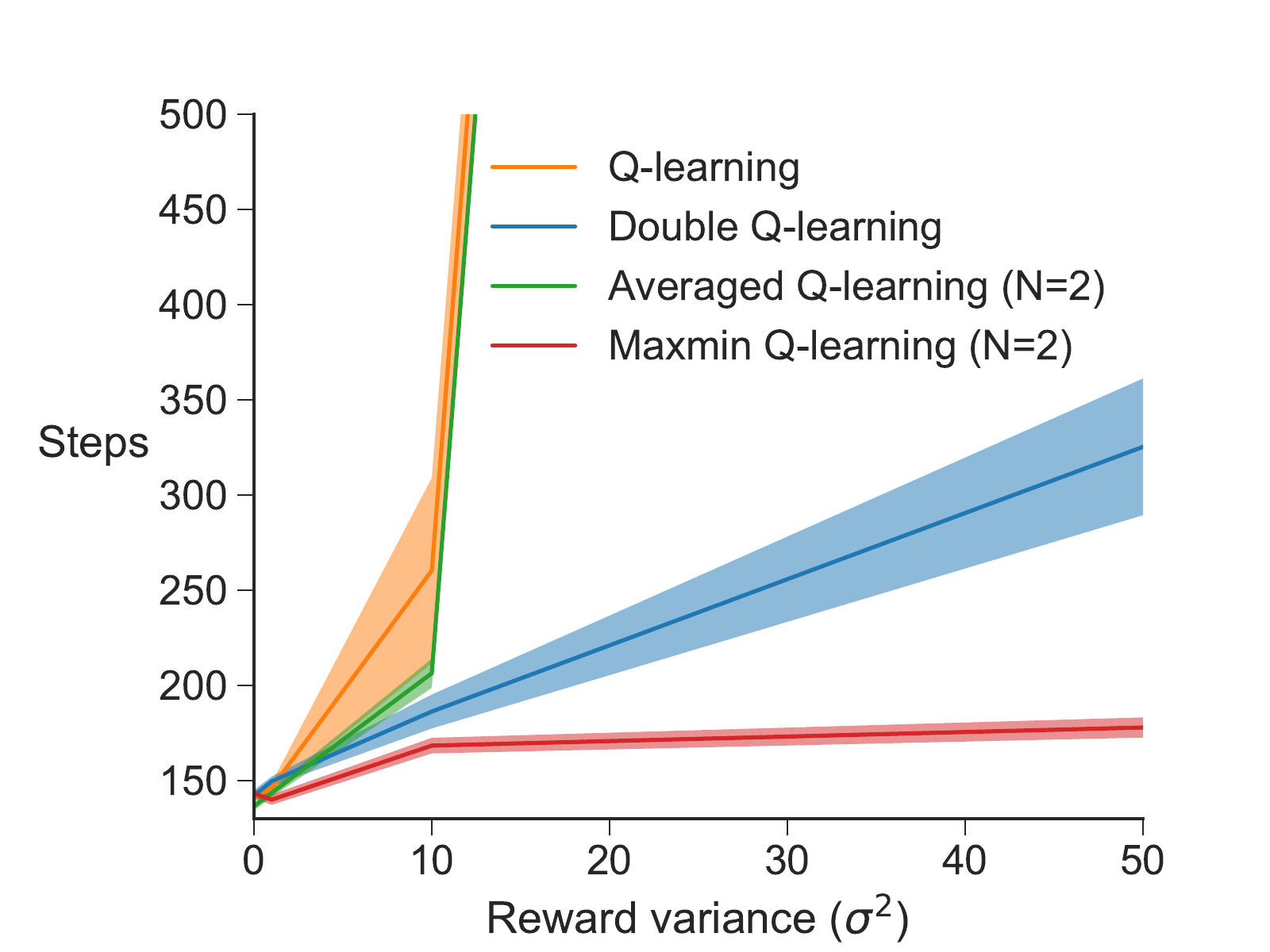}}
	\subfigure[$Reward \sim \mathcal{N}(-1,10)$]{
\includegraphics[width=\figwidthtwo]{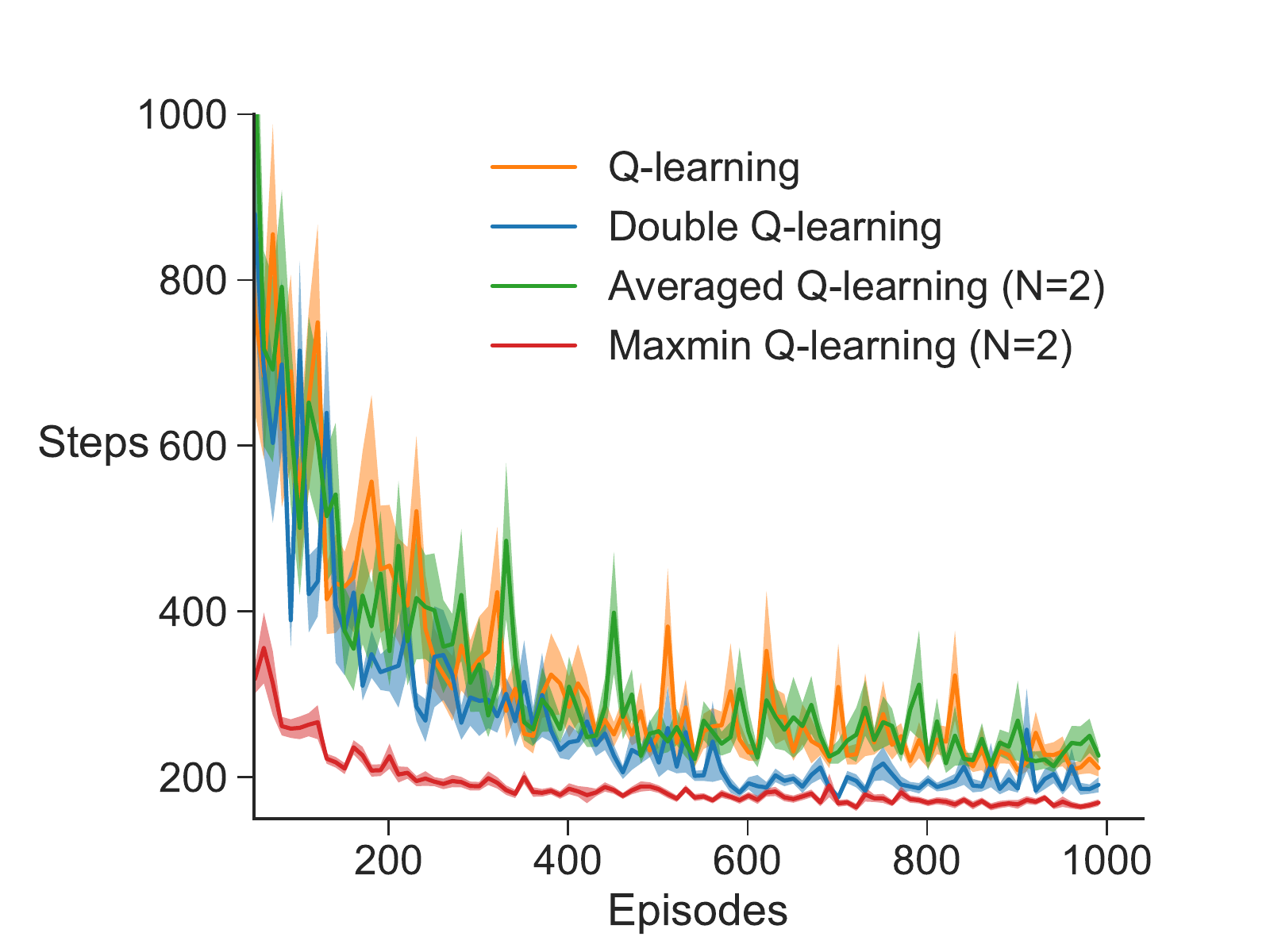}}
\end{center}
\caption{Comparison of four algorithms on Mountain Car under different reward variances. The lines in $(a)$ show the average number of steps taken in the last episode with one standard error. The lines in $(b)$ show the number of steps to reach the goal position during training when the reward variance $\sigma^2 = 10$. All results were averaged across $100$ runs, with standard errors. Additional experiments with further elevated $\sigma^2$ can be found in Appendix \ref{app_mc}.
}
\label{fig:MC}
\end{figure}

\paragraph{Results on Benchmark Environments}

To evaluate Maxmin DQN, we choose seven games from Gym~\citep{brockman2016openai}, PyGame Learning Environment (PLE)~\citep{tasfi2016PLE}, and MinAtar~\citep{young2019minatar}: Lunarlander, Catcher, Pixelcopter, Asterix, Seaquest, Breakout, and Space Invaders.
For games in MinAtar (i.e. Asterix, Seaquest, Breakout, and Space Invaders), we reused the hyper-parameters and settings of neural networks in~\citep{young2019minatar}. And the step-size was chosen from $[3*10^{-3}, 10^{-3}, 3*10^{-4}, 10^{-4}, 3*10^{-5}]$. 
For Lunarlander, Catcher, and Pixelcopter, the neural network was a multi-layer perceptron with hidden layers fixed to $[64, 64]$. The discount factor was $0.99$. The size of the replay buffer was $10,000$. The weights of neural networks were optimized by RMSprop with gradient clip $5$. The batch size was $32$. The target network was updated every $200$ frames. $\epsilon$-greedy was applied as the exploration strategy with $\epsilon$ decreasing linearly from $1.0$ to $0.01$ in $1,000$ steps. After $1,000$ steps, $\epsilon$ was fixed to $0.01$. For Lunarlander, the best step-size was chosen from $[3*10^{-3}, 10^{-3}, 3*10^{-4}, 10^{-4}, 3*10^{-5}]$. For Catcher and Pixelcopter, the best step-size was chosen from $[10^{-3}, 3*10^{-4}, 10^{-4}, 3*10^{-5}, 10^{-5}]$.

For both Maxmin DQN and Averaged DQN, the number of target networks $N$ was chosen from $[2,3,4,5,6,7,8,9]$. And we randomly chose one action-value function to update at each step. We first trained each algorithm in a game for certain number of steps. After that, each algorithm was tested by running $100$ test episodes with $\epsilon$-greedy where $\epsilon=0.01$. Results were averaged over $20$ runs for each algorithm, with learning curves shown for the best hyper-parameter setting (see Appendix \ref{app_benchmark} for the parameter sensitivity curves).

We see from Figure~\ref{fig:DeepRL} that Maxmin DQN performs as well as or better than other algorithms. In environments where final performance is noticeably better----Pixelcopter, Lunarlander and Asterix---the initial learning is slower. A possible explanation for this is that the Maxmin agent more extensively explored early on, promoting better final performance. We additionally show on Pixelcopter and Asterix that for smaller $N$, Maxmin DQN learns faster but reaches suboptimal performance---behaving more like Q-learning---and for larger $N$ learns more slowly but reaches better final performance.  
\begin{figure}[htbp]
\begin{center}
    \subfigure[Catcher]{
\includegraphics[width=\figwidththree]{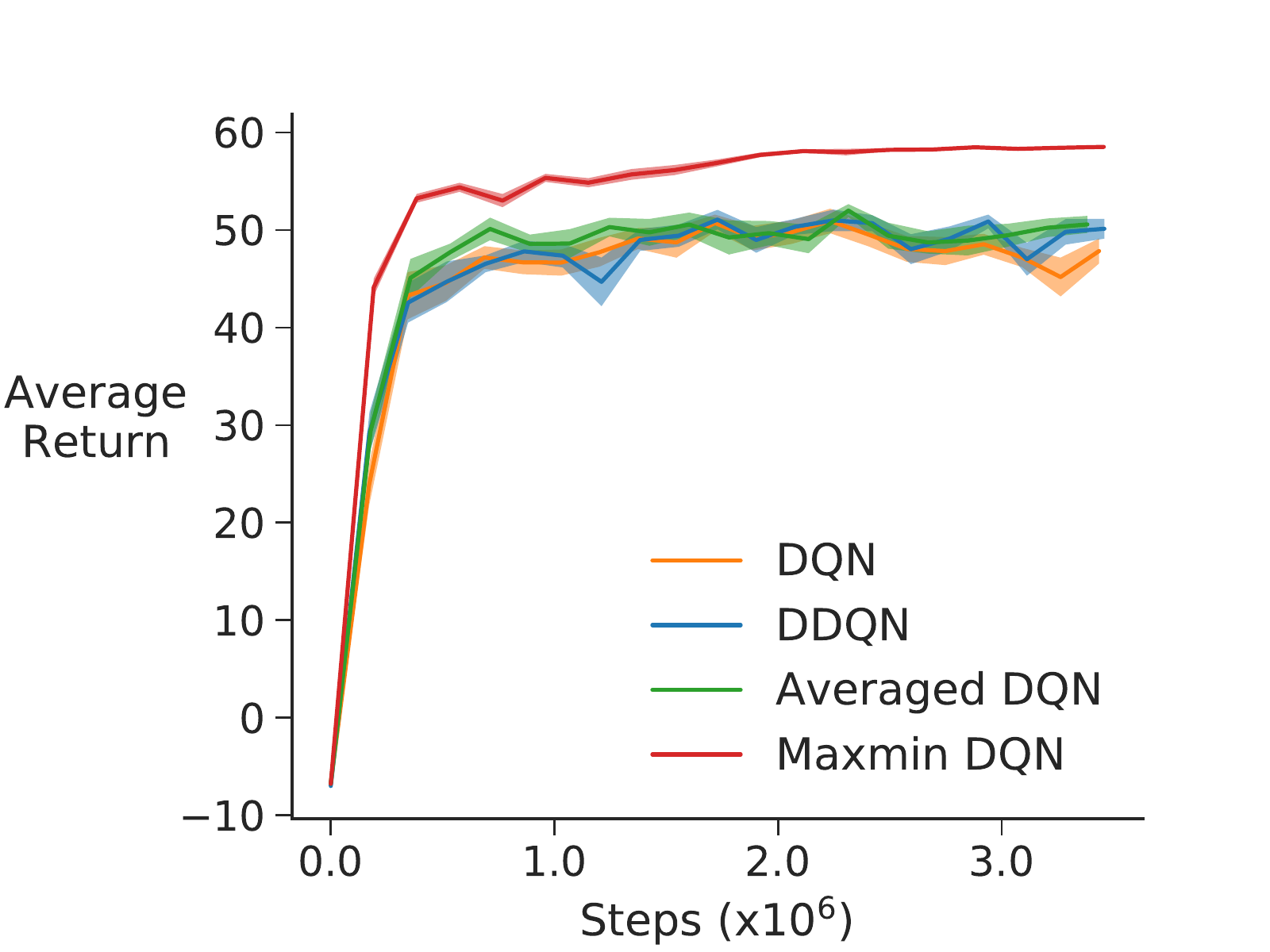}}
    \subfigure[Lunarlander]{
\includegraphics[width=\figwidththree]{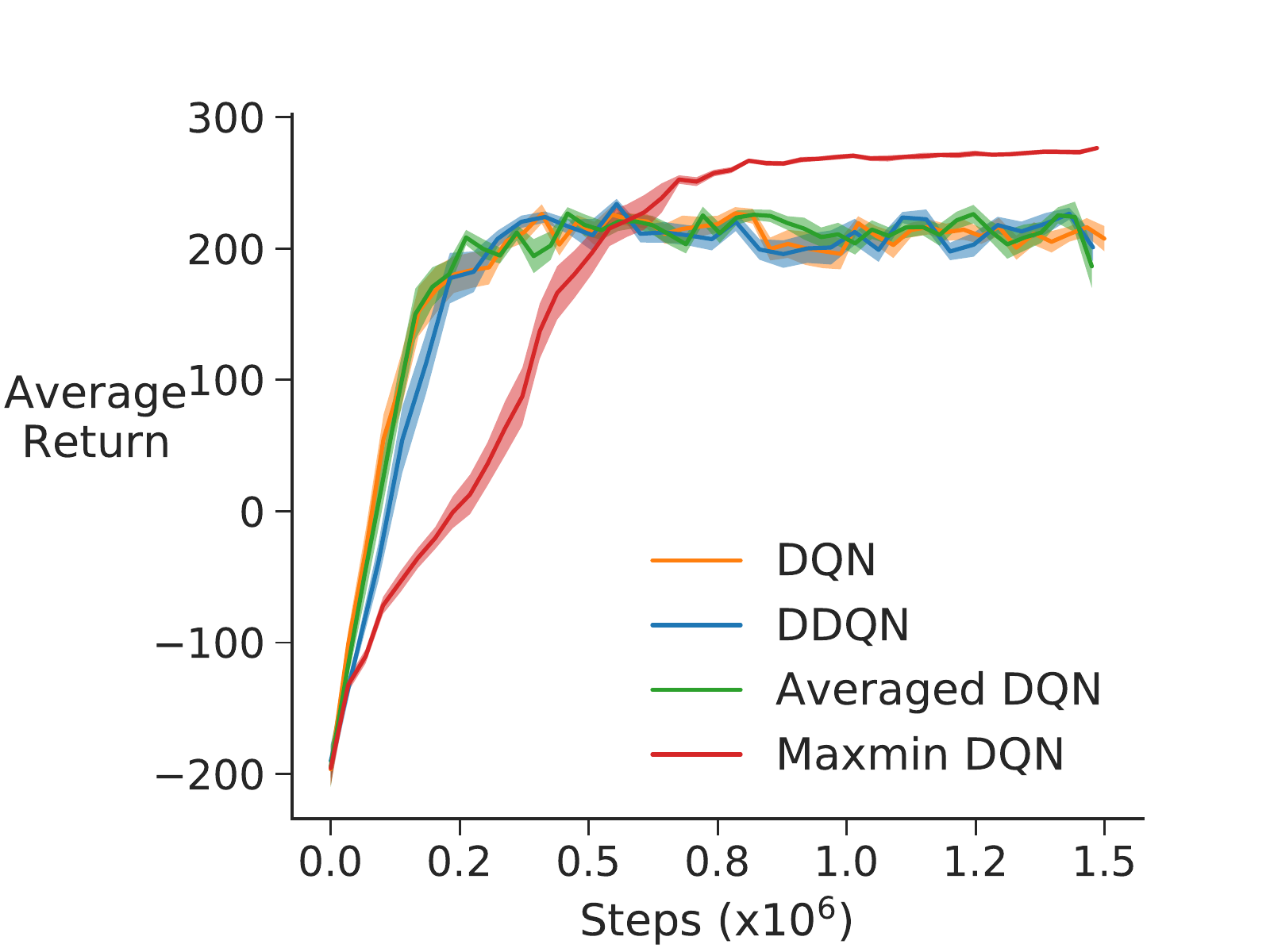}}
    \subfigure[Space Invaders]{
\includegraphics[width=\figwidththree]{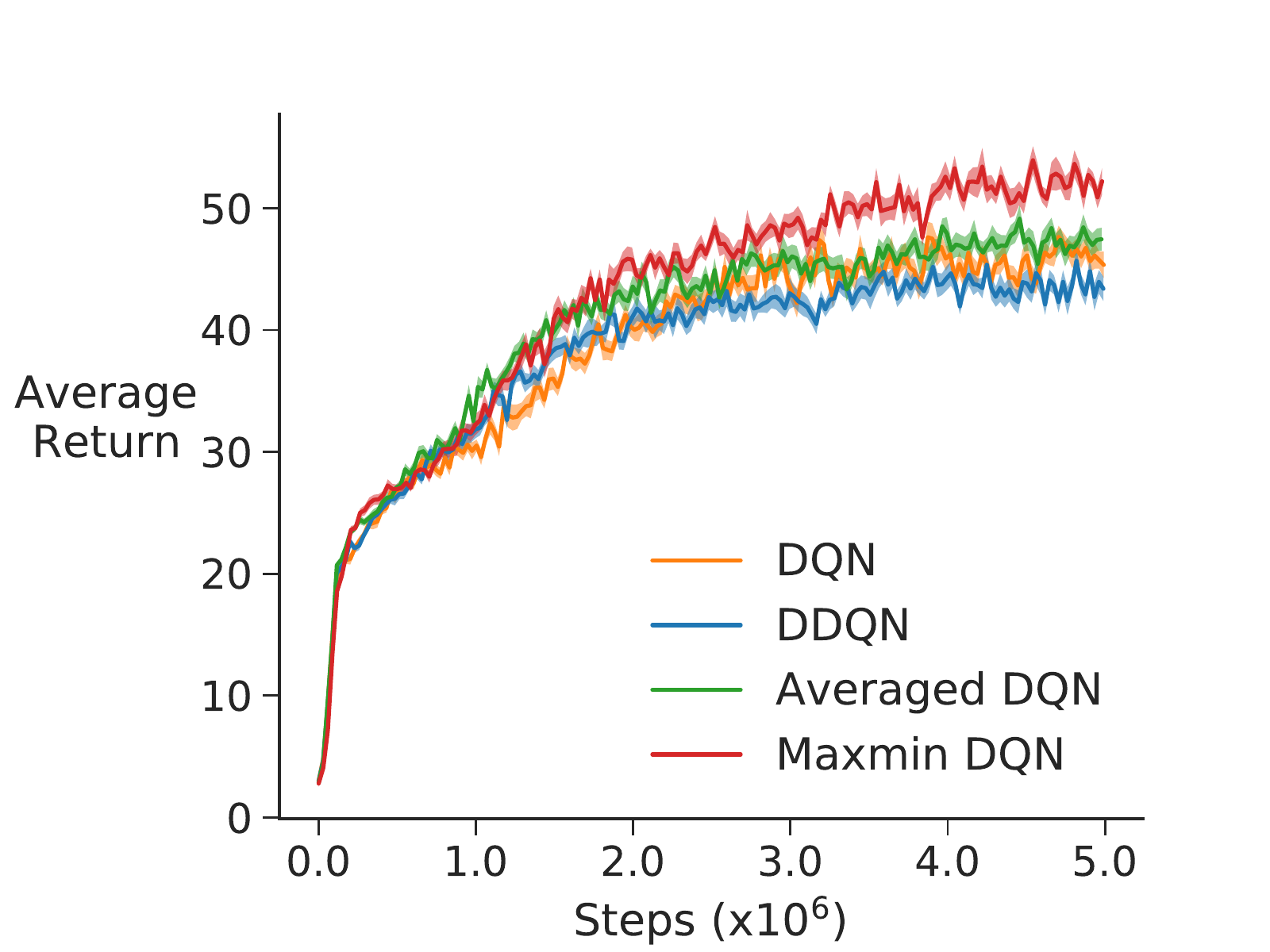}}
    \subfigure[Breakout]{
\includegraphics[width=\figwidththree]{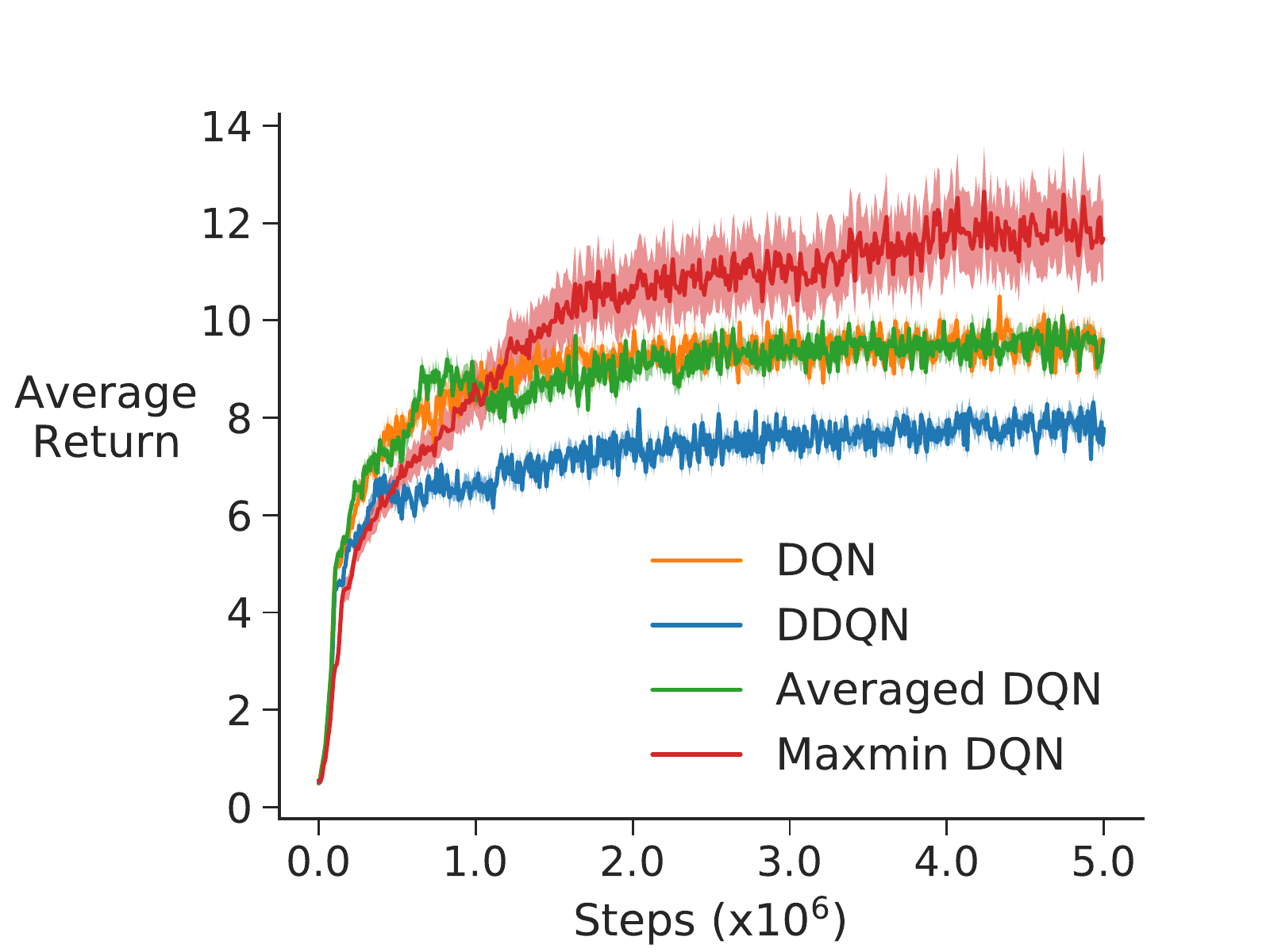}}
    \subfigure[Pixelcopter]{
\includegraphics[width=\figwidththree]{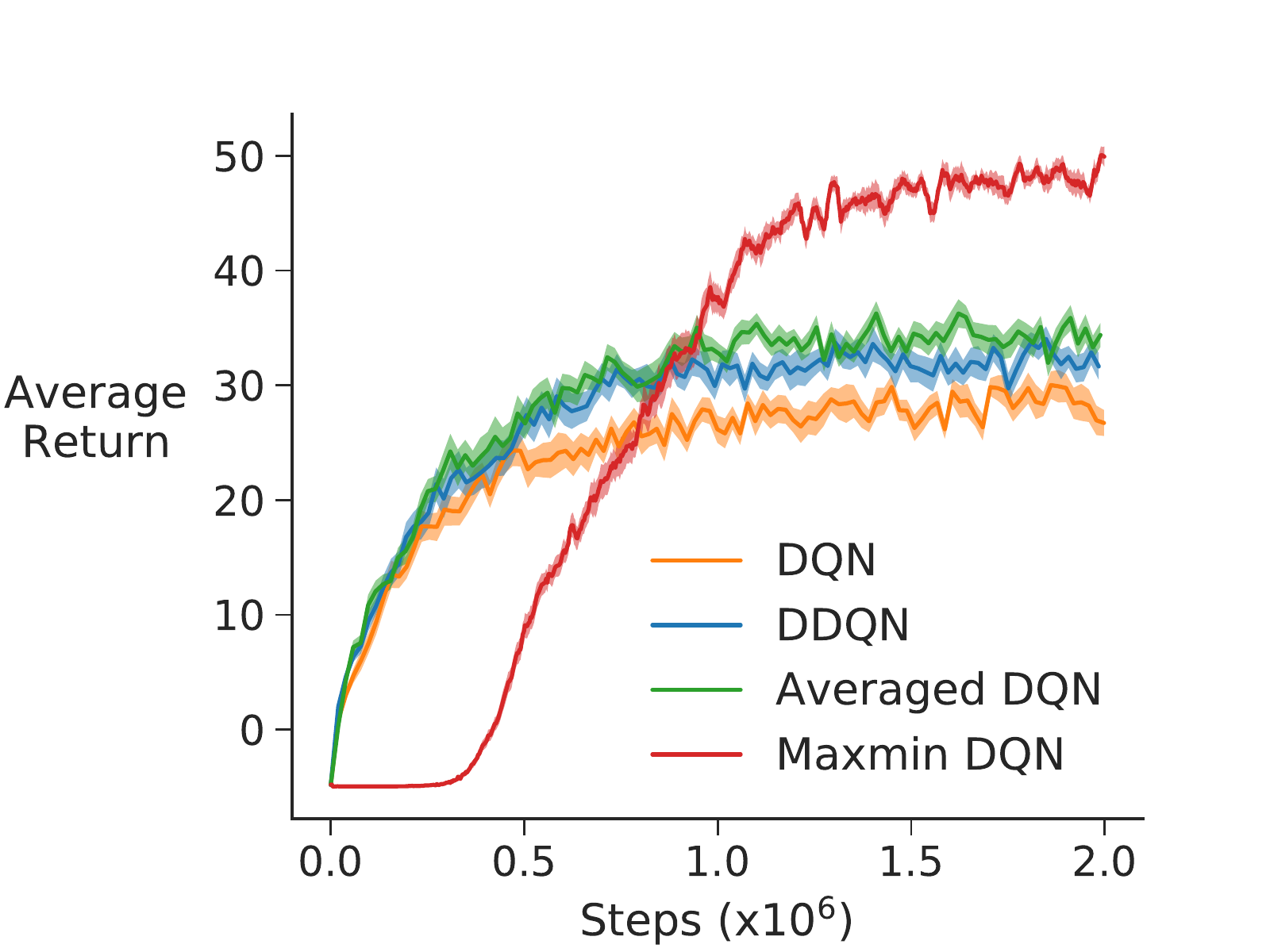}}
    \subfigure[Asterix]{
\includegraphics[width=\figwidththree]{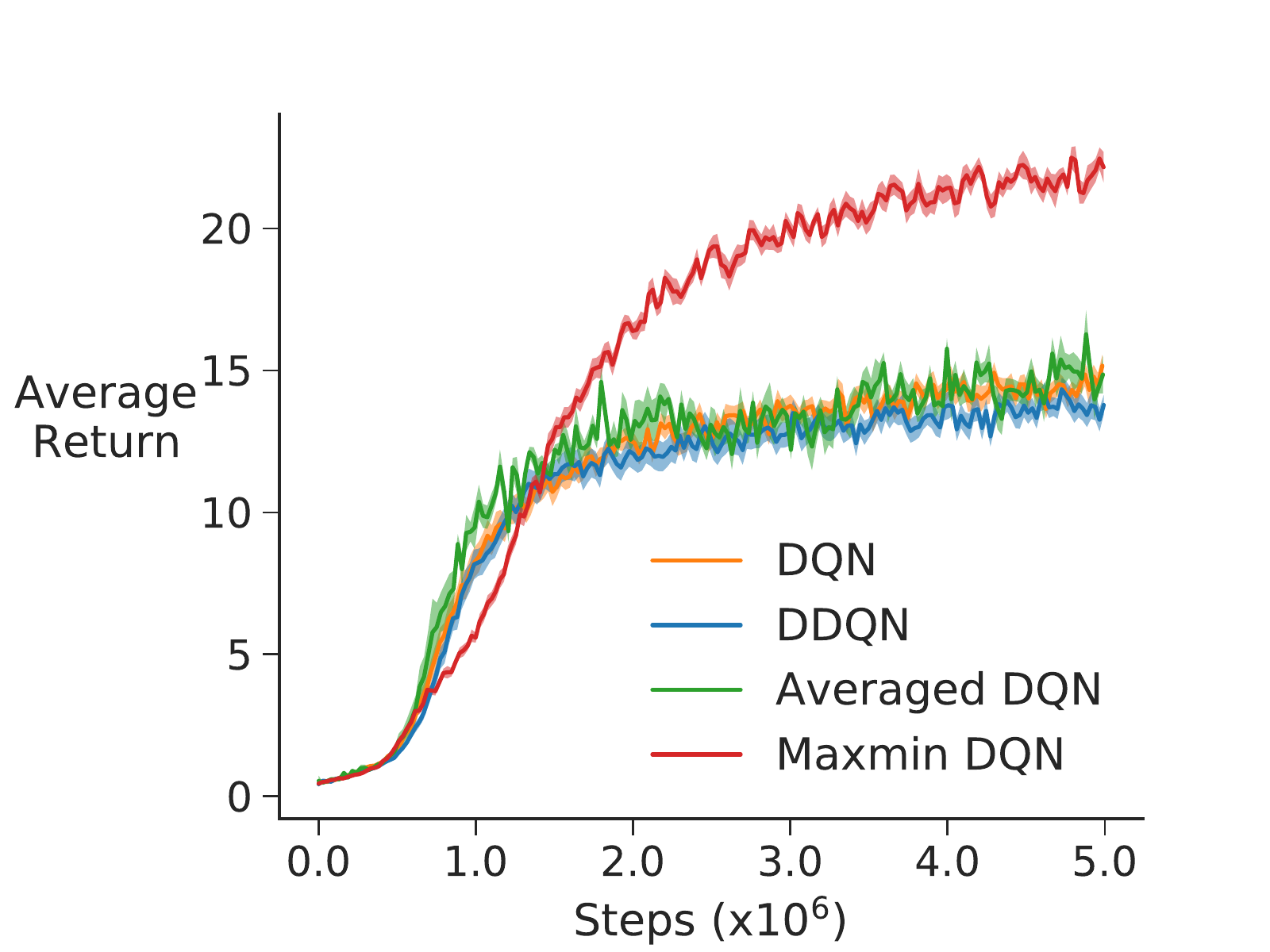}}
    \subfigure[Seaquest]{
\includegraphics[width=\figwidththree]{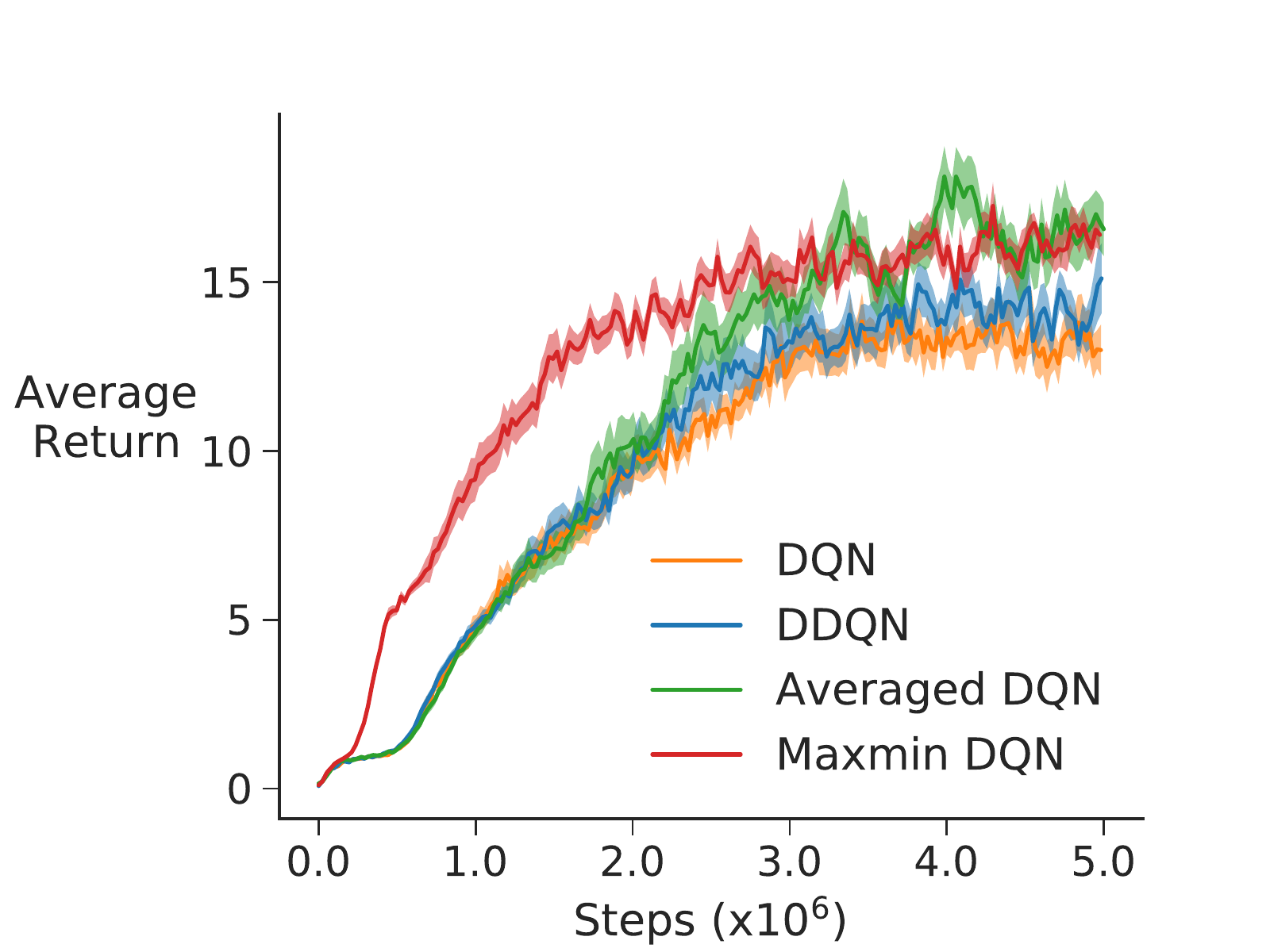}}
    \subfigure[Pixelcopter with varying $N$]{
\includegraphics[width=\figwidththree]{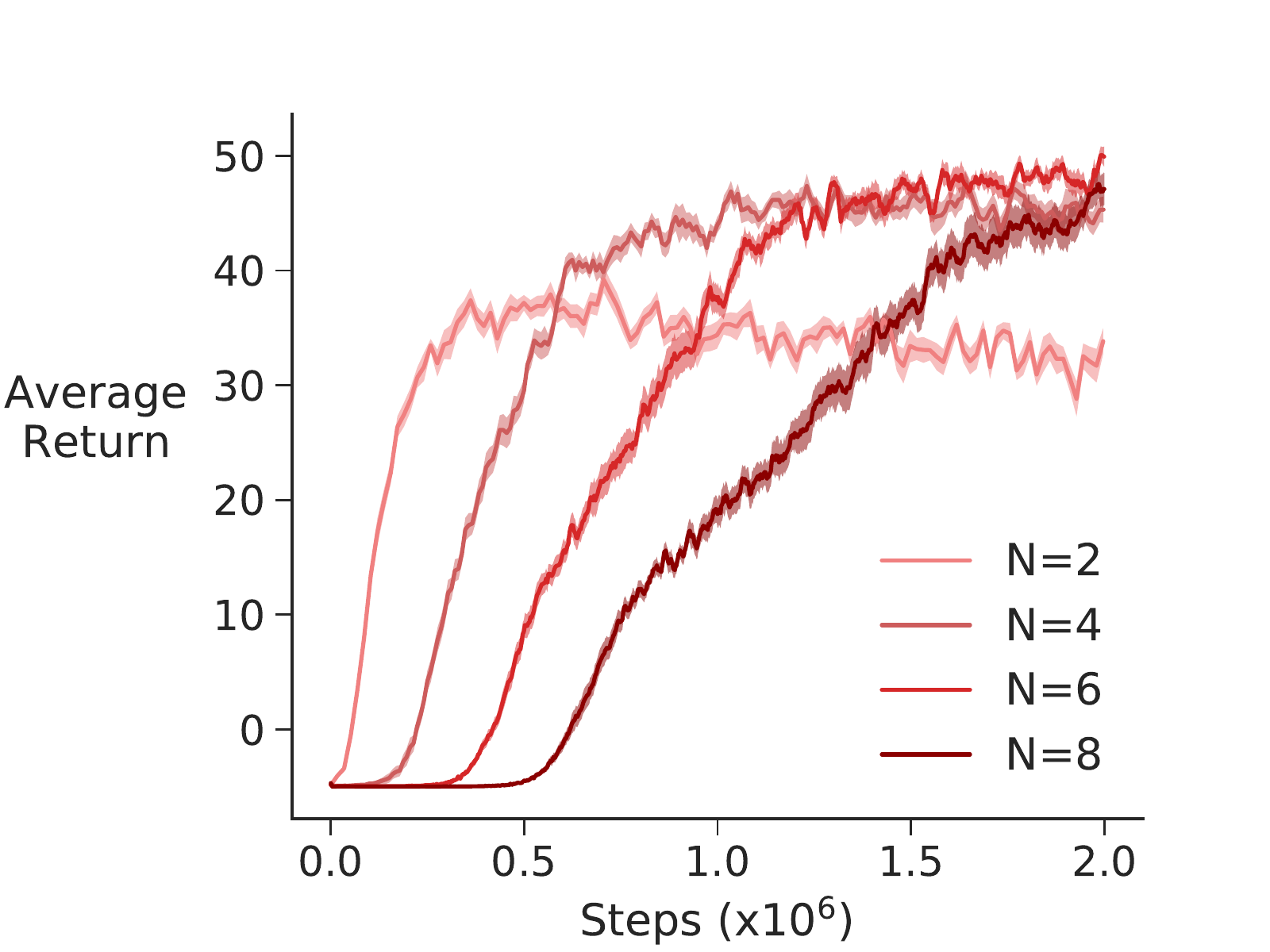}}
    \subfigure[Asterix with varying $N$]{
\includegraphics[width=\figwidththree]{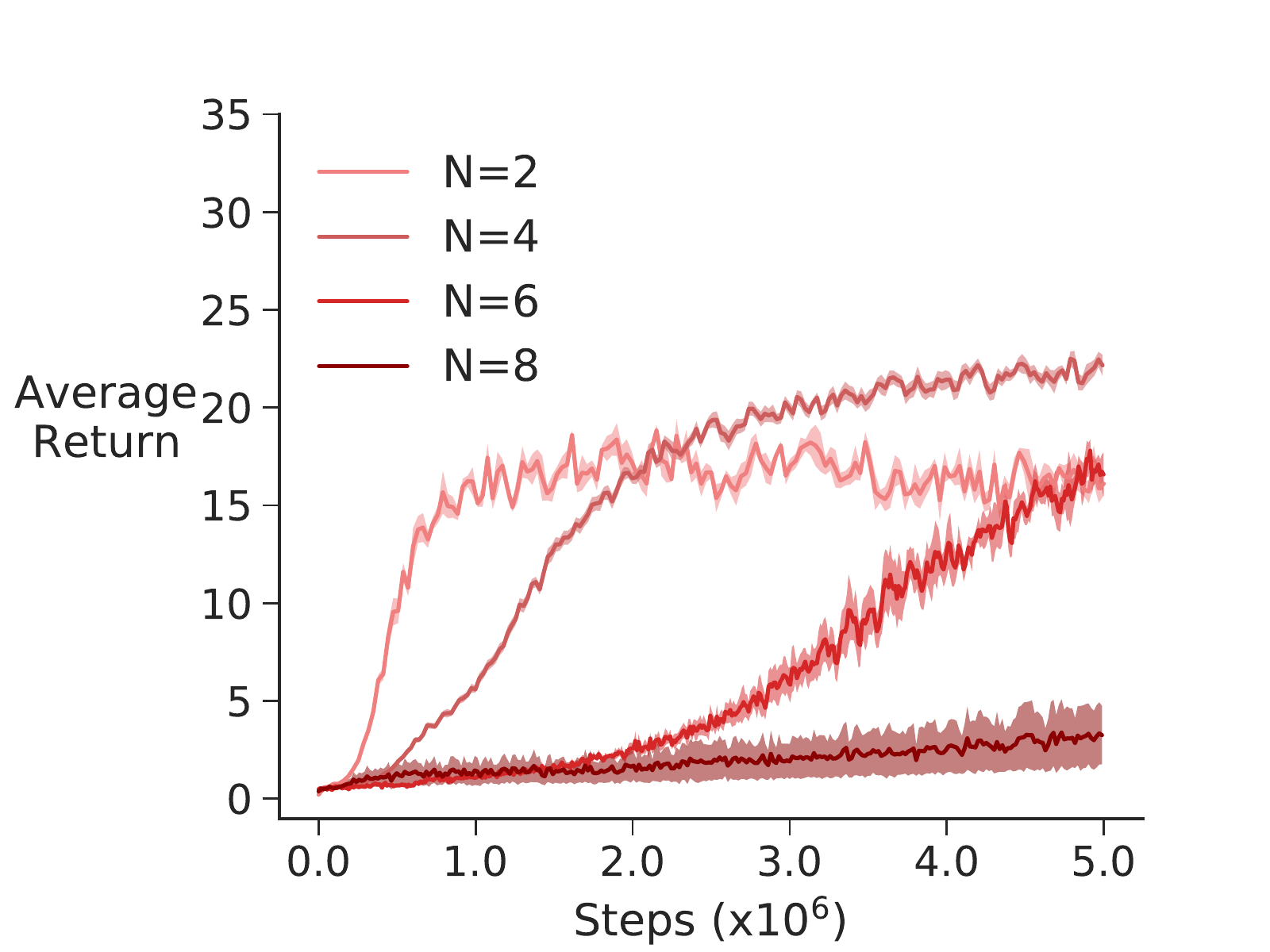}}
\end{center}
\caption{Learning curves on the seven benchmark environments. The depicted return is averaged over the last $100$ episodes, and the curves are smoothed using an exponential average, to match previous reported results \citep{young2019minatar}. The results were averaged over 20 runs, with the shaded area representing one standard error. Plots $(h)$ and $(i)$ show the performance of Maxmin DQN on Pixelcopter and Asterix, with different $N$, highlighting that larger $N$ seems to result in slower early learning but better final performance in both environments.}
\label{fig:DeepRL}
\end{figure}

\section{Convergence Analysis of Maxmin Q-learning}

In this section, we show Maxmin Q-learning is convergent in the tabular setting. We do so by providing a more general result for what we call Generalized Q-learning: Q-learning where the bootstrap target uses a function $G$ of $N$ action values. The main condition on $G$ is that it maintains relative maximum values, as stated in Assumption \ref{assump:1}. We use this more general result to prove Maxmin Q-learning is convergent, and then discuss how it provides convergence results for Q-learning, Ensemble Q-learning, Averaged Q-learning and Historical Best Q-learning as special cases.



Many variants of Q-learning have been proposed, including Double Q-learning~\citep{hasselt_double_2010}, Weighted Double Q-learning~\citep{zhang_weighted_2017}, Ensemble Q-learning~\citep{anschel_averaged-dqn:_2016}, Averaged Q-learning~\citep{anschel_averaged-dqn:_2016}, and Historical Best Q-learning~\citep{yu2018historical}. These algorithms differ in their estimate of the one-step bootstrap target.
To encompass all variants, the target action-value of Generalized Q-learning $Y^{GQ}$ is defined based on action-value estimates from both dimensions:
\begin{equation}
Y^{GQ} = r + \gamma Q_{s^{\prime}}^{GQ}(t-1)
\end{equation}
where $t$ is the current time step and the action-value function $Q_{s}^{GQ}(t)$ is a function of $Q_{s}^{1}(t-K), \ldots, Q_{s}^{1}(t-1), \ldots, Q_{s}^{N}(t-K), \ldots, Q_{s}^{N}(t-1)$:
\begin{equation}
Q_{s}^{GQ}(t) = G
\left(
\begin{array}{cccc}
Q_{s}^{1}(t-K) & \ldots & Q_{s}^{1}(t-1)\\
Q_{s}^{2}(t-K) & \ldots & Q_{s}^{2}(t-1)\\
\vdots & \ddots & \vdots \\
Q_{s}^{N}(t-K) & \ldots & Q_{s}^{N}(t-1)
\end{array}
\right)
\label{eq_gq}
\end{equation}
For simplicity, the vector $(Q^{GQ}_{s a}(t))_{a \in \Actions}$ is denoted as $Q^{GQ}_{s}(t)$, same for $Q^{i}_{s}(t)$. The corresponding update rule is
\begin{equation}
Q_{s a}^{i}(t) \leftarrow Q_{s a}^{i}(t-1)+\alpha_{s a}^{i}(t-1)(Y^{GQ} - Q_{s a}^{i}(t-1))
\end{equation}
For different $G$ functions, Generalized Q-learning reduces to different variants of Q-learning, including Q-learning itself. For example, Generalized Q-learning can be reduced to Q-learning simply by setting $K=1$, $N=1$ with $G(Q_{s})=\max_{a \in A} Q_{s a}$. Double Q-learning can be specified with $K=1$, $N=2$, and $G(Q_{s}^{1},Q_{s}^{2})=Q^{2}_{s, \arg \max_{a^{\prime} \in A} Q_{s a^{\prime}}^{1}}$.

We first introduce Assumption~\ref{assump:1} for function $G$ in Generalized Q-learning, and then state the theorem. The proof can be found in Appendix \ref{sec:convergence_g_learning}. 
\begin{assumption}[Conditions on $G$]\label{assump:1}
Let $G: \mathbb{R}^{nNK} \mapsto \mathbb{R}$ and $G(Q)=q$ where $Q=(Q_{a}^{i j}) \in \mathbb{R}^{nNK}$, $a \in \Actions$ and $|\Actions|=n$, $i \in \{1, \ldots, N\}, j \in \{0, \ldots, K-1\}$ and $q \in \mathbb{R}$. 
\begin{enumerate}[label=(\roman*)]
    \item If $Q_{a}^{i j} = Q_{a}^{k l}$, $\forall i,k$, $\forall j,l$, and $\forall a$, then $q=\max_{a} {Q_{a}^{i j}}$.
    \item $\forall Q, Q^{\prime} \in \mathbb{R}^{nNK}$, $\mid G(Q) - G(Q^{\prime}) \mid \leq \max_{a, i, j} \mid Q_{a}^{i j} - {Q^{\prime}}_{a}^{i j} \mid$.
\end{enumerate}
\end{assumption}

We can verify that Assumption~\ref{assump:1} holds for Maxmin Q-learning. Set $K=1$ and set $N$ to be a positive integer. Let $Q_{s}=(Q^{1}_{s}, \ldots, Q^{N}_{s})$ and define $G^{MQ}(Q_{s}) = \max_{a \in A} \min _{i \in \{1,\ldots, N\}} Q^{i}_{s a}$. It is easy to check that part (i) of Assumption~\ref{assump:1} is satisfied. Part (ii) is also satisfied because
\begin{equation*}
    \mid G(Q_{s}) - G(Q_{s}^{\prime}) \mid \leq \mid \max_{a} \min_{i} Q^{i}_{s a} - \max_{a^{\prime}} \min_{i^{\prime}} Q^{{\prime} i^{\prime}}_{s a^{\prime}} \mid \leq \max_{a, i} \mid Q^{i}_{s a} - Q^{{\prime} i}_{s a} \mid.
\end{equation*}

\begin{assumption}[Conditions on the step-sizes]\label{assump:4}
There exists some (deterministic) constant $C$ such that for every $(s, a) \in \States \times \Actions, i \in \{1, \ldots, N\}$, $0 \leq \alpha_{s a}^{i}(t) \leq 1$, and with probability $1$,
\begin{equation*}
    \sum_{t=0}^{\infty} (\alpha_{s a}^{i}(t))^2 \leq C, \quad
    \sum_{t=0}^{\infty} \alpha_{s a}^{i}(t)=\infty
\end{equation*}
\end{assumption}
%
%
\begin{theorem}\label{thm:2}
Assume a finite MDP $(\States, \Actions, \Pfcn, R)$ and that Assumption \ref{assump:1} and \ref{assump:4} hold. 
Then the action-value functions in Generalized Q-learning, using the tabular update in Equation \eqref{eq_gq}, will converge to the optimal action-value function with probability $1$, in either of the following cases:
    (i) $\gamma < 1$, or (ii)
    $\gamma = 1$, $\forall a \in \Actions , Q^{i}_{s_{1} a}(t=0)=0 $ where $s_{1}$ is an absorbing state and all policies are proper.
\end{theorem}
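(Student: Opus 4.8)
The plan is to recast the update as a stochastic approximation recursion on the error process and then invoke the standard contraction-based Q-learning convergence lemma (in the style of Jaakkola--Jordan--Singh and Tsitsiklis). First I would define the error $\Delta^i_{sa}(t) \defeq Q^i_{sa}(t) - \Qtruesa{sa}$ and rewrite the tabular update as
$$\Delta^i_{sa}(t) = (1 - \alpha^i_{sa}(t-1))\,\Delta^i_{sa}(t-1) + \alpha^i_{sa}(t-1)\, F^i_{sa}(t-1),$$
where $F^i_{sa}(t-1) \defeq r + \gamma Q^{GQ}_{s'}(t-1) - \Qtruesa{sa}$. Using the Bellman optimality equation $\Qtruesa{sa} = \EE[\,r + \gamma \max_{a'} \Qtruesa{s'a'} \mid s,a\,]$, the conditional mean of the noise term becomes $\EE[F^i_{sa}(t-1)\mid \mathcal{F}_{t-1}] = \gamma\, \EE[\, Q^{GQ}_{s'}(t-1) - \max_{a'}\Qtruesa{s'a'} \mid s,a\,]$.

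The crux is to show this conditional mean is a $\gamma$-pseudo-contraction, and this is exactly where Assumption~\ref{assump:1} enters. I would compare $Q^{GQ}_{s'}(t-1) = G(\cdot)$ against $G$ evaluated on the constant matrix whose every entry for action $a'$ equals $\Qtruesa{s'a'}$. Part (i) of Assumption~\ref{assump:1} guarantees that $G$ applied to this constant matrix returns $\max_{a'} \Qtruesa{s'a'}$, and the non-expansiveness in part (ii) then yields
$$\bigl| Q^{GQ}_{s'}(t-1) - \max_{a'}\Qtruesa{s'a'} \bigr| \le \max_{a',i,j} \bigl| Q^i_{s'a'}(t-1-j) - \Qtruesa{s'a'} \bigr| = \max_{a',i,j} |\Delta^i_{s'a'}(t-1-j)|.$$
Combined with the previous display this gives $|\EE[F^i_{sa}(t-1)\mid \mathcal{F}_{t-1}]| \le \gamma \max_{a',i,j}|\Delta^i_{s'a'}(t-1-j)|$, a contraction in the sup-norm taken over all estimators, actions, and the $K$-step window.

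I would then verify the remaining hypotheses of the stochastic approximation lemma: the step-size conditions of Assumption~\ref{assump:4} supply $\sum_t \alpha^i_{sa}(t) = \infty$ and $\sum_t (\alpha^i_{sa}(t))^2 < \infty$, while finiteness of the MDP and boundedness of the rewards give the bounded-variance condition $\mathrm{Var}[F^i_{sa}(t-1)\mid \mathcal{F}_{t-1}] \le C(1 + \|\Delta(t-1)\|_\infty^2)$. For case (i), $\gamma < 1$, this completes the argument directly. For case (ii), $\gamma = 1$, the sup-norm contraction degenerates, so I would instead invoke the stochastic-shortest-path machinery: properness of all policies guarantees a weighted max-norm in which the expected Bellman update is a contraction, and the initialization $Q^i_{s_1 a}(0) = 0$ at the absorbing state pins that coordinate so the weighted-norm argument applies.

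I expect the main obstacle to be the $K$-step history dependence in $G$: because $Q^{GQ}_{s'}(t-1)$ depends on the past $K$ iterates rather than only the current one, the recursion is not the clean single-step contraction assumed by the textbook lemma. Resolving this requires either appealing to a delay-tolerant version of the asynchronous stochastic approximation theorem, or bounding the window-max by the running maximum $\max_{\tau \le t-1}\|\Delta(\tau)\|_\infty$ and showing this envelope still contracts since $\gamma < 1$. The $\gamma = 1$ proper-policy case is the secondary difficulty, as it forces the weighted-norm contraction argument rather than the elementary sup-norm one.
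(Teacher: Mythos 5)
Your proposal is correct and takes essentially the same route as the paper: it recasts the update as asynchronous stochastic approximation, uses Assumption~\ref{assump:1}(i) to pin the reference point to $Q^*$ and Assumption~\ref{assump:1}(ii) for the $\gamma$-contraction (the paper's Lemma~\ref{lem:3}), verifies the noise-variance and step-size conditions (the paper's Lemma~\ref{lem:2} and Assumption~\ref{assump:4}), and invokes a Tsitsiklis-style asynchronous convergence theorem (the paper's Theorem~\ref{thm:5}), whose outdated-information machinery is precisely the delay-tolerant device you anticipate needing for the $K$-step history window. Your treatment of the $\gamma = 1$ case, via a weighted max-norm contraction under proper policies with the absorbing-state coordinate pinned to zero, likewise matches the paper's argument.
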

As shown above, because the function $G$ for Maxmin Q-learning satisfies Assumption \ref{assump:1}, then by Theorem~\ref{thm:2} it converges.
Next, we apply Theorem~\ref{thm:2} to Q-learning and its variants, proving the convergence of these algorithms in the tabular case.
For Q-learning, set $K=1$ and $N=1$. Let $G^{Q}(Q_{s})=\max_{a \in A} Q_{s a}$. It is straightforward to check that Assumption~\ref{assump:1} holds for function $G^{Q}$. 
For Ensemble Q-learning, set $K=1$ and set $N$ to be a positive integer. Let $G^{EQ}((Q^{1}_{s}, \ldots, Q^{N}_{s})) = \max_{a \in A} \frac{1}{N} \sum_{i=1}^{N} Q^{i}_{s a}$. Easy to check that Assumption~\ref{assump:1} is satisfied. For Averaged Q-learning, the proof is similar to Ensemble Q-learning except that $N=1$ and $K$ is a positive integer.
For Historical Best Q-learning, set $N=1$ and $K$ to be a positive integer. We assume that all auxiliary action-value functions are selected from action-value functions at most $K$ updates ago. Define $G^{HBQ}$ to be the largest action-value among $Q_{s a}(t-1), \ldots, Q_{s a}(t-K)$ for state $s$. Assumption~\ref{assump:1} is satisfied and the convergence is guaranteed.

\section{Conclusion}
Overestimation bias is a byproduct of Q-learning, stemming from the selection of a maximal value to estimate the expected maximal value. In practice, overestimation bias leads to poor performance in a variety of settings.  Though multiple Q-learning variants have been proposed, Maxmin Q-learning is the first solution that allows for a flexible control of bias, allowing for overestimation or underestimation determined by the choice of $N$ and the environment. We showed theoretically that we can decrease the estimation bias and the estimation variance by choosing an appropriate number $N$ of action-value functions. We empirically showed that advantages of Maxmin Q-learning, both on toy problems where we investigated the effect of reward noise and on several benchmark environments. Finally, we introduced a new Generalized Q-learning framework which we used to prove the convergence of Maxmin Q-learning as well as several other Q-learning variants that use $N$ action-value estimates. 

\subsubsection*{Acknowledgments}
We would like to thank Huizhen Yu and Yi Wan for their valuable feedback and helpful discussion.

\bibliography{iclr2020_conference}
\bibliographystyle{iclr2020_conference}

\appendix

\section{The Proof of Theorem~\ref{thm:1}}\label{app_thm1}

We first present Lemma~\ref{lem:1} here as a tool to prove Theorem~\ref{thm:1}. Note that the first three properties in this lemma are well-known results of order statistics~\citep{david2004order}.

\begin{lemma}
\label{lem:1}
Let $X_1, \ldots, X_N$ be $N$ i.i.d. random variables from an absolutely continuous distribution with probability density function(PDF) $f(x)$ and cumulative distribution function (CDF) $F(x)$. Denote $\mu \defeq E[X_i]$ and $\sigma^{2} \defeq Var[X_i] < +\infty$. Set $X_{1:N} \defeq min_{i \in \{1,\ldots,N\}} X_{i}$ and $X_{N:N} \defeq max_{i \in \{1,\ldots,N\}} X_{i}$. Denote the PDF and CDF of $X_{1:N}$ as $f_{1:N}(x)$ and $F_{1:N}(x)$, respectively. Similarly, denote the PDF and CDF of $X_{N:N}$ as $f_{N:N}(x)$ and $F_{N:N}(x)$, respectively. We then have
\begin{enumerate}[label=(\roman*)]
    \item $\mu - \frac{(N-1)\sigma}{\sqrt{2n-1}} \leq E[X_{1:N}] \leq \mu$ and $E[X_{1:N+1}] \leq E[X_{1:N}]$.
    \item $F_{1:N}(x) = 1 - (1-F(x))^N$. $f_{1:N}(x) = N f(x) (1-F(x))^{N-1}$.
    \item $F_{N:N}(x) = (F(x))^N$. $f_{N:N}(x) = N f(x) (F(x))^{N-1}$.
    \item If $X_{1}, \ldots, X_{N} \sim U(-\enoise,\enoise)$, we have $Var(X_{1:N})= \frac{4N{\enoise}^2}{(N+1)^{2}(N+2)}$ and $Var(X_{1:N+1}) < Var(X_{1:N}) \leq Var(X_{1:1}) = {\sigma}^2$ for any positive integer $N$.
\end{enumerate}
\end{lemma}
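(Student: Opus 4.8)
The plan is to handle the four parts roughly in the order (ii)--(iii), then (i), then (iv), since the density formulas for the extreme order statistics underpin everything else. For parts (ii) and (iii) I would argue directly from independence: for the maximum, $\{X_{N:N} \le x\}$ is exactly the event $\{X_i \le x \text{ for all } i\}$, so $F_{N:N}(x) = \prod_i F(x) = F(x)^N$, and differentiating gives $f_{N:N}(x) = N f(x) F(x)^{N-1}$. Symmetrically, for the minimum I would use the complementary event $\{X_{1:N} > x\} = \{X_i > x \text{ for all } i\}$ to get $1 - F_{1:N}(x) = (1-F(x))^N$, and differentiate to obtain $f_{1:N}(x) = N f(x)(1-F(x))^{N-1}$. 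Absolute continuity justifies differentiating almost everywhere.

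For part (i) the two easy facts come first. Pointwise $X_{1:N} \le X_1$, so taking expectations gives $E[X_{1:N}] \le E[X_1] = \mu$; and since a minimum over $N+1$ copies is pointwise no larger than the minimum over the first $N$, monotonicity $E[X_{1:N+1}] \le E[X_{1:N}]$ follows immediately by coupling on the same sample. The substantive claim is the lower bound, a Hartley--David--Gumbel-type inequality, and I expect this to be the main obstacle. I would prove the equivalent upper bound $E[X_{N:N}] \le \mu + (N-1)\sigma/\sqrt{2N-1}$ for the maximum and then pass to the minimum by replacing $X_i$ with $-X_i$ (which has mean $-\mu$, the same variance, and maximum $-X_{1:N}$). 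For the maximum bound I would pass to the quantile representation $u = F(x)$: using part (iii), $E[X_{N:N}] = \int_0^1 F^{-1}(u)\, N u^{N-1}\, du$ while $\mu = \int_0^1 F^{-1}(u)\, du$, so that, because $\int_0^1 (N u^{N-1} - 1)\, du = 0$, I may center and write $E[X_{N:N}] - \mu = \int_0^1 (F^{-1}(u) - \mu)(N u^{N-1} - 1)\, du$. Cauchy--Schwarz then bounds this by $\sigma$ times $\big(\int_0^1 (N u^{N-1}-1)^2\, du\big)^{1/2}$, since $\int_0^1 (F^{-1}(u)-\mu)^2\, du = \sigma^2$; a direct integration evaluates the second factor as $(N-1)/\sqrt{2N-1}$, which closes the bound.

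Finally, for part (iv) I would specialize to $X_i \sim U(-\tau,\tau)$. The cleanest route is the affine change of variables $X_i = 2\tau U_i - \tau$ with $U_i \sim U(0,1)$, so that $X_{1:N} = 2\tau U_{(1)} - \tau$ and $U_{(1)} \sim \mathrm{Beta}(1,N)$ with variance $N/((N+1)^2(N+2))$; scaling by $(2\tau)^2$ yields $\mathrm{Var}(X_{1:N}) = 4N\tau^2/((N+1)^2(N+2))$ (alternatively one can integrate the explicit density from part (ii)). Checking $N=1$ recovers $\mathrm{Var}(X_{1:1}) = \tau^2/3 = \sigma^2$. For strict monotonicity it suffices to show that $h(N) = N/((N+1)^2(N+2))$ is strictly decreasing, which reduces to the polynomial inequality $(N+1)^3 < N(N+2)(N+3)$; expanding both sides leaves the difference $2N^2 + 3N - 1 > 0$ for all $N \ge 1$, giving $\mathrm{Var}(X_{1:N+1}) < \mathrm{Var}(X_{1:N})$.
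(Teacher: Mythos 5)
Your proposal is correct, and for parts (ii) and (iii), as well as the easy claims of (i), it coincides with the paper's argument (pointwise comparison of minima for monotonicity, product of CDFs via independence, then differentiation). The two real differences are in (i) and (iv). For the lower bound $\mu - (N-1)\sigma/\sqrt{2N-1} \le E[X_{1:N}]$, the paper simply cites Chapter 4 of David and Nagaraja's \emph{Order Statistics}, whereas you actually prove it: reducing to the maximum via $X_i \mapsto -X_i$, writing $E[X_{N:N}] - \mu = \int_0^1 (F^{-1}(u)-\mu)(Nu^{N-1}-1)\,du$ using that $Nu^{N-1}-1$ integrates to zero, and applying Cauchy--Schwarz with $\int_0^1 (Nu^{N-1}-1)^2 du = (N-1)^2/(2N-1)$. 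This is the classical Hartley--David--Gumbel argument, and it makes the lemma self-contained where the paper defers to a reference. For (iv), the paper integrates the explicit density of $X_{1:N}$ to get $E[X_{1:N}^2]$ and $E[X_{1:N}]$ directly, while you map to $U_{(1)} \sim \mathrm{Beta}(1,N)$ and rescale; both give $4N\tau^2/((N+1)^2(N+2))$, but your route also replaces the paper's unproved ``it is easy to check'' monotonicity claim with the explicit polynomial verification $(N+1)^3 < N(N+2)(N+3)$, i.e.\ $2N^2+3N-1>0$. In short, same skeleton, but your version fills in the two places where the paper outsources or waves through the work.
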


\begin{proof}
\begin{enumerate}[label=(\roman*)]
    \item By the definition of $X_{1:N}$, we have $X_{1:N+1} \leq X_{1:N}$. Thus $E[X_{1:N+1}] \leq E[X_{1:N}]$. Since $E[X_{1:1}] = E[X_{1}] = \mu$, $E[X_{1:N}] \leq E[X_{1:1}] = \mu$. The proof of $\mu - \frac{(N-1)\sigma}{\sqrt{2N-1}} \leq E[X_{1:N}]$ can be found in~\citep[Chapter 4 Section 4.2]{david2004order}.
    \item We first consider the cdf of $X_{1:N}$. $F_{1:N}(x):=P(X_{1:N} \leq x) = 1 - P(X_{1:N}>x) = 1 - P(X_1>x, \ldots, X_M>x) = 1 - P(X_1>x) \cdots P(X_N>x) = 1-(1-F(x))^{N}$. Then the pdf of $X_{1:N}$ is $f_{1:N}(x):=\frac{d F_{1:N}}{d x}=N f(x)(1-F(x))^{N-1}$.
    \item Similar to (ii), we first consider cdf of $X_{N:N}$. $F_{N:N}(x):=P(X_{N:N} \leq x) = P(X_1 \leq x, \ldots, X_N \leq x) = P(X_1 \leq x) \cdots P(X_M \leq x) = (F(x))^{N}$. Then the pdf of $X_{N:N}$ is $f_{N:N}(x):=\frac{d F_{N:N}}{d x}=N f(x)(F(x))^{N-1}$.
    \item Since $X_{1}, \ldots, X_{N} \sim Uniform(-\enoise,\enoise)$, we have $F(x)=\frac{1}{2}+\frac{x}{2\enoise}$ and $f(x)=\frac{1}{2\enoise}$. $Var(X_{1:N})=E[{X_{1:N}}^2]-{E[X_{1:N}]}^{2} = 4{\enoise}^2 (\frac{2}{(N+1)(N+2)} - \frac{1}{(N+1)^2}) = \frac{4n{\enoise}^2}{(N+1)^{2}(N+2)}$. It is easy to check that $Var(X_{1:N+1}) < Var(X_{1:N}) \leq Var(X_{1:1}) = {\sigma}^2$ for any positive integer $N$.
\end{enumerate}    
\end{proof}

Next, we prove Theorem~\ref{thm:1}.

\begin{proof}
Let $f(x)$ and $F(x)$ be the cdf and pdf of $e_{s a}$, respectively. Similarly, Let $f_N(x)$ and $F_N(x)$ be the cdf and pdf of $\min_{i \in \{1,\ldots,N\}} e^{i}_{s a}$. Since $e_{s a}$ is sampled from $Uniform(-\enoise, \enoise)$, it is easy to get $f(x)=\frac{1}{2\enoise}$ and $F(x)=\frac{1}{2}+\frac{x}{2\enoise}$. By Lemma 1, we have $f_N(x) = N f(x) [1-F(x)]^{N-1} = \frac{N}{2 \enoise} (\frac{1}{2} - \frac{x}{2 \enoise})^{N-1}$ and $F_N(x) = 1-(1-F(x))^{N} = 1-(\frac{1}{2} - \frac{x}{2 \enoise})^{N}$. The expectation of $Z_{MN}$ is
\begin{equation*}
\begin{aligned}
    E[Z_{MN}]
    &= \gamma E[(\max_{a^{\prime}} Q_{s^{\prime} a^{\prime}}^{min} - \max_{a^{\prime}} \Qtruesa{s^{\prime} a^{\prime}})] \\
    &= \gamma E[\max_{a^{\prime}} \min_{i \in \{1,\ldots,N\}} e^{i}_{s^{\prime} a^{\prime}}]\\
    &= \gamma \int_{-\enoise}^{\enoise} M x f_N(x) {F_N(x)}^{M-1} dx\\
    &= \gamma \int_{-\enoise}^{\enoise} M N \frac{x}{2\enoise} (\frac{1}{2} - \frac{x}{2 \enoise})^{N-1} [1 - (\frac{1}{2} - \frac{x}{2 \enoise})^{N}]^{M-1} dx\\
    &= \gamma \int_{-\enoise}^{\enoise} x d{[1 - (\frac{1}{2} - \frac{x}{2 \enoise})^{N}]^{M}} \\
    &= \gamma \enoise - \gamma \int_{-\enoise}^{\enoise} [1 - (\frac{1}{2} - \frac{x}{2 \enoise})^{N}]^{M} dx\\
   &= \gamma \enoise [1 - 2 \int_{0}^{1} (1-y^N)^M dy] \quad \quad (y \defeq \frac{1}{2} - \frac{x}{2 \enoise})
\end{aligned}
\end{equation*}
Let $\tmn=\int_{0}^{1} (1-y^N)^M dy$, so that $E[Z_{MN}] = \gamma \enoise [1 - 2 \tmn]$. Substitute $y$ by $t$ where $t=y^N$, then
\begin{equation*}
\begin{aligned}
    \tmn
    &= \frac{1}{N} \int_{0}^{1} t^{\frac{1}{N} - 1} (1-t)^M dt \\
    &= \frac{1}{N} B(\frac{1}{N}, M+1)\\
    &= \frac{1}{N} \frac{\Gamma(M+1) \Gamma(\frac{1}{N})}{\Gamma(M+\frac{1}{N}+1)}\\
    &= \frac{\Gamma(M+1) \Gamma(1+\frac{1}{N})}{\Gamma(M+\frac{1}{N}+1)}\\
    &= \frac{M(M-1) \cdots 1}{(M+\frac{1}{N})(M-1+\frac{1}{N}) \cdots (1+\frac{1}{N})}\\
\end{aligned}
\end{equation*}
Each term in the denominator decreases as $N$ increases, because $1/N$ gets smaller. Therefore, 
 $t_{M,N=1} = \frac{1}{M+1}$ and $t_{M,N \rightarrow \infty} = 1$. Using this, we conclude that $E[Z_{MN}]$ decreases as $N$ increases and $E[Z_{M,N=1}] = \gamma \enoise \frac{M-1}{M+1}$ and $E[Z_{M,N \rightarrow \infty}] = - \gamma \enoise$.

By Lemma~\ref{lem:1}, the variance of $Q_{s a}^{min}$ is
\begin{equation*}
    Var[Q_{s a}^{min}] = \frac{4N{\enoise}^2}{(N+1)^{2}(N+2)}
\end{equation*}

$Var[Q_{s a}^{min}]$ decreases as $N$ increases. In particular, $Var[Q_{sa}^{min}]=\frac{{\enoise}^2}{3}$ for $N=1$ and $Var[Q_{sa}^{min}]=0$ for $N \rightarrow \infty$.
\end{proof}

The bias-variance trade-off of Maxmin Q-learning is illustrated by the empirical results in Figure~\ref{fig:bias_variance}, which support Theorem~\ref{thm:1}. For each $M$, $N$ can be selected such that the absolute value of the expected estimation bias is close to $0$ according to Theorem~\ref{thm:1}. As $M$ increases, we can adjust $N$ to reduce both the estimation variance and the estimation bias.

\begin{figure}[htbp]
\begin{center}
    \subfigure[Bias control]{
\includegraphics[width=\figwidthtwo]{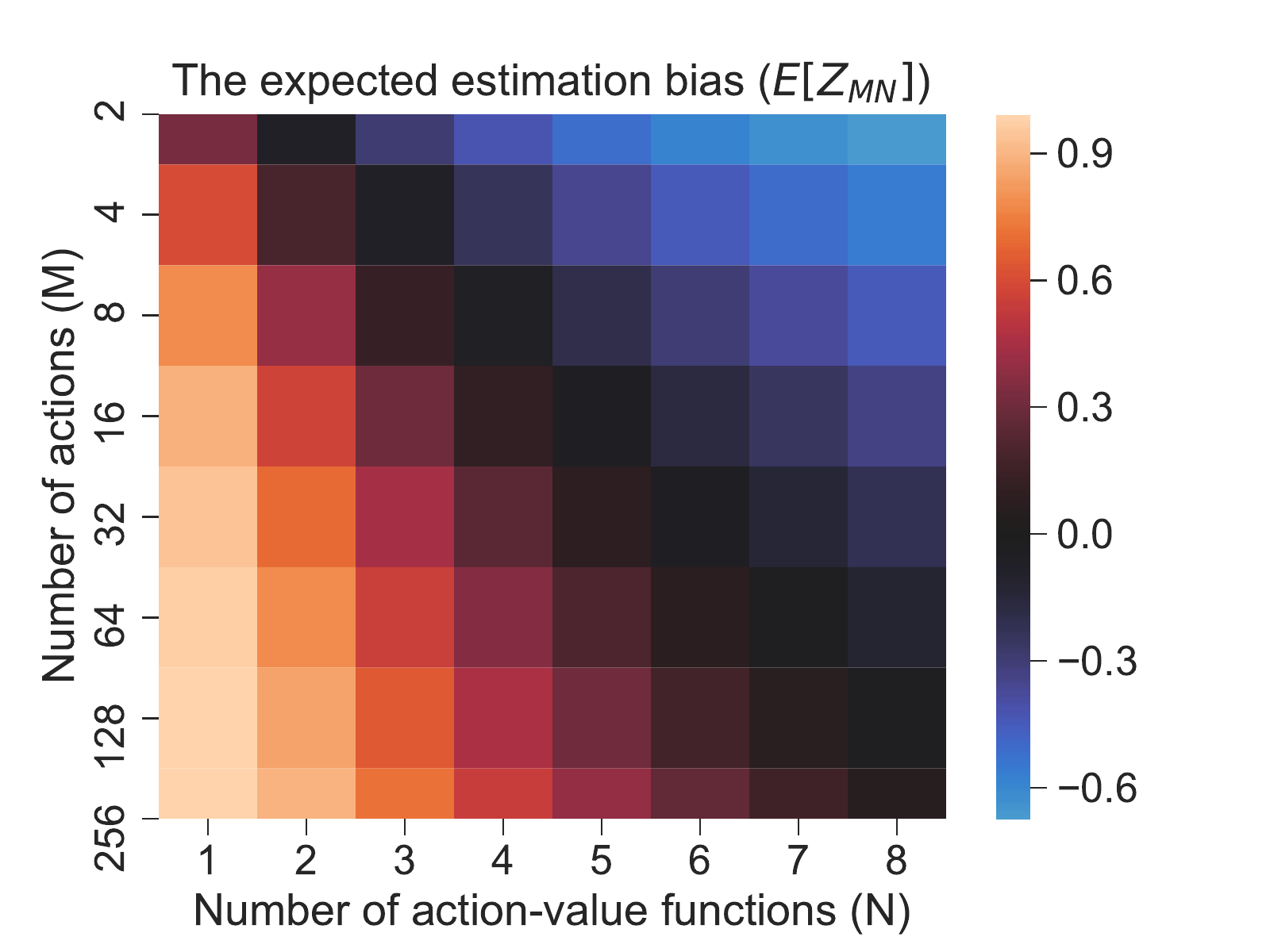}}
	\subfigure[Variance reduction]{
\includegraphics[width=\figwidthtwo]{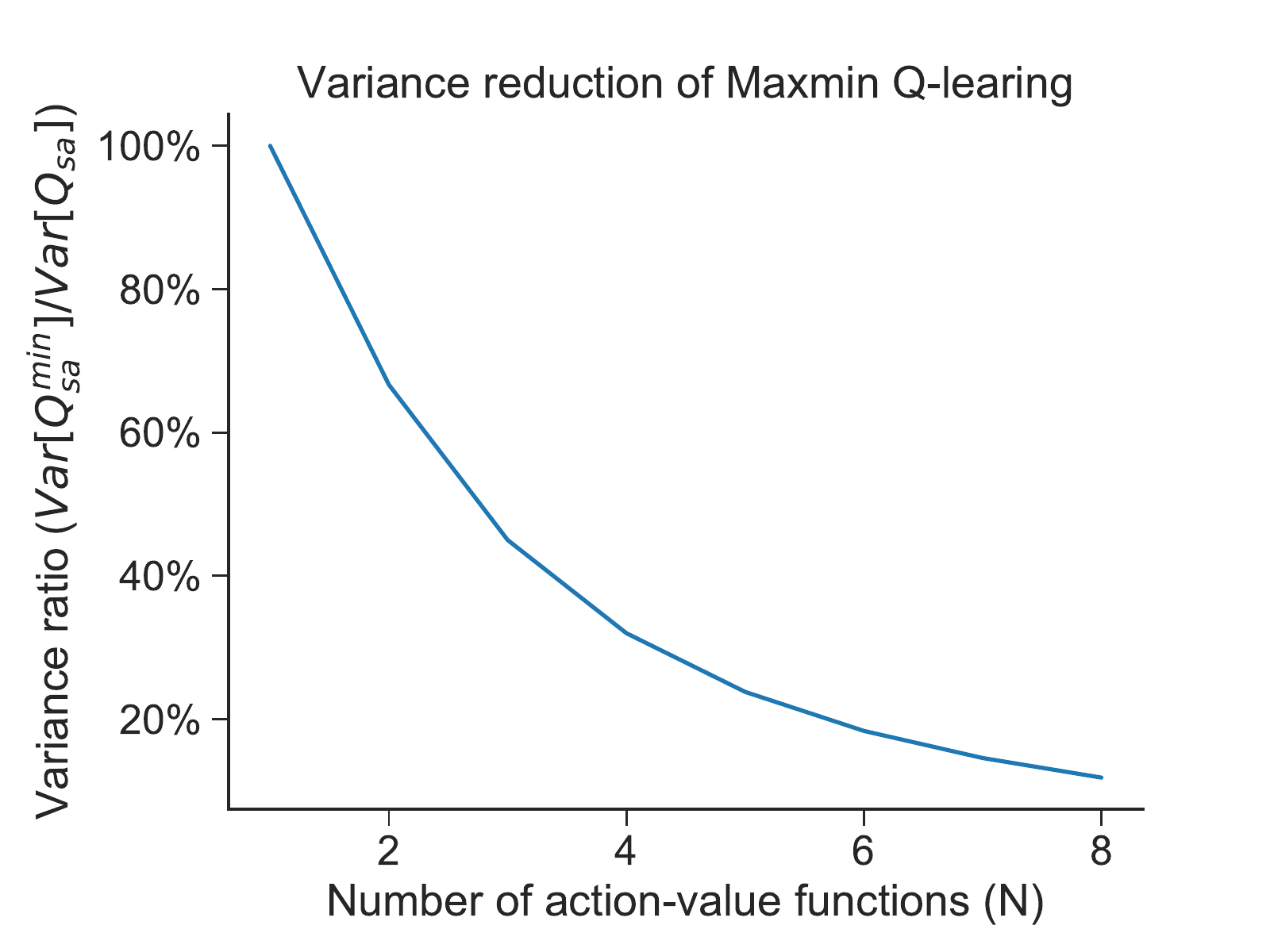}}
\end{center}
\caption{Empirical results of Theorem~\ref{thm:1}. $M$ is the number of available actions for some state $s$. $N$ is the number of action-value functions in Maxmin Q-learning. In Figure~\ref{fig:bias_variance} $(a)$, we show a heat map of bias control in Maxmin Q-learning. In Figure~\ref{fig:bias_variance} $(b)$, we show how the variance ratio of $Q_{sa}^{min}$ and $Q_{sa}$ (i.e. $Var[Q_{sa}^{min}] / Var[Q_{sa}]$) reduces as $N$ increases. For a better comparison, we set ${\gamma \enoise}=1$.}
\label{fig:bias_variance}
\end{figure}

Finally, we prove the result of the Corollary.

\textbf{Corollary~\ref{cor_1}}
Assuming the $n_{sa}$ samples are evenly allocated amongst the $N$ estimators, then $\tau = \sqrt{3 \sigma^2 N/n_{sa}}$ where $\sigma^2$ is the variance of samples for $(s,a)$ and, for $Q_{s a}$ the estimator that uses all $n_{sa}$ samples for a single estimate, 
    \begin{equation*}
        Var[Q_{s a}^{min}] = \frac{12N^2}{(N+1)^{2}(N+2)} Var[Q_{s a}]
        .
    \end{equation*}
 Under this uniform random noise assumption, for $N \geq 8$, $Var[Q_{s a}^{min}] < Var[Q_{s a}]$. 

\begin{proof}
Because $Q^{i}_{s a}$ is a sample mean, its variance is $\sigma^2 N/n_{sa}$ where $\sigma^2$ is the variance of samples for $(s,a)$ and its mean is $\Qtruesa{sa}$ (because it is an unbiased sample average). Consequently, $e_{sa}$ has mean zero and variance $\sigma^2N/n_{sa}$. Because $e_{sa}$ is a uniform random variable which has variance $\frac{1}{3} \enoise^2$, we know that $\enoise = \sqrt{3 \sigma^2 N/n_{sa}}$. Plugging this value into the variance formula in Theorem \ref{thm:1}, we get that 
\begin{align*}
Var[Q_{s a}^{min}] 
    &= \frac{4N{\enoise}^2}{(N+1)^{2}(N+2)}\\
    &= \frac{12N^2 \sigma^2 /n_{sa}}{(N+1)^{2}(N+2)}\\
    &= \frac{12N^2}{(N+1)^{2}(N+2)} Var[Q_{s a}]
\end{align*}
because $Var[Q_{s a}] = \sigma^2 /n_{sa}$ for the sample average $Q_{sa}$ that uses all the samples for one estimator. Easy to verify that for $N \geq 8$, $Var[Q_{s a}^{min}] < Var[Q_{s a}]$.

\end{proof}

\section{The Convergence Proof of Generalized Q-learning}\label{sec:convergence_g_learning}

The convergence proof of Generalized Q-learning is based on \citet{tsitsiklis1994asynchronous}. The key steps to use this result for Generalized Q-learning include showing that the operator is a contraction and verifying the noise conditions. We first show these two steps in Lemma~\ref{lem:2} and Lemma~\ref{lem:3}. We then use these lemmas to make the standard argument for convergence. 


\subsection{Problem Setting for Generalized Q-learning}

Consider a Markov decision problem defined on a finite state space $\States$. For every state $s \in \States$, there is a finite set $\Actions$ of possible actions for state $s$ and a set of non-negative scalars $p_{s s^{\prime}}(a)$, $a \in \Actions$, $s^{\prime} \in \States$, such that $\sum_{j \in S} p_{s s^{\prime}}(a)=1$ for all $a \in \Actions$. The scalar $p_{s s^{\prime}}(a)$ is interpreted as the probability of a transition to $s^{\prime}$, given that the current state is $s$ and action $a$ is applied. Furthermore, for every state $s$ and action $a$, there is a random variable $r_{s a}$ which represents the reward if action $a$ is applied at state $s$. We assume that the variance of $r_{s a}$ is finite for every $s$ and $a \in \Actions$.

A stationary policy is a function $\pi$ defined on $\States$ such that $\pi(s) \in \Actions$ for all $s \in \States$. Given a stationary policy, we obtain a discrete-time Markov chain $f^{\pi}(t)$ with transition probabilities 
\begin{equation}
    \operatorname{Pr}(f^{\pi}(t+1)=s^{\prime} | f^{\pi}(t)=s)=p_{s s^{\prime}}(\pi(s))    
\end{equation}

Let $\gamma \in [0,1]$ be a discount factor. For any stationary policy $\pi$ and initial state $s$, the state value $V_{s}^{\pi}$ is defined by
\begin{equation}
    V_{s}^{\pi}=\lim _{T \rightarrow \infty} E[\sum_{t=0}^{T} \gamma^{t} r_{f^{\pi}(t), \pi(f^{\pi}(t))} | f^{\pi}(0)=s]
\end{equation}
The optimal state value function $V^{*}$ is defined by
\begin{equation}
    V_{s}^{*}=\sup _{\pi} V_{s}^{\pi}, \quad s \in S
\end{equation}
The Markov decision problem is to evaluate the function $V^{*}$. Once this is done, an optimal policy is easily determined.

Markov decision problems are easiest when the discount $\gamma$ is strictly smaller than $1$. For the undiscounted case ($\gamma = 1$), we will assume throughout that there is a reward-free state, say state $1$, which is absorbing; that is, $p_{11}(a)=1$ and $r_{1u}=0$ for all $a \in \Actions$. The
objective is then to reach that state at maximum expected reward. We say that a stationary policy is proper if the probability of being at the absorbing state converges to $1$ as time converges to infinity; otherwise, we say that the policy is improper.

We define the dynamic programming operator $T: \mathbb{R}^{|\States|} \mapsto \mathbb{R}^{|\States|}$, with components $T_{i}$, by letting
\begin{equation}
    T_{s}(V)=\max _{a \in \Actions}\{E[r_{s a}]+\gamma \sum_{s^{\prime} \in \States} p_{s s^{\prime}}(a) V_{s^{\prime}}\}
\end{equation}
It is well known that if $\gamma < 1$, then $T$ is a contraction with respect to the norm $\| \cdot \| _{\infty}$ and $V^{*}$ is its unique fixed point. 

For Generalized Q-learning algorithm, assume that there are $N$ estimators of action-values $Q^{1}, \ldots, Q^{N}$. Let $m$ be the cardinality of $\States$ and $n$ be the cardinality of $\Actions$. We use a discrete index variable $t$ in order to count iterations. Denote $Q^{i j}(t)=Q^{i}(t+j)$. After $t$ iterations,  we have a vector $Q(t) \in \mathbb{R}^{w}$ and $w=mnNK$, with components $Q^{i j}_{s a}(t)$, $(s, a) \in \States \times \Actions$, $i \in \{1, \ldots, N\}$, and $j \in \{0, \ldots, K-1\}$.

By definition, for $j \in \{1, \ldots, K-1\}$, we have
\begin{equation}
    Q^{i j}_{s a}(t+1)=Q^{i, j-1}_{s a}(t).
\end{equation}
For $j=0$, we have $Q^{i 0}_{s a}=Q^{i}_{s a}$. And we update according to the formula
\begin{equation} \label{eq:13}
    Q^{i}_{s a}(t+1)=Q^{i}_{s a}(t)+\alpha^{i}_{s a}(t)[Y^{GQ}(t) - Q^{i}_{s a}(t)]
\end{equation}
where 
\begin{equation}
    Y^{GQ}(t) = r_{s a} + \gamma Q^{GQ}_{f(s, a)}(t).
\end{equation}

Here, each $\alpha_{s a}^{i}(t)$ is a nonnegative step-size coefficient which is set to zero for those $(s, a) \in \States \times \Actions$ and $i \in \{1, \ldots, N\}$ for which $Q_{s a}^{i}$ is not to be updated at the current iteration. Furthermore, $r_{s a}$ is a random sample of the immediate reward if action $a$ is applied at state $s$. $f(s, a)$ is a random successor state which is equal to $s^{\prime}$ with probability $p_{s s^{\prime}}(a)$. Finally, $Q^{GQ}_{s}(t)$ is defined as
\begin{equation}
    Q^{GQ}_{s}(t) = G(Q_{s}(t))
\end{equation}
where $G$ is a mapping from $\mathbb{R}^{nNK}$ to $\mathbb{R}$.
It is understood that all random samples that are drawn in the course of the algorithm are drawn independently.

Since for $j \in \{1, \ldots, K-1\}$, we just preserve current available action-values, we only focus on the case that $j=0$ in the sequel. Let $F$ be the mapping from $\mathbb{R}^{mnNK}$ into $\mathbb{R}^{mnN}$ with components $F^{i}_{s a}$ defined by
\begin{equation} \label{eq:15}
    F^{i}_{s a}(Q)=E[r_{s a}]+\gamma E[Q^{GQ}_{f(s, a)}]
\end{equation}

and note that 
\begin{equation}
    E[Q^{GQ}_{s}] = \sum_{s^{\prime} \in \States} p_{s s^{\prime}}(a) Q^{GQ}_{s^{\prime}}
\end{equation}

If $F^{i}_{s a}(Q(t))=Q(t)^{i}_{s a}$, we can do $K$ more updates such that $Q(t)_{a}^{i j} = Q(t)_{a}^{k l}$, $\forall i,k \in \{1, \ldots, N\}$, $\forall j,l \in \{0, \ldots, K-1\}$, and $\forall a \in \Actions$. 

In view of Equation~\ref{eq:15}, Equation~\ref{eq:13} can be written as
\begin{equation}
    Q^{i}_{s a}(t+1)=Q^{i}_{s a}(t)+\alpha^{i}_{s a}(t)[F^{i}_{s a}(Q(t)) - Q^{i}_{s a}(t) + w^{i}_{s a}(t)]
\end{equation}
where
\begin{equation} \label{eq:18}
    w^{i}_{s a}(t)=r_{s a} - E[r_{s a}] + \gamma (Q^{GQ}_{f(s, a)}(t)-E[Q^{GQ}_{f(s, a)}(t) | \mathcal{F}(t)])
\end{equation}
and $\mathcal{F}(t)$ represents the history of the algorithm during the first $t$ iterations. The expectation in the expression $E[Q^{GQ}_{f(s, a)}(t) | \mathcal{F}(t)]$ is with respect to $f(s, a)$.



\subsection{Key Lemmas and the Proofs}

\begin{lemma}
\label{lem:2}
Assume Assumption~\ref{assump:1} holds for function $G$ in Generalized Q-learning. Then we have
\begin{equation*}
    E[w_{s a}^{2}(t) | \mathcal{F}(t)] \leq Var(r_{s a}) + \max _{i \in \{1, \ldots, N\}} \max _{\tau \leq t} \max _{(s,a) \in \States \times \Actions} {|Q^{i}_{s a}(\tau)|}^{2}.
\end{equation*}
\end{lemma}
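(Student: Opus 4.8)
The plan is to decompose the noise $w_{sa}(t)$ from Equation~\eqref{eq:18} into a reward component and a transition component, kill the cross term using independence of the two random samples, and then control the transition component using the structural Assumption~\ref{assump:1} on $G$. Note first that the right-hand side of Equation~\eqref{eq:18} does not actually depend on $i$, so we may write it as $w_{sa}(t)$ as the lemma does.

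First I would set $A \defeq r_{sa} - E[r_{sa}]$ and $B \defeq Q^{GQ}_{f(s,a)}(t) - E[Q^{GQ}_{f(s,a)}(t) \mid \mathcal{F}(t)]$, so that $w_{sa}(t) = A + \gamma B$. By construction $E[B \mid \mathcal{F}(t)] = 0$, and since the reward distribution for $(s,a)$ is fixed we also have $E[A \mid \mathcal{F}(t)] = 0$ and $E[A^2 \mid \mathcal{F}(t)] = Var(r_{sa})$. Because the reward sample $r_{sa}$ and the successor state $f(s,a)$ are drawn independently, $A$ and $B$ are conditionally independent given $\mathcal{F}(t)$, hence $E[AB \mid \mathcal{F}(t)] = E[A\mid\mathcal{F}(t)]\,E[B\mid\mathcal{F}(t)] = 0$. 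Expanding $w_{sa}^2(t) = A^2 + 2\gamma AB + \gamma^2 B^2$ and taking conditional expectation then gives
\begin{equation*}
E[w_{sa}^2(t)\mid\mathcal{F}(t)] = Var(r_{sa}) + \gamma^2 E[B^2 \mid \mathcal{F}(t)].
\end{equation*}
Since $E[B^2\mid\mathcal{F}(t)]$ is the conditional variance of $Q^{GQ}_{f(s,a)}(t)$, it is bounded above by the conditional second moment $E[(Q^{GQ}_{f(s,a)}(t))^2 \mid \mathcal{F}(t)]$.

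The key structural step bounds $|Q^{GQ}_{s'}(t)| = |G(Q_{s'}(t))|$ for an arbitrary successor state $s'$. Taking $Q' = 0$ (the all-zeros array) in Assumption~\ref{assump:1}(ii), and observing that Assumption~\ref{assump:1}(i) applied to the all-equal input $0$ forces $G(0) = \max_a 0 = 0$, I obtain
\begin{equation*}
|G(Q_{s'}(t))| = |G(Q_{s'}(t)) - G(0)| \leq \max_{a,i,j} |Q^{ij}_{s'a}(t)|.
\end{equation*}
Each component $Q^{ij}_{s'a}(t)$ is an action-value iterate $Q^i_{s'a}(\tau)$ computed at some time $\tau \leq t$, so the right-hand side is at most $\max_{i} \max_{\tau \leq t} \max_{(s,a)} |Q^i_{sa}(\tau)|$, which does not depend on the realized successor state.

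Finally, since this bound on $(Q^{GQ}_{s'}(t))^2$ holds uniformly over every possible value of $f(s,a)$, averaging over the transition distribution preserves it, giving
\begin{equation*}
E[(Q^{GQ}_{f(s,a)}(t))^2 \mid \mathcal{F}(t)] \leq \max_{i} \max_{\tau \leq t} \max_{(s,a)} |Q^i_{sa}(\tau)|^2.
\end{equation*}
Combining this with $\gamma^2 \leq 1$ and the variance-bounded-by-second-moment step yields the claim. I expect the main obstacle to be the middle step: correctly extracting $G(0) = 0$ from the relative-maximum condition (i) and combining it with the non-expansion condition (ii) to convert a bound on $G$ into a bound on the historical action values. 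The probabilistic portion---the vanishing cross term and the variance inequality---is routine once the independence of the reward and transition samples is invoked.
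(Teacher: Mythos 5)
Your proof is correct and takes essentially the same route as the paper's: decompose $w_{sa}(t)$ in Equation~\eqref{eq:18} into reward noise plus transition noise, use the independence of $r_{sa}$ and $f(s,a)$ to eliminate the cross term, and bound the conditional variance of $Q^{GQ}_{f(s,a)}(t)$ by the square of the largest historical action value, absorbing $\gamma^2 \leq 1$. The only difference is level of detail: the paper asserts that last bound directly ``under Assumption~\ref{assump:1},'' while you make the mechanism explicit ($G(0)=0$ from part (i), the nonexpansion property from part (ii)), which is a clarification of the same argument rather than a different one.
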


\begin{proof}
Under Assumption~\ref{assump:1}, the conditional variance of $Q^{GQ}_{f(s,a)}$ given $\mathcal{F}(t)$, is bounded above by the largest possible value that this random variable could take, which is $\max _{i \in \{1, \ldots, N\}} \max _{j \in \{0, \ldots, K-1\}} \max _{(s,a) \in \States \times \Actions} {|Q^{i}_{s a}(t-j)|}^{2}$. We then take the conditional variance of both sides of Equation~\ref{eq:18}, to obtain
\begin{equation}
    E[w_{s a}^{2}(t) | \mathcal{F}(t)] \leq Var(r_{s a}) + \max _{i \in \{1, \ldots, N\}} \max _{\tau \leq t} \max _{(s,a) \in \States \times \Actions} {|Q^{i}_{s a}(\tau)|}^{2}
\end{equation}

We have assumed here that $r_{s a}$ is independent from $f(s, a)$. If it is not, the right-hand side in the last inequality must be multiplied by $2$, but the conclusion does not change.
\end{proof}

\begin{lemma}
\label{lem:3}
$F$ is a contraction mapping, in each of the following cases:
\begin{enumerate}[label=(\roman*)]
    \item $\gamma < 1$.
    \item $\gamma = 1$ and $\forall a \in \Actions , Q^{i}_{s_{1} a}(t=0)=0 $ where $s_{1}$ is an absorbing state. All policies are proper.
\end{enumerate}
\end{lemma}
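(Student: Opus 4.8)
The plan is to prove that $F$ is a contraction with respect to a (possibly weighted) maximum norm, so that the stochastic-approximation machinery of \citet{tsitsiklis1994asynchronous} applies. The single tool doing most of the work is the non-expansiveness in Assumption~\ref{assump:1}(ii): for two input blocks $Q_{s'}, Q'_{s'}$ attached to a fixed state $s'$, it gives $|G(Q_{s'}) - G(Q'_{s'})| \leq \Delta_{s'}$, where I write $\Delta_{s'} \defeq \max_{a,i,j} |Q^{ij}_{s'a} - {Q'}^{ij}_{s'a}|$ for the per-state largest coordinate discrepancy. Since the reward term $E[r_{sa}]$ in $F^i_{sa}$ does not depend on $Q$, it cancels when comparing $F(Q)$ and $F(Q')$, so everything reduces to controlling $\sum_{s'} p_{ss'}(a)\,\Delta_{s'}$.

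For case (i) I would argue directly in $\|\cdot\|_\infty$. From the definition of $F$ in Equation~\eqref{eq:15},
\begin{equation*}
|F^i_{sa}(Q) - F^i_{sa}(Q')| = \gamma\Big|\sum_{s'} p_{ss'}(a)\big(G(Q_{s'}) - G(Q'_{s'})\big)\Big| \leq \gamma \sum_{s'} p_{ss'}(a)\, \Delta_{s'} \leq \gamma \|Q - Q'\|_\infty,
\end{equation*}
where the last step uses $\Delta_{s'} \leq \|Q-Q'\|_\infty$ and $\sum_{s'} p_{ss'}(a) = 1$. Taking the maximum over $(s,a,i)$ gives $\|F(Q)-F(Q')\|_\infty \leq \gamma\|Q-Q'\|_\infty$, a contraction because $\gamma < 1$.

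For case (ii) the discount no longer supplies the contraction factor, and I would import the weighted-norm construction from the stochastic-shortest-path analysis in \citet{tsitsiklis1994asynchronous}. First dispose of the absorbing state: since $E[r_{s_1 a}]=0$ and $p_{s_1 s_1}(a)=1$, Equation~\eqref{eq:15} gives $F^i_{s_1 a}(Q) = G(Q_{s_1})$, and Assumption~\ref{assump:1}(i) applied to the all-equal block $Q_{s_1}=0$ yields $G(Q_{s_1}) = \max_a 0 = 0$; thus the initialization $Q^i_{s_1 a}(0)=0$ is preserved and the iterates stay in the subspace where $Q_{s_1}=0$, on which $\Delta_{s_1}=0$. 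Because every policy is proper, there exist a strictly positive vector $\xi$ and $\beta \in [0,1)$ with $\sum_{s' \neq s_1} p_{ss'}(a)\,\xi_{s'} \leq \beta\,\xi_s$ for every non-absorbing $s$ and every $a$. With the weighted norm $\|Q\|_\xi \defeq \max_{s,a,i,j}|Q^{ij}_{sa}|/\xi_s$ and the bound $\Delta_{s'} \leq \xi_{s'}\|Q-Q'\|_\xi$, the same chain of inequalities gives
\begin{equation*}
\frac{|F^i_{sa}(Q) - F^i_{sa}(Q')|}{\xi_s} \leq \frac{1}{\xi_s}\sum_{s' \neq s_1} p_{ss'}(a)\,\xi_{s'}\,\|Q-Q'\|_\xi \leq \beta\,\|Q-Q'\|_\xi,
\end{equation*}
so that $F$ is a $\beta$-contraction in $\|\cdot\|_\xi$.

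I expect the main obstacle to be case (ii), specifically the existence of the weights $\xi$ and factor $\beta$: this is the core of the proper-policy contraction theory and is a structural property of the transition kernel that Assumption~\ref{assump:1} does not provide, so I would invoke it from \citet{tsitsiklis1994asynchronous} rather than reprove it. The only other delicate point is confirming that the iteration remains in the invariant subspace $\{Q : Q^i_{s_1 a}=0\ \forall a,i\}$, which rests entirely on the exact-maximum reduction in Assumption~\ref{assump:1}(i); with that secured, the weighted-norm estimate closes the argument.
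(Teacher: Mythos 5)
Your proposal is correct and takes essentially the same approach as the paper: for case (i) the identical max-norm estimate obtained from Assumption~\ref{assump:1}(ii) together with $\sum_{s'} p_{ss'}(a)=1$, and for case (ii) the same invocation of the properness-based weighted max-norm contraction (the paper cites Proposition 2.2 of \citet{bertsekas1996neuro} rather than deriving the weights). Your invariant-subspace treatment of the absorbing state is equivalent to the paper's device of replacing $F$ by $\tilde{F}$ with $\tilde{F}^{i}_{s_1 a}\equiv 0$, so the two arguments match in substance.
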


\begin{proof}
For discounted problems ($\gamma < 1$), Equation~\ref{eq:15} easily yields $\forall Q, Q^{\prime}$,
\begin{equation}
    |F^{i}_{s a}(Q)-F^{i}_{s a}(Q^{\prime})| \leq \gamma \max _{s \in \States}|Q^{GQ}_{s}-{Q^{\prime}_{s}}^{GQ}|
\end{equation}
In particular, $F$ is a contraction mapping, with respect to the maximum norm $\|\cdot\|_{\infty}$.

For undiscounted problems ($\gamma = 1$), our assumptions on the absorbing state $s_{1}$ imply that the update equation for $Q^{i}_{s_{1} a}$ degenerates to $Q^{i}_{s_{1} a}(t+1)=Q^{i}_{s_{1} a}(t)$, for all $t$. We will be assuming in the sequel, that $Q^{i}_{s_{1} a}$ is initialized at zero. This leads to an equivalent description of the algorithm in which the mappings $F^{i}_{s a}$ of Equation~\ref{eq:15} are replaced by mappings $\tilde{F}^{i}_{s a}$ satisfying $\tilde{F}^{i}_{s a} = F^{i}_{s a}$ if $s \neq s_1$ and $\tilde{F}^{i}_{s_1 a}(Q) = 0$ for all $a \in \Actions$, $i \in \{1, \ldots, N\}$ and $Q \in \mathbb{R}^{n}$. 

Let us consider the special case where every policy is proper. By Proposition 2.2 in the work of~\citep{bertsekas1996neuro}, there exists a vector $v > 0$ such that $T$ is a contraction with respect to the norm $\| \cdot \|_{v}$. In fact, a close examination of the proof of this Proposition 2.2 shows that this proof is easily extended to show that the mapping $\tilde{F}$ (with components $\tilde{F}^{i}_{s a}$) is a contraction with respect to the norm $\| \cdot \|_{z}$, where $z^{i}_{s a} = v_{s}$ for every $a \in \Actions$ and $i \in \{1, \ldots, N\}$.
\end{proof}

\subsection{Models and Assumptions}
\label{sec:model}

In this section, we describe the algorithmic model to be employed and state some assumptions that will be imposed.

The algorithm consists of noisy updates of a vector $x \in \mathbb{R}^{n}$, for the purpose of solving a system of equations of the form $F(x) = x$. Here $F$ is assumed to be a mapping from $\mathbb{R}^{n}$ into itself. Let $F_{1}, \dots, F_{n}$: $\mathbb{R}^{n} \mapsto \mathbb{R}$ be the corresponding component mappings; that is, $F(x) = (F_{1}(x), \dots, F_{n}(x))$ for all $x \in \mathbb{R}^n$.

Let $\mathcal{N}$ be the set of non-negative integers. We employ a discrete "time" variable $t$, taking values in $\mathcal{N}$. This variable need not have any relation with real time; rather, it is used to index successive updates. Let $x(t)$ be the value of the vector $x$ at time $t$ and let $x_{i}(t)$ denote
its $i$th component. Let $T^{i}$ be an infinite subset of $\mathcal{N}$ indicating the set of times at which an update of $x_{i}$ is performed. We assume that
\begin{equation} \label{eq:7}
x_{i}(t+1)=x_{i}(t), \quad t \notin T^{i}
\end{equation}
Regarding the times that $x_{i}$ is updated, we postulate an update equation of the form
\begin{equation} \label{eq:8}
x_{i}(t+1)=x_{i}(t)+\alpha_{i}(t)(F_{i}(x^{i}(t))-x_{i}(t)+w_{i}(t)), \quad t \in T^{i}    
\end{equation}
Here, $\alpha(t)$ is a step-size parameter belonging to $[0,1]$, $w_{i}(t)$ is a noise term, and $x_{i}(t)$ is a vector of possibly outdated components of $x$. In particular, we assume that
\begin{equation}
x^{i}(t)=(x_{1}(\tau_{1}^{i}(t)), \ldots, x_{n}(\tau_{n}^{i}(t))), \quad t \in T^{i}
\end{equation}
where each $\tau_{j}^{i}(t)$ is an integer satisfying $0 \leq \tau_{j}^{i}(t) \leq t$. If no information is outdated, we have $\tau_{j}^{i}(t) = t$ and $x^{i}(t)=x(t)$ for all $t$; the reader may wish to think primarily of this case. For an interpretation of the general case, see \citep{bertsekas1989parallel}. In order to bring Eqs. \ref{eq:7} and \ref{eq:8} into a unified form, it is convenient to assume that $\alpha_{i}(t), w_{i}(t)$, and $\tau_{j}^{i}(t)$ are defined for every $i$, $j$, and $t$, but that $\alpha_{i}(t)=0$ and $\tau_{j}^{i}(t)=t$ for $t \notin T^{i}$.

We will now continue with our assumptions. All variables introduced so far $(x(t), \tau_{j}^{i}(t), \alpha_{i}(t), w_{i}(t))$ are viewed as random variables defined on a probability space $(\Omega, \mathcal{F}, \mathcal{P})$ and the assumptions deal primarily with the dependencies between these random variables. Our assumptions also involve an increasing sequence $\{\mathcal{F}(t)\}_{t=0}^{\infty}$ of subfields of $\mathcal{F}$. Intuitively, $\mathcal{F}(t)$ is meant to represent the history of the algorithm up to, and including the point at which the step-sizes $\alpha_{i}(t)$ for the $t$th iteration are selected, but just before the noise term $w_{i}(t)$ is generated. Also, the measure-theoretic terminology that "a random variable $Z$ is $\mathcal{F}(t)$-measurable" has the intuitive meaning that $Z$ is completely determined by the history represented by $\mathcal{F}(t)$.

The first assumption, which is the same as the total asynchronism assumption of \citet{bertsekas1989parallel}, guarantees that even though information can be outdated, any old information is eventually discarded.

\begin{assumption}\label{assump:2}
For any $i$ and $j$, $\lim _{t \rightarrow \infty} \tau_{j}^{i}(t)=\infty$, with  probability 1.
\end{assumption}

Our next assumption refers to the statistics of the random variables involved in the algorithm.

\begin{assumption}\label{assump:3}
Let $\{\mathcal{F}(t)\}_{t=0}^{\infty}$ be an increasing sequence of subfields of $\mathcal{F}$.\\
\begin{enumerate}[label=(\roman*)]
    \item $x(0)$ is $\mathcal{F}(0)$-measurable.
    \item For every $i$ and $t, w_{i}(t)$ is $\mathcal{F}(t+1)$-measurable.
    \item For every $i$, $j$ and $t$, $\alpha_{i}(t)$ and $\tau_{j}^{i}(t)$ are $\mathcal{F}(t)$-measurable.
    \item For every $i$ and $t$, we have $E[w_{i}(t) | \mathcal{F}(t)]=0$.
    \item There exist (deterministic) constants $A$ and $B$ such that
    \begin{equation}
        E[w_{i}^{2}(t) | \mathcal{F}(t)] \leq A+B \max _{j} \max _{\tau \leq t}|x_{j}(\tau)|^{2}, \quad \forall i, t    
    \end{equation}
\end{enumerate}
\end{assumption}

Assumption \ref{assump:3} allows for the possibility of deciding whether to update a particular component $x_i$ at time $t$, based on the past history of the process. In this case, the step-size $\alpha_{i}(t)$ becomes a random variable. However, part $(iii)$ of the assumption requires that the choice of
the components to be updated must be made without anticipatory knowledge of the noise variables $w_{i}$ that have not yet been realized.



Finally, we introduce a few alternative assumptions on the structure of the iteration mapping $F$. We first need some notation: if $x, y \in \mathbb{R}^n$, the inequality $x \leq y$ is to be interpreted as $x_{i} \leq y_{i}$ for all $i$. Furthermore, for any positive vector $v = (v_{1}, \dots ,v_{n})$, we define a norm $\| \cdot \| _{v}$ on $\mathbb{R}^{n}$ by letting
\begin{equation}
\|x\|_{v}=\max _{i} \frac{\left|x_{i}\right|}{v_{i}}, \quad x \in \mathbb{R}^{n}
\end{equation}
Notice that in the special case where all components of $v$ are equal to $1$, $\| \cdot \| _{v}$ is the same as the maximum norm $\| \cdot \| _{\infty}$.

\begin{assumption}\label{assump:5}
Let $F: \mathbb{R}^{n} \mapsto \mathbb{R}^{n}$.
\begin{enumerate}[label=(\roman*)]
    \item The mapping $F$ is monotone; that is, if $x \leq y$, then $F(x) \leq F(y)$.
    \item The mapping $F$ is continuous.
    \item The mapping $F$ has a unique fixed point $x^{*}$ .
    \item If $e \in \mathbb{R}^{n}$ is the vector with all components equal to $1$, and $r$ is a positive scalar, then
    \begin{equation}
        F(x)-r e \leq F(x-r e) \leq F(x+r e) \leq F(x)+r e
    \end{equation}
\end{enumerate}
\end{assumption}

\begin{assumption}\label{assump:6}
There exists a vector $x^{*} \in \mathbb{R}^{n}$, a positive vector $v$, and a scalar $\beta \in [0, 1)$, such that
\begin{equation}
    \|F(x)-x^{*}\|_{v} \leq \beta\|x-x^{*}\|_{v}, \quad \forall x \in \mathbb{R}^{n}
\end{equation}
\end{assumption}

\begin{assumption}\label{assump:7}
There exists a positive vector $v$, a scalar $\beta \in [0, 1)$, and a scalar $D$ such that
\begin{equation}
    \|F(x)\|_{v} \leq \beta\|x\|_{v}+D, \quad \forall x \in \mathbb{R}^{n}
\end{equation}
\end{assumption}

\begin{assumption}\label{assump:8}
There exists at least one proper stationary policy. Every improper stationary policy yields infinite expected cost for at least one initial state.
\end{assumption}

\begin{theorem}\label{thm:3}
Let Assumptions \ref{assump:2}, \ref{assump:3}, \ref{assump:4}, and \ref{assump:7} hold. Then the sequence $x(t)$ is bounded with probability 1.
\end{theorem}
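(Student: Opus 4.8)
The plan is to follow the asynchronous stochastic approximation argument of \citet{tsitsiklis1994asynchronous} (see also \citet{bertsekas1996neuro}), since Assumptions~\ref{assump:2}, \ref{assump:3}, \ref{assump:4}, and~\ref{assump:7} are exactly the hypotheses under which iterates of the form~\eqref{eq:8} remain bounded. First I would rewrite the update as the convex combination $x_i(t+1) = (1-\alpha_i(t))x_i(t) + \alpha_i(t)(F_i(x^i(t)) + w_i(t))$, which is legitimate because $\alpha_i(t) \in [0,1]$. The pseudo-contraction bound of Assumption~\ref{assump:7} gives $|F_i(x^i(t))|/v_i \leq \beta \|x^i(t)\|_v + D$; hence if every (possibly outdated) component feeding $x^i(t)$ has $v$-norm at most $B$, then the noise-free target has $v$-norm at most $\beta B + D$. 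Fixing $\rho \in (\beta,1)$, whenever $B$ is large enough that $\beta B + D \leq \rho B$, the noiseless dynamics genuinely contract $\|x\|_v$ toward the strictly smaller ball of radius $\rho B$. This is the mechanism that must eventually dominate the noise.

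Next I would isolate the effect of the martingale noise. For the pure-noise recursion $W_i(t+1) = (1-\alpha_i(t))W_i(t) + \alpha_i(t)w_i(t)$ started from $W_i = 0$, Assumption~\ref{assump:3}(iv) makes the cross term vanish in conditional expectation, so $E[W_i^2(t+1) \mid \mathcal{F}(t)] \leq W_i^2(t) + \alpha_i^2(t)\, E[w_i^2(t) \mid \mathcal{F}(t)]$, using $(1-\alpha_i(t))^2 \leq 1$. Under a \emph{constant} conditional variance bound, the square-summability $\sum_t \alpha_i^2(t) \leq C$ from Assumption~\ref{assump:4} together with the supermartingale convergence (Robbins--Siegmund) theorem forces $W_i(t) \to 0$ almost surely. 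I would state and prove this as an auxiliary lemma.

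The remaining and hardest step is to reconcile this noise lemma, which needs a bounded variance, with Assumption~\ref{assump:3}(v), whose bound $A + B \max_{\tau \leq t}|x_j(\tau)|^2$ is allowed to grow with the iterate. I would handle this by a scaling (bootstrapping) argument: define the nondecreasing running bound $B(t) = \max\{B_0, \max_{\tau \leq t}\|x(\tau)\|_v\}$ with $B_0 \geq 1$ chosen so that $\beta B_0 + D \leq \rho B_0$, and suppose for contradiction that $B(t) \to \infty$. Partitioning time according to a geometric sequence of thresholds $G^k$ for the running bound, on the window where $B(t)$ lies in $[G^k, \lambda G^k)$ I would normalize the iterate by $G^k$; the normalized noise $w_i/G^k$ then has conditional variance at most $(A + B\, v_{\max}^2 \lambda^2 G^{2k})/G^{2k}$, which is bounded uniformly in $k$, so the lemma applies to the scaled process and shows the scaled noise is eventually negligible. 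The contraction from the first paragraph then drives the scaled iterate below $\rho\lambda < 1$ times its scale, so $B(t)$ cannot advance to the next threshold, contradicting $B(t) \to \infty$. Assumption~\ref{assump:2} enters here to guarantee that after enough time all outdated components $x_j(\tau_j^i(t))$ are drawn from the current regime, so the bound $B$ used in the pseudo-contraction estimate is valid across the whole window.

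The main obstacle is precisely this interaction between the state-dependent noise variance in Assumption~\ref{assump:3}(v) and the asynchronous, outdated updates: one cannot bound the noise a priori, and the entire argument hinges on showing that, because the step sizes are square-summable, the accumulated noise grows strictly slower than the iterate, so the contraction factor $\rho < 1$ ultimately wins. Getting the scaling bookkeeping correct---choosing $\rho$, $\lambda$, $B_0$, and the threshold times consistently, and tracking which components are outdated---is where the care is required; the martingale and contraction estimates themselves are routine once the scaling is set up.
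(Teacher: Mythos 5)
The paper offers no proof of Theorem~\ref{thm:3} to compare against: this theorem, together with Theorems~\ref{thm:4} and~\ref{thm:5}, is imported from the asynchronous stochastic-approximation literature, and the paper says only that detailed proofs can be found in the work of \citet{bertsekas1989parallel} (the statement is Theorem~1 of \citet{tsitsiklis1994asynchronous}). Measured against that cited argument, your plan has the right architecture: the convex-combination rewriting of the update, the observation that Assumption~\ref{assump:7} yields a genuine shrinkage of the scale once $\|x\|_v \geq D/(\rho-\beta)$, the isolation of a pure-noise accumulation process, and the rescaling of the noise by the running bound so that Assumption~\ref{assump:3}(v) becomes a uniform conditional-variance bound are exactly the ingredients of Tsitsiklis's boundedness proof, and you correctly identify the state-dependent variance as the crux.

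The one step that fails as written is your auxiliary noise lemma. After discarding the contraction factor via $(1-\alpha_i(t))^2 \leq 1$ you are left with $E[W_i^2(t+1)\mid\mathcal{F}(t)] \leq W_i^2(t) + \alpha_i^2(t)K$, and square-summability plus Robbins--Siegmund then yields only that $W_i^2(t)$ converges almost surely to \emph{some} finite limit, not that the limit is zero. Indeed, take $\alpha_i(t)=4^{-(t+1)}$ and $w_i(t)$ i.i.d.\ $\pm 1$ coin flips: every hypothesis your lemma actually invokes holds (steps in $[0,1]$, square-summable, zero conditional mean, bounded conditional variance), yet $|W_i(1)|=1/4$ and the total subsequent movement is at most $2\sum_{t\geq 1}4^{-(t+1)}=1/6$, so $|W_i(t)|$ converges to a limit of at least $1/12>0$ almost surely. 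To conclude $W_i(t)\to 0$ you must keep the factor, bound $(1-\alpha_i(t))^2 \leq 1-\alpha_i(t)$, and invoke the divergence condition $\sum_t \alpha_i(t)=\infty$ --- which Assumption~\ref{assump:4} supplies but your lemma never uses; Robbins--Siegmund then also gives $\sum_t \alpha_i(t)W_i^2(t)<\infty$, forcing the limit to be zero. A second, related soft spot: your bootstrapping step applies this asymptotic, fixed-start lemma inside windows delimited by stopping times, whereas what is needed there is the uniform statement that for every $\delta>0$, with probability one there exists $T$ such that the accumulation started at any $t_0 \geq T$ stays below $\delta$ for all $t \geq t_0$ (Lemma~2 of \citet{tsitsiklis1994asynchronous}, obtainable from a supermartingale maximal inequality applied to $W_i^2(t;t_0)+K\sum_{s\geq t}\alpha_i^2(s)$). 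With those two repairs, your outline becomes the standard, correct proof.
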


\begin{theorem}\label{thm:4}
Let Assumptions \ref{assump:2}, \ref{assump:3}, \ref{assump:4}, and \ref{assump:5} hold. Furthermore, suppose that $x(t)$ is bounded with probability $1$. Then $x(t)$ converges to $x^{*}$ with probability $1$.
\end{theorem}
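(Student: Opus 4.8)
This is a standard asynchronous stochastic approximation statement---it is essentially the monotone-operator convergence theorem of \citet{tsitsiklis1994asynchronous}---so the plan is to combine a noise-averaging argument with a monotone sandwiching / box-shrinking induction driven by Assumption~\ref{assump:5}. First I would fix a sample point in the probability-one event on which $x(t)$ is bounded, and argue deterministically on this event intersected with a second probability-one event on which the accumulated noise vanishes. To establish the latter, define for each component and each starting time $s$ the noise-tracking recursion $W_i^s(s)=0$ and $W_i^s(t+1)=(1-\alpha_i(t))W_i^s(t)+\alpha_i(t)w_i(t)$. Using Assumption~\ref{assump:3}(iv) (zero conditional mean), Assumption~\ref{assump:3}(v) together with the assumed boundedness of $x(t)$ (to bound the conditional variance of $w_i$), and the square-summability and divergence of the step-sizes from Assumption~\ref{assump:4}, a supermartingale convergence (Robbins--Siegmund) argument gives $W_i^s(t)\to 0$ almost surely for every $s$.

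Next I would exploit the structure of $F$. Since $x^\ast$ is the fixed point, Assumption~\ref{assump:5}(iv) yields $F(x^\ast+re)\le x^\ast+re$ and $F(x^\ast-re)\ge x^\ast-re$, and monotonicity (Assumption~\ref{assump:5}(i)) then makes every box $\mathcal{B}(r)=\{x:\|x-x^\ast\|_\infty\le r\}$ invariant under $F$. To obtain strict shrinkage I would iterate $F$ from the corners: set $u_0=x^\ast+re$ and $u_{n+1}=F(u_n)$; the box property gives $u_1\le u_0$, and monotonicity propagates this to a decreasing sequence bounded below by $x^\ast$, so $u_n\downarrow u_\infty$; continuity (Assumption~\ref{assump:5}(ii)) makes $u_\infty$ a fixed point, and uniqueness (Assumption~\ref{assump:5}(iii)) forces $u_\infty=x^\ast$. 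Symmetrically, the lower iterates $l_n\uparrow x^\ast$. Hence from any box one can reach an arbitrarily smaller box under the noise-free dynamics in finitely many applications of $F$.

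The core induction then shrinks the boxes for the actual iterate. Starting from $x(t)\in\mathcal{B}(D_0)$ for all large $t$ (from boundedness), I would show that if $x(t)\in\mathcal{B}(D_k)$ for all $t\ge t_k$, then there is a later time $t_{k+1}$ with $x(t)\in\mathcal{B}(\rho D_k)$ for all $t\ge t_{k+1}$, for a fixed $\rho\in(0,1)$. The mechanism is to bound $x_i(t)$ above and below by the deterministic monotone iterates $u_n,l_n$ shifted by the vanishing term $W_i^{t_k}(t)$: as long as the iterate remains in $\mathcal{B}(D_k)$, each update drives $x_i(t)$ toward $F_i$ evaluated at (outdated but still boxed) components, and after each component has been refreshed sufficiently often it is squeezed between $l_n$ and $u_n$ plus a contribution tending to zero. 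Setting $D_{k+1}=\rho D_k\to 0$ and chaining the inductive step over $k$ yields $x(t)\to x^\ast$ almost surely.

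The main obstacle is precisely this trapping step under asynchronism: because the update of $x_i$ reads possibly outdated components $x_j(\tau_j^i(t))$, one must invoke Assumption~\ref{assump:2} (old information is eventually flushed) together with $\sum_t\alpha_i(t)=\infty$ to guarantee that every component is updated infinitely often and genuinely tracks the monotone $F$-iterates. Reconciling these stale reads with the monotone sandwich, while keeping the noise contribution absorbed into the shrinking margin $\rho D_k$, is the delicate heart of the argument; the monotonicity and the box inequality in Assumption~\ref{assump:5}(iv) are exactly what make the outdated-but-boxed components harmless, since they can be bounded componentwise by $x^\ast\pm D_k e$ before applying $F$.
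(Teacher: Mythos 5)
Your sketch cannot be compared against ``the paper's proof'' in the usual sense, because the paper does not prove Theorem~\ref{thm:4} at all: immediately after stating Theorems~\ref{thm:3}--\ref{thm:5} it remarks that detailed proofs can be found in \citet{bertsekas1989parallel}, and the result is imported as a black box (it is Theorem~2 of \citet{tsitsiklis1994asynchronous}, as you correctly identified). What your proposal does is reconstruct, correctly in its essentials, the standard argument from that source: (1) the accumulated-noise processes $W_i^s(t+1)=(1-\alpha_i(t))W_i^s(t)+\alpha_i(t)w_i(t)$ driven to zero almost surely by a supermartingale argument using Assumption~\ref{assump:3}(iv)--(v), the square-summable step-sizes of Assumption~\ref{assump:4}, and the assumed boundedness of $x(t)$ (with the usual truncation over the events $\{\sup_t\|x(t)\|_\infty\le C\}$, since the variance bound itself involves $x$); (2) the monotone corner iterates $u_{n+1}=F(u_n)$, $l_{n+1}=F(l_n)$ started from $x^*\pm re$, which converge monotonically to $x^*$ by Assumption~\ref{assump:5}(i)--(iv); and (3) the inductive trapping of $x(t)$ in shrinking boxes, where Assumption~\ref{assump:2} flushes the outdated reads and the box inequality of Assumption~\ref{assump:5}(iv) absorbs the small margins contributed by the vanishing noise. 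These are exactly the three pillars of the cited proof, so your route is the intended one rather than a genuinely different one.

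One caveat deserves flagging: your packaging of the induction with a \emph{fixed} per-stage contraction factor $\rho\in(0,1)$ is not justified by Assumption~\ref{assump:5} alone. No contraction is assumed there (that is Assumption~\ref{assump:6} and Theorem~\ref{thm:5}), and monotone iterates can converge sub-geometrically---e.g., $F(x)=x-x^3$ near $x^*=0$ satisfies (i)--(iv) locally but admits no uniform $\rho$ with $F(x^*+re)\le x^*+\rho re$. This does not break your argument, because you explicitly allow finitely many applications of $F$ within each macro-step, and finitely many always suffice to pass from $\mathcal{B}(D_k)$ into $\mathcal{B}(\rho D_k)$ since $u_n\downarrow x^*$; but the number of such applications may grow with $k$, so the convergence obtained is not geometric, and the cleaner formulation of the induction is the one in the original source: for every $k$ and every $\epsilon>0$, eventually $l^k-\epsilon e\le x(t)\le u^k+\epsilon e$, with the $\epsilon$'s chosen summable across stages so that their accumulated total is arbitrarily small.
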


\begin{theorem}\label{thm:5}
Let Assumptions \ref{assump:2}, \ref{assump:3}, \ref{assump:4}, and \ref{assump:6} hold. Then $x(t)$ converges to $x^{*}$ with probability $1$.
\end{theorem}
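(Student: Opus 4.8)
The plan is to recognize Theorem~\ref{thm:5} as the weighted-maximum-norm contraction case of asynchronous stochastic approximation and to prove it by reducing to boundedness and then running an iterated shrinking-box argument. First I would translate coordinates so the fixed point sits at the origin: replacing $F$ by $\tilde F(y) = F(y + x^*) - x^*$ and $x$ by $x - x^*$ turns Assumption~\ref{assump:6} into the homogeneous statement $\|\tilde F(y)\|_v \le \beta \|y\|_v$, while preserving Assumptions~\ref{assump:2}, \ref{assump:3}, and~\ref{assump:4}; it then suffices to show $y(t) \to 0$ almost surely. Boundedness comes for free: from the contraction, $\|F(x)\|_v \le \|F(x) - x^*\|_v + \|x^*\|_v \le \beta\|x\|_v + (1+\beta)\|x^*\|_v$, so Assumption~\ref{assump:6} implies Assumption~\ref{assump:7} with $D = (1+\beta)\|x^*\|_v$. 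Hence Theorem~\ref{thm:3} applies and $x(t)$ is bounded with probability $1$; on the boundedness event I fix a random radius $B_0$ with $\|y(\tau)\|_v \le B_0$ for all $\tau$.

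The engine of the proof is a pure-noise lemma: the scalar recursion $W_i(t+1) = (1 - \alpha_i(t)) W_i(t) + \alpha_i(t) w_i(t)$, started at $0$ and frozen off the update times $T^i$, converges to $0$ almost surely. This follows from Assumption~\ref{assump:3} (zero conditional mean, together with the variance bound $E[w_i^2(t)\mid\mathcal{F}(t)] \le A + B\max_j\max_{\tau\le t}|x_j(\tau)|^2$, which is finite on the bounded event) and Assumption~\ref{assump:4} ($\sum_t (\alpha_i(t))^2 \le C$, $\sum_t \alpha_i(t) = \infty$), via martingale convergence: square-summability of the step-sizes bounds the accumulated noise variance, while the zero conditional mean makes the partial sums a martingale.

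The main step is an induction over nested boxes. Define $D_k = \{y : |y_i| \le \beta^k B_0 v_i \ \forall i\}$ and prove by induction on $k$ that almost surely there is a finite time after which $y(t) \in D_k$; the base case $k=0$ is boundedness. For the inductive step, Assumption~\ref{assump:2} guarantees that after some time every delayed reference $y^i(t) = (y_1(\tau^i_1(t)), \ldots, y_n(\tau^i_n(t)))$ sees only coordinates already inside $D_k$, and the contraction then forces the drift $F(y^i(t))$ into $\beta D_k = D_{k+1}$. I would sandwich each component of $y(t)$ between deterministic bounding sequences of the form $Y_i(t+1) = (1-\alpha_i(t))Y_i(t) + \alpha_i(t)\beta^{k+1}B_0 v_i + \alpha_i(t) w_i(t)$, whose homogeneous part decays to $0$ because $\sum_t \alpha_i(t) = \infty$ and whose noise part vanishes by the lemma above, yielding $\limsup_t |y_i(t)| \le \beta^{k+1} B_0 v_i$ so that $y(t)$ enters $D_{k+1}$ for good. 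Since $\beta^k B_0 \to 0$, the nested-box conclusion gives $y(t) \to 0$, i.e.\ $x(t) \to x^*$ almost surely.

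The hard part will be the bookkeeping in the inductive step: coupling the genuinely asynchronous, delayed, and noisy update to the deterministic comparison sequences componentwise in the weighted norm $\|\cdot\|_v$, while ensuring each entry time into $D_{k+1}$ is almost surely finite even though the radius $B_0$ and the times are random. Handling the outdated components $\tau^i_j(t)$ cleanly---so the drift really does land in the smaller box once Assumption~\ref{assump:2} has taken effect---is where the total-asynchronism hypothesis is indispensable and where the estimates are most delicate.
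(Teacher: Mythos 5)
Your outline is correct, but note first that the paper does not actually prove Theorem~\ref{thm:5} at all: it imports Theorems~\ref{thm:3}--\ref{thm:5} as background, saying only that detailed proofs can be found in \citet{bertsekas1989parallel} (the argument itself is the one in \citet{tsitsiklis1994asynchronous}). What you have written is essentially a faithful reconstruction of that cited argument, so you are on the same route the paper relies on, just spelled out: reduce to boundedness via Theorem~\ref{thm:3} (your observation that Assumption~\ref{assump:6} implies Assumption~\ref{assump:7} with $D=(1+\beta)\|x^{*}\|_{v}$ is exactly the standard reduction), isolate the pure-noise recursion $W_{i}(t+1)=(1-\alpha_{i}(t))W_{i}(t)+\alpha_{i}(t)w_{i}(t)$ and kill it by martingale arguments using $\sum_{t}\alpha_{i}^{2}(t)\le C$, then run the shrinking-box induction, invoking Assumption~\ref{assump:2} so the delayed components eventually reference values inside the current box. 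Two details in your sketch need tightening, both of the bookkeeping kind you yourself flagged. First, the boxes cannot have radius exactly $\beta^{k}B_{0}$: the comparison sequences only give $\limsup_{t}|y_{i}(t)|/v_{i}\le\beta B_{k}$, and $\limsup\le c$ does not imply the iterates eventually lie in the closed box of radius $c$. The standard fix is to build in slack, e.g.\ fix $\eta\in(\beta,1)$ and take $B_{k+1}=\eta B_{k}$ (or $B_{k+1}=\beta B_{k}+\epsilon_{k}$ with $\epsilon_{k}\downarrow 0$), so eventual entry into the next box follows from the strict inequality $\beta B_{k}<\eta B_{k}$; the geometric decay to zero is unaffected. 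Second, on the boundedness event both $B_{0}$ and the conditional variance bound of Assumption~\ref{assump:3}(v) are random, so the martingale lemma cannot be applied verbatim with a random constant; one conditions on (or truncates to) events of the form $\{\sup_{\tau}\|x(\tau)\|_{v}\le c\}$ for a countable family of thresholds $c$ and applies the lemma on each. With those repairs your proposal matches the proof in the cited reference.
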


Detailed proofs of Theorems \ref{thm:3}, \ref{thm:4}, and \ref{thm:5} can be found in the work of \citet{bertsekas1989parallel}.

\subsection{Proof of Theorem 2}

We first state Theorem~\ref{thm:2} here again and then show the proof.\\
\\
\textbf{Theorem 2}
Assume a finite MDP $(\States, \Actions, \Pfcn, R)$ and that Assumption \ref{assump:1} and \ref{assump:4} hold. Then the action-value functions in Generalized Q-learning, using tabular update in Equation \eqref{eq_gq}, will converge to the optimal action-value function with probability $1$, in each of the following cases:
\begin{enumerate}[label=(\roman*)]
    \item $\gamma < 1$.
    \item $\gamma = 1$ and $\forall a \in \Actions , Q^{i}_{s_{1} a}(t=0)=0 $ where $s_{1}$ is an absorbing state. All policies are proper.
\end{enumerate}

\begin{proof}
We first check Assumptions~\ref{assump:2}, \ref{assump:3}, \ref{assump:4}, and \ref{assump:6} in Section~\ref{sec:model} are satisfied. Then we simply apply Theorem~\ref{thm:5} to Generalized Q-learning.

Assumption~\ref{assump:2} is satisfied in the special case where $\tau ^{i}_{j}(t)=t$, which is what was implicitly assumed in Equation~\ref{eq:13}, but can be also satisfied even if we allow for outdated information.

Regarding Assumption~\ref{assump:3}, parts $(i)$ and $(ii)$ of the assumption are then automatically valid. Part $(iii)$ is quite natural: in particular, it assumes that the required samples are generated after we decide which components to update during the current iteration. Part $(iv)$ is automatic from Equation~\ref{eq:18}. Part $(v)$ is satisfied by Lemma~\ref{lem:2}.

Assumption~\ref{assump:4} needs to be imposed on the step-sizes employed by the Generalized Q-learning algorithm. This assumption is standard for stochastic approximation algorithms. In particular, it requires that every state-action pair $(s,a)$ is simulated an infinite number of times.

By Lemma~\ref{lem:3}, $F$ is a contraction mapping. Assumption~\ref{assump:6} is satisfied. 

All assumptions required by Theorem~\ref{thm:5} are verified, convergence then follows from Theorem~\ref{thm:5}.

\end{proof}

\section{Additional Empirical Results}

\subsection{MDP results}\label{app_mdp}

Comparison of three algorithms using the simple MDP in Figure~\ref{fig:MDP} with different values of $\mu$ is shown in Figure~\ref{fig:toymdp-qvalue}. For $\mu=+0.1$, the learning curves of action value $Q(A,\text{Left})$ are shown in $(a)$. Here, the true action value $Q(A,\text{Left})$ is $+0.1$. For $\mu=-0.1$, the learning curves of action value $Q(A,\text{Left})$ are shown in $(b)$. The true action value $Q(A,\text{Left})$ is $-0.1$. All results were averaged over $5,000$ runs.
	
\begin{figure*}[htbp]
	\subfigure[$\mu=+0.1$]{
		\includegraphics[width=\figwidthtwo]{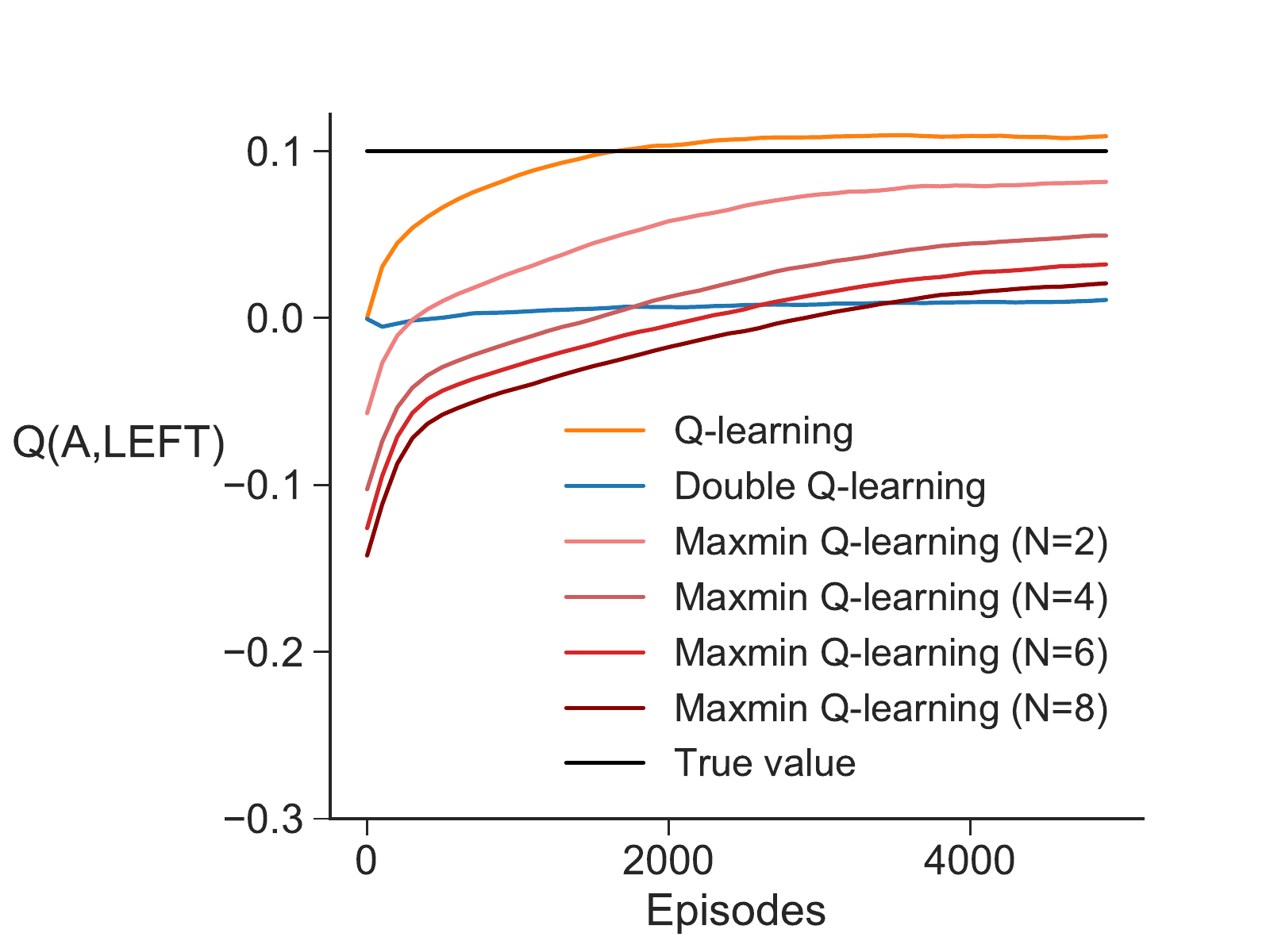}} \label{fig:mdp-q-pos}
	\subfigure[$\mu=-0.1$]{
		\includegraphics[width=\figwidthtwo]{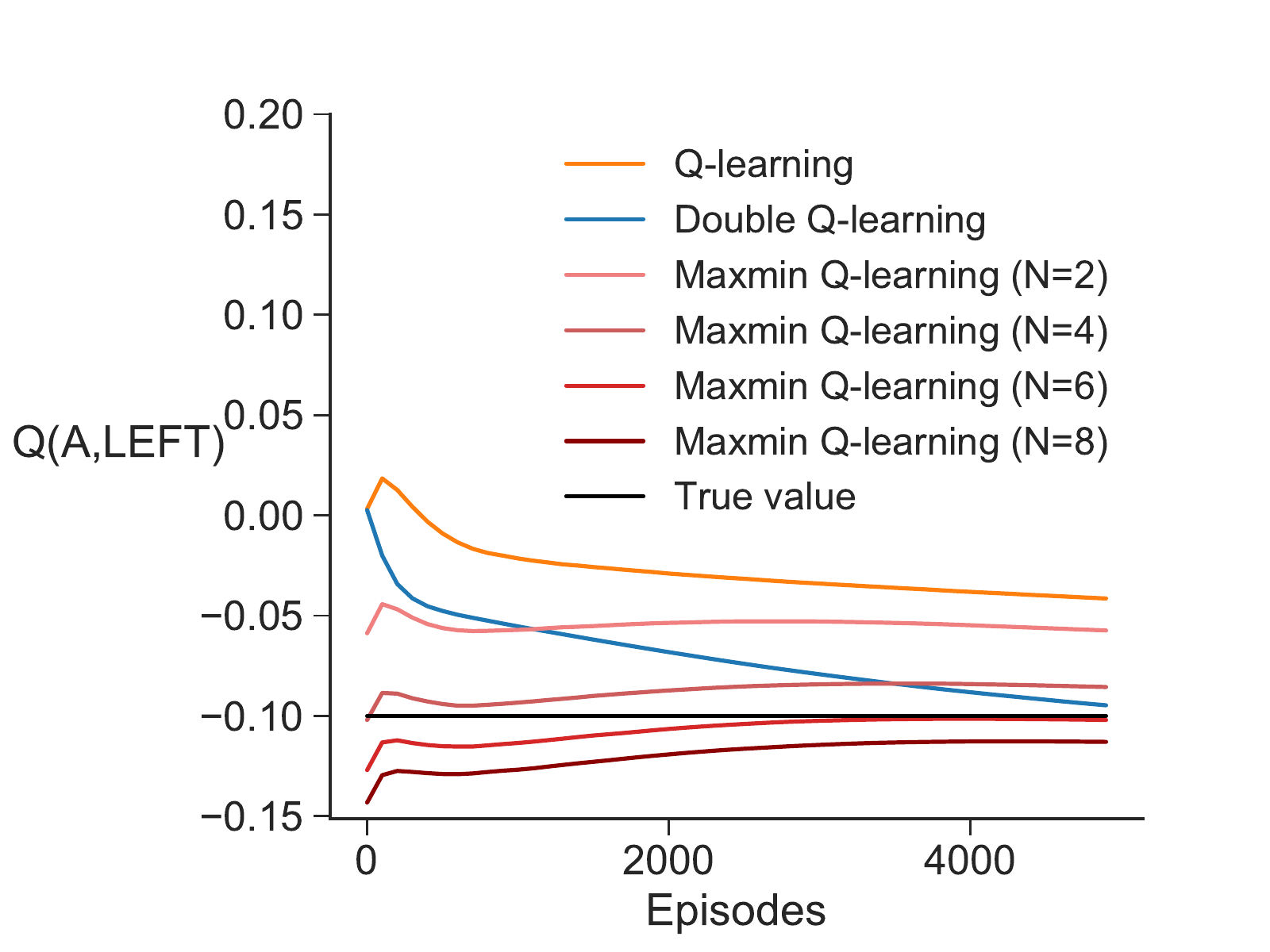}}\label{fig:mdp-q-neg}
	\caption{MDP results}
	\label{fig:toymdp-qvalue}
	\vspace{-0.5cm}	
\end{figure*}

\subsection{Mountain Car results}\label{app_mc}

\begin{figure}[h]
\begin{center}
    \subfigure[$Reward \sim \mathcal{N}(-1,0)$]{
\includegraphics[width=\figwidthtwo]{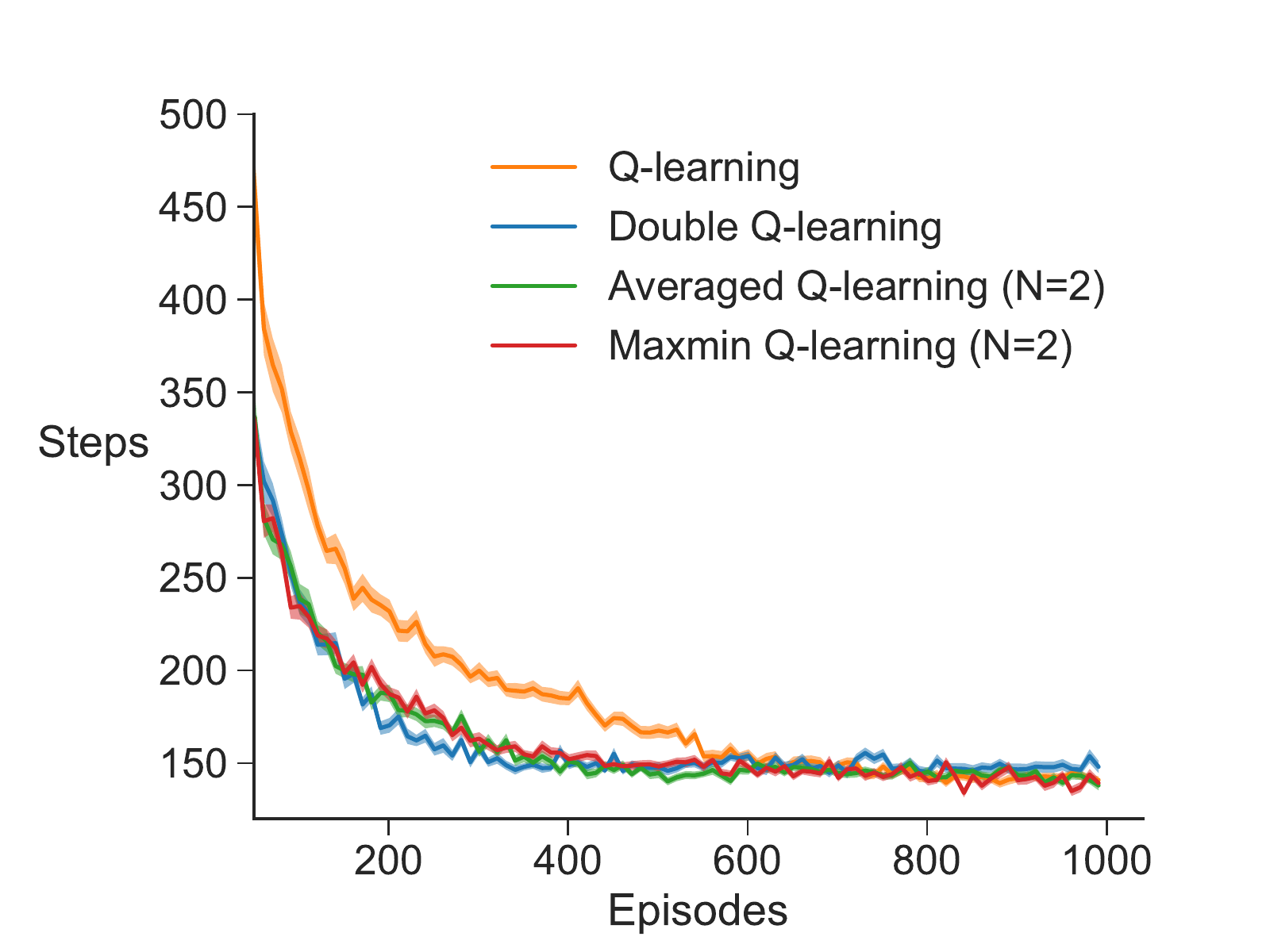}}
	\subfigure[$Reward \sim \mathcal{N}(-1,1)$]{
\includegraphics[width=\figwidthtwo]{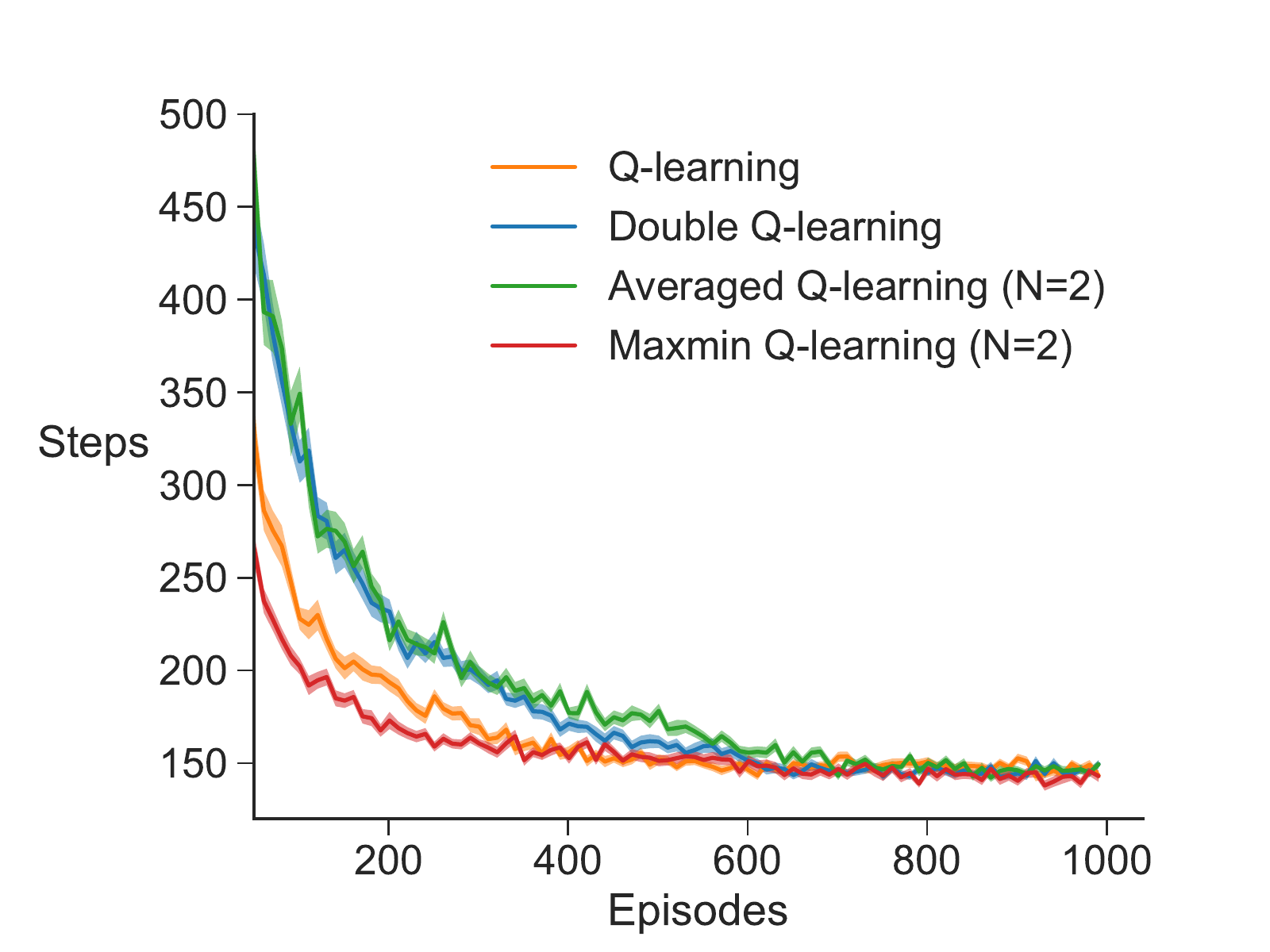}}
	\subfigure[$Reward \sim \mathcal{N}(-1,10)$]{
\includegraphics[width=\figwidthtwo]{figures/MC_10.pdf}}
	\subfigure[$Reward \sim \mathcal{N}(-1,50)$]{
\includegraphics[width=\figwidthtwo]{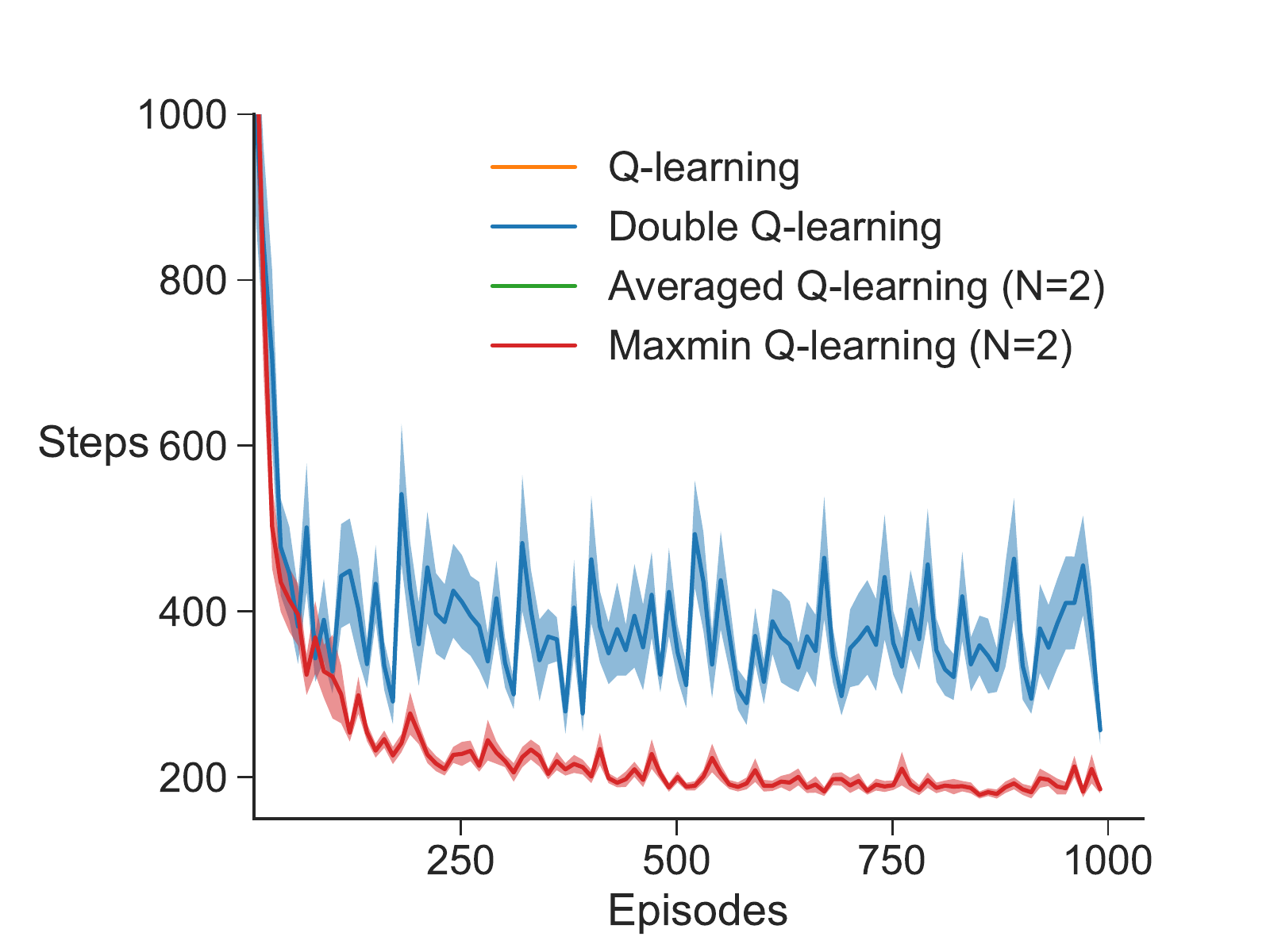}}
\end{center}
\caption{Mountain Car results}
\label{fig:MC_lc}
\end{figure}

Comparison of four algorithms on Mountain Car under different reward settings is shown in Figure~\ref{fig:MC_lc}. All experimental results were averaged over $100$ runs. Note that for reward variance $\sigma^{2}=50$, both Q-learning and Averaged Q-learning fail to reach the goal position in $5,000$ steps so there are no learning curves shown in Figure $7$ $(d)$ for these two algorithms.

\subsection{Benchmark Environment results}\label{app_benchmark}

The sensitivity analysis results of seven benchmark environment are shown in Figure~\ref{fig:sensitivity}.

\begin{figure}[htbp]
\begin{center}
    \subfigure[Catcher]{
\includegraphics[width=0.46\textwidth]{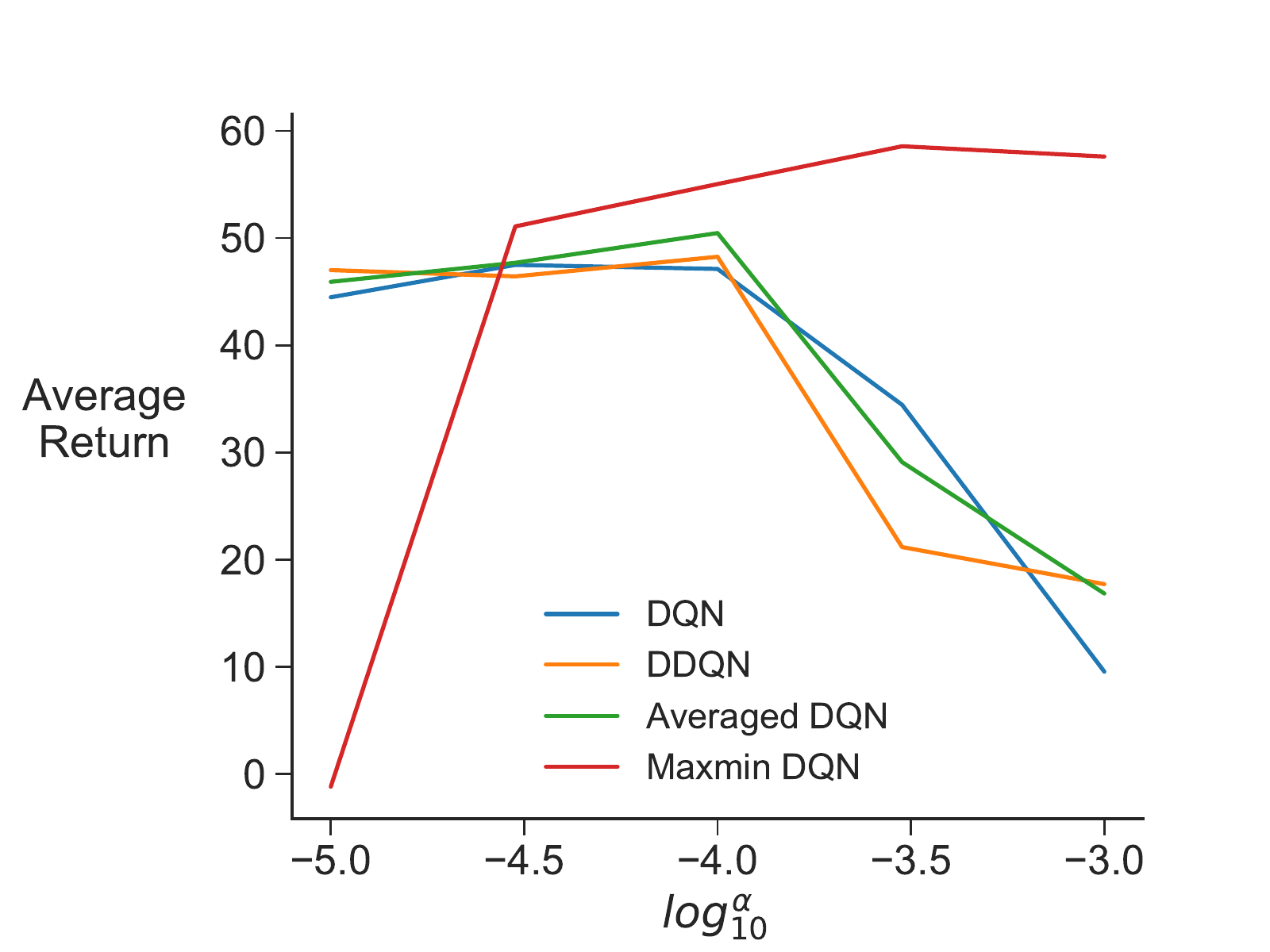}}
    \subfigure[Pixelcopter]{
\includegraphics[width=0.46\textwidth]{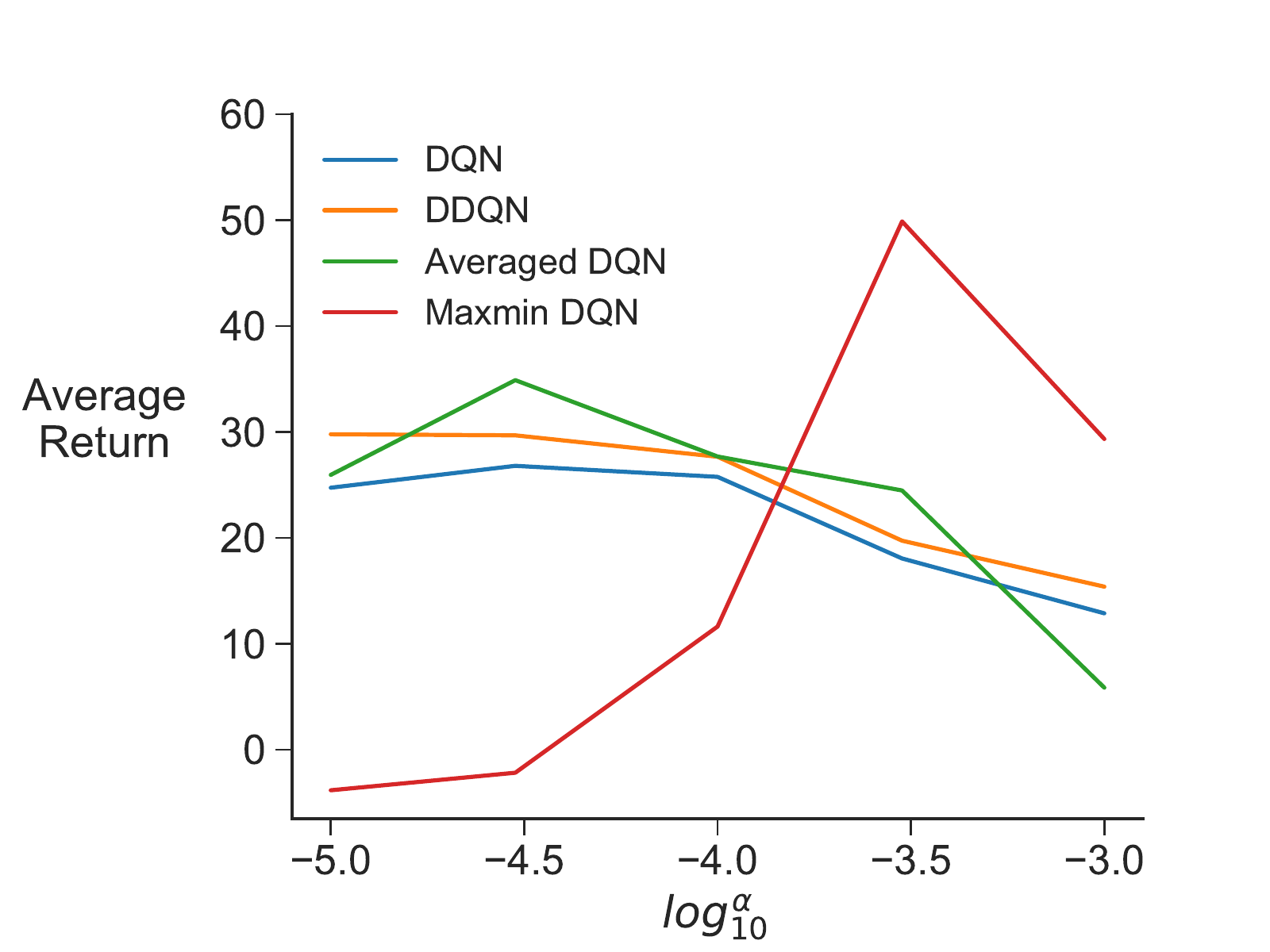}}
    \subfigure[Lunarlander]{
\includegraphics[width=0.46\textwidth]{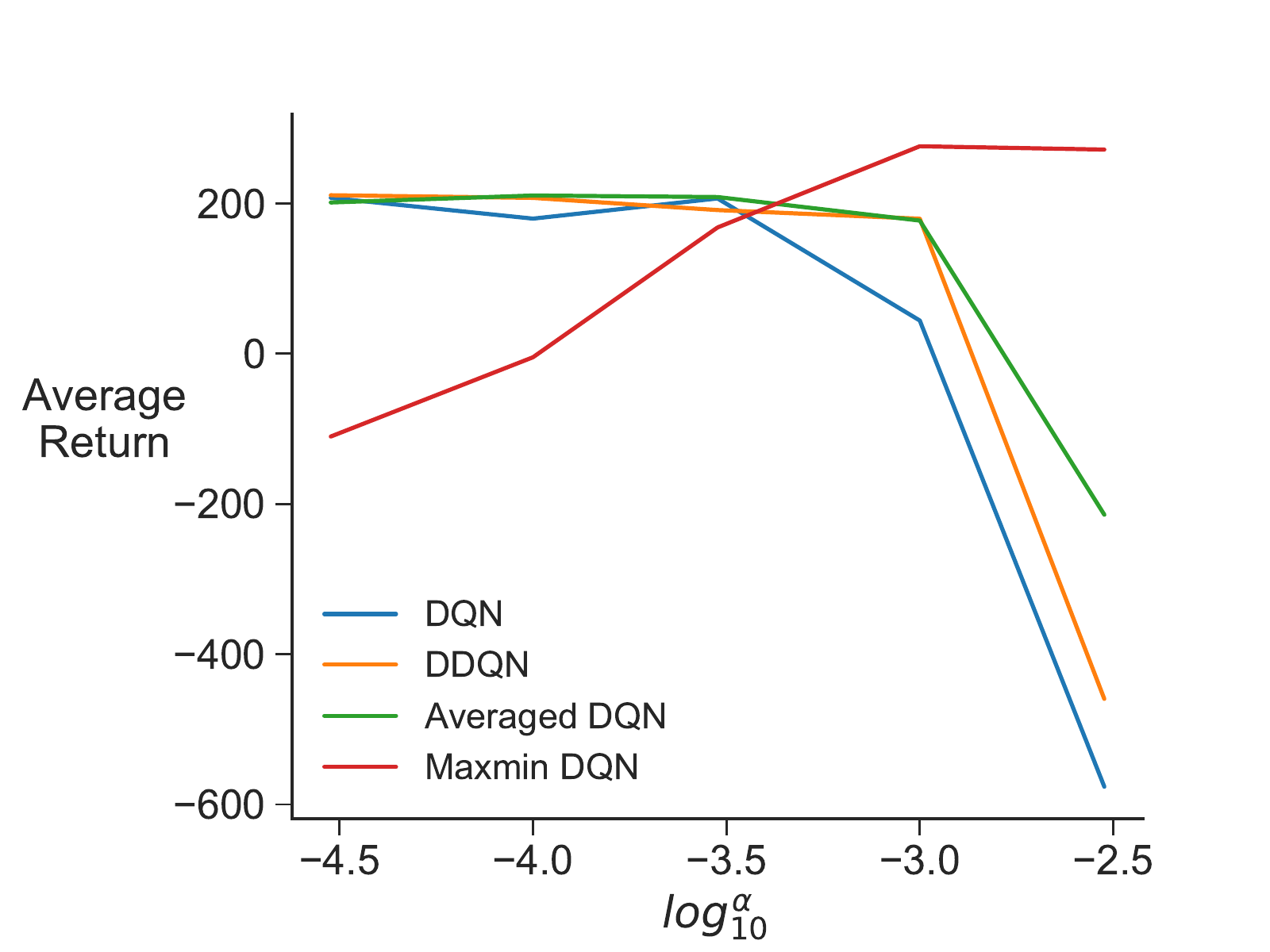}}
    \subfigure[Asterix]{
\includegraphics[width=0.46\textwidth]{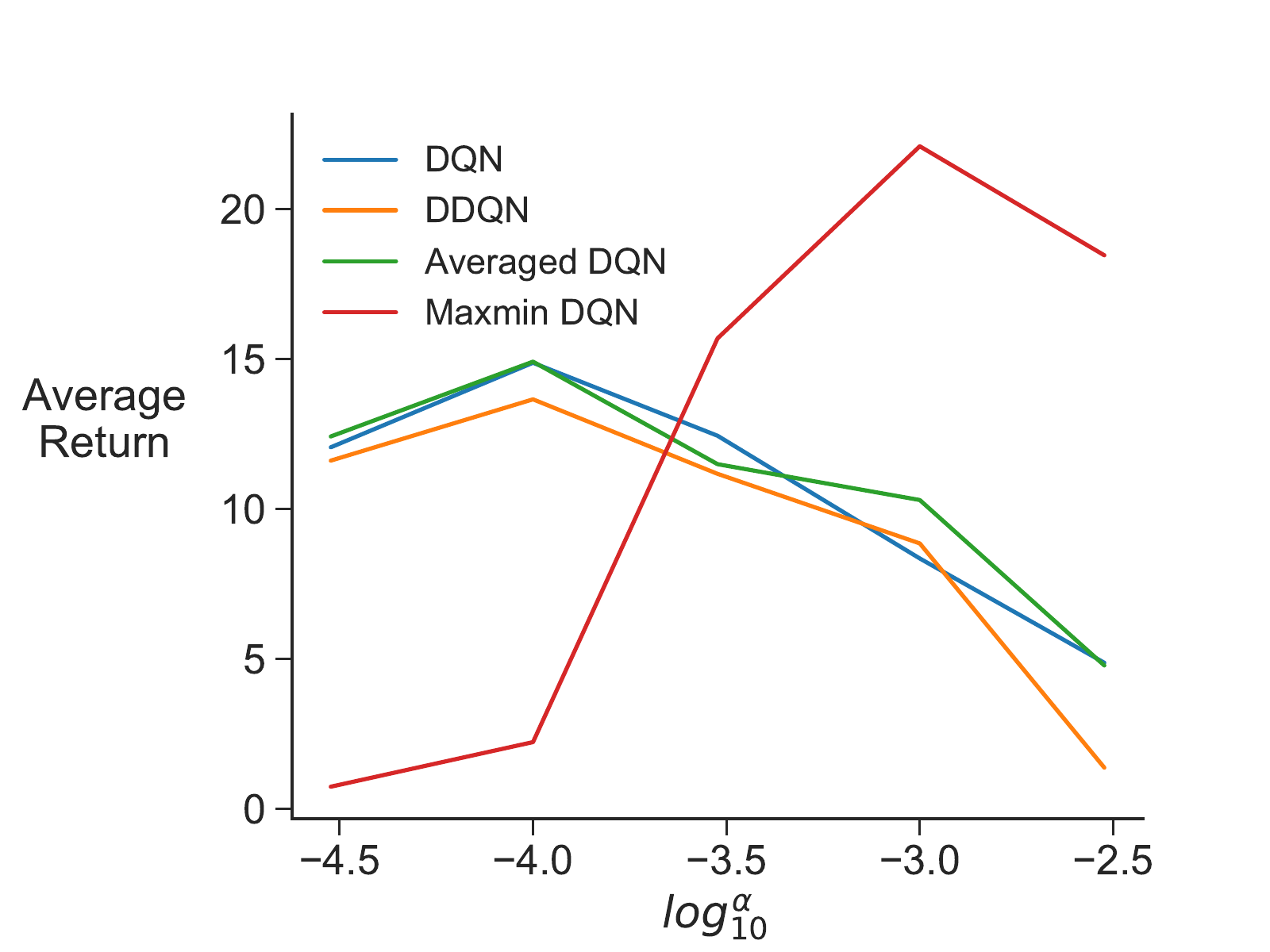}}
    \subfigure[Space Invaders]{
\includegraphics[width=0.46\textwidth]{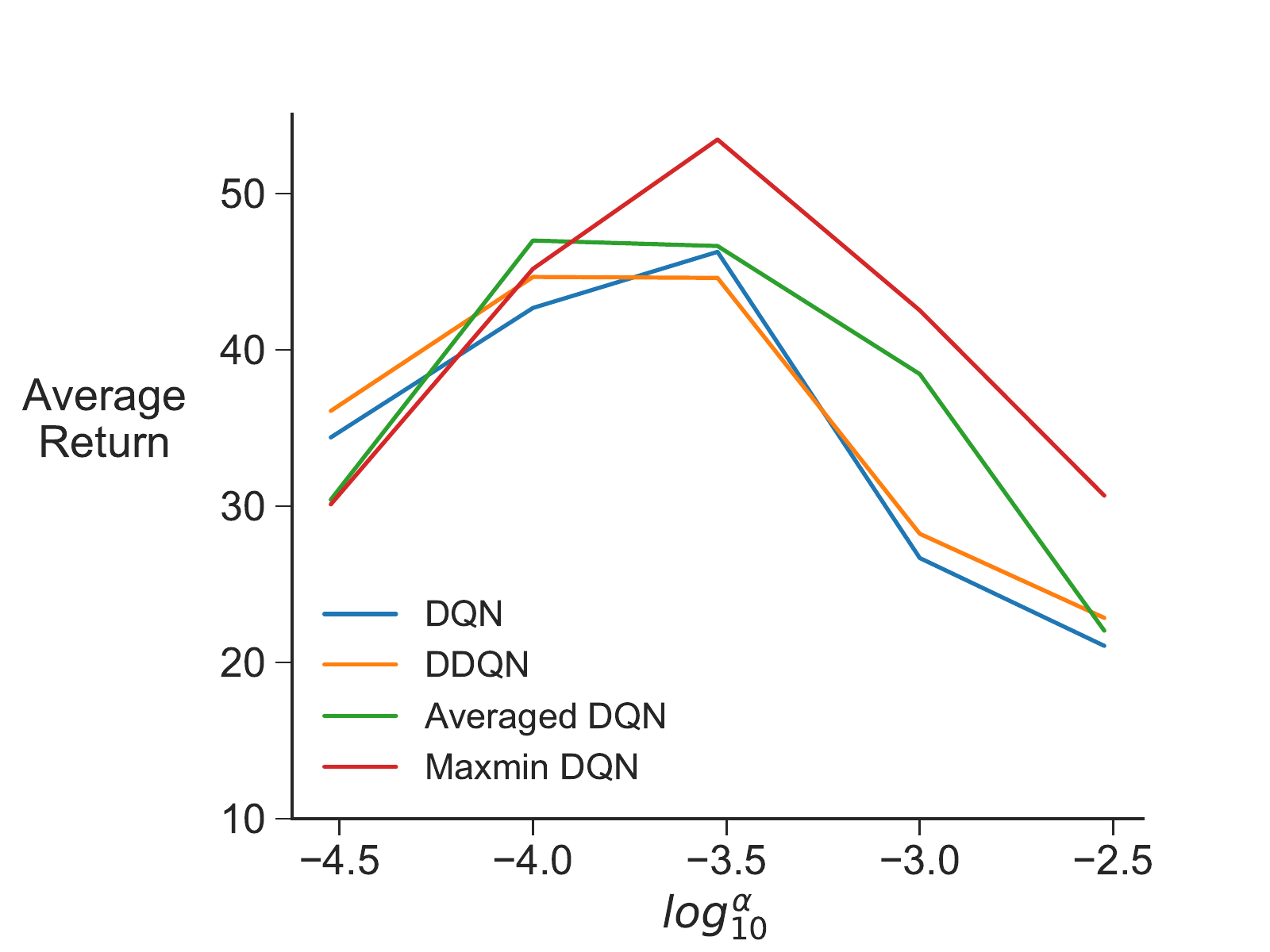}}
    \subfigure[Breakout]{
\includegraphics[width=0.46\textwidth]{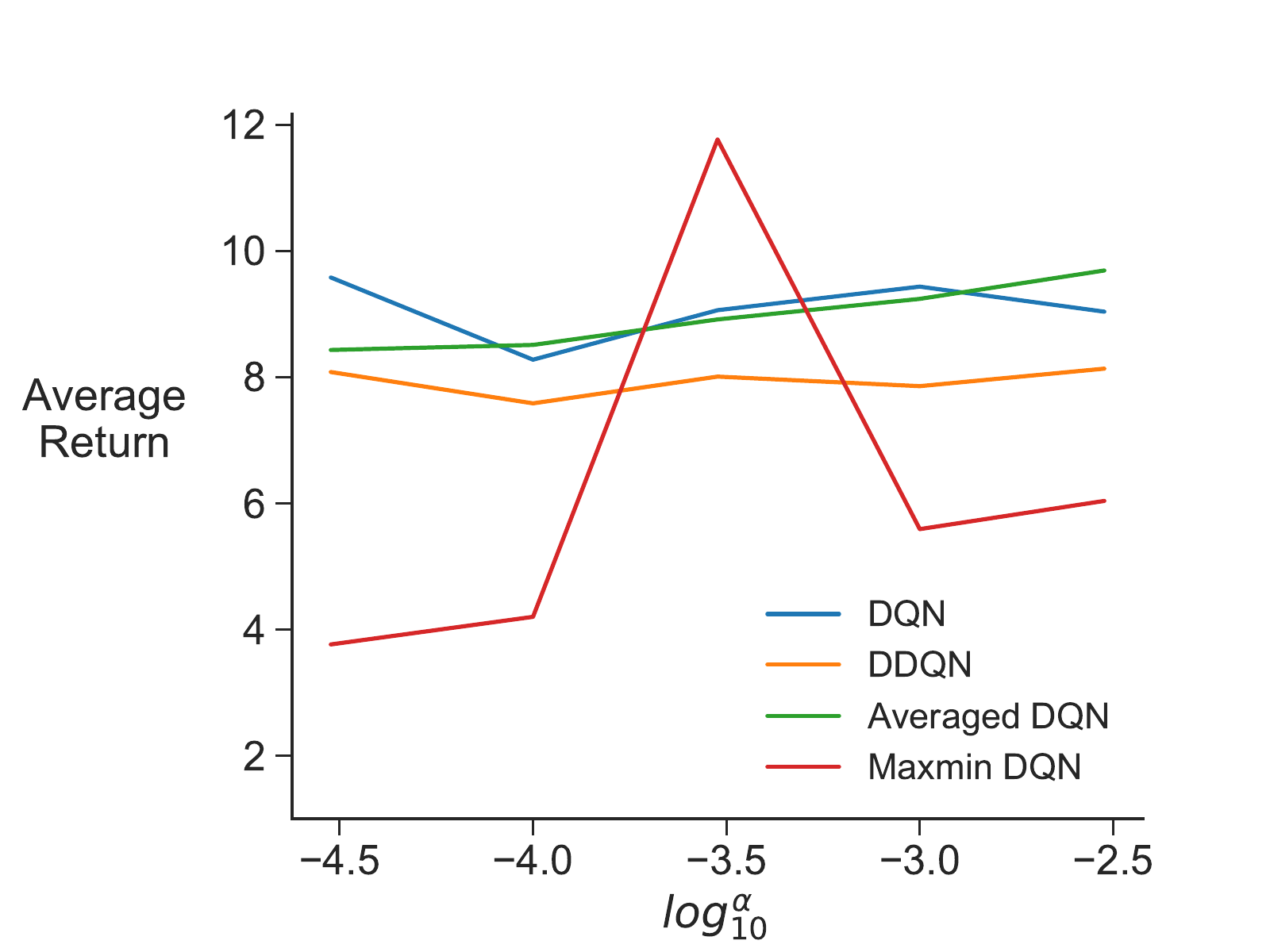}}
    \subfigure[Seaquest]{
\includegraphics[width=0.46\textwidth]{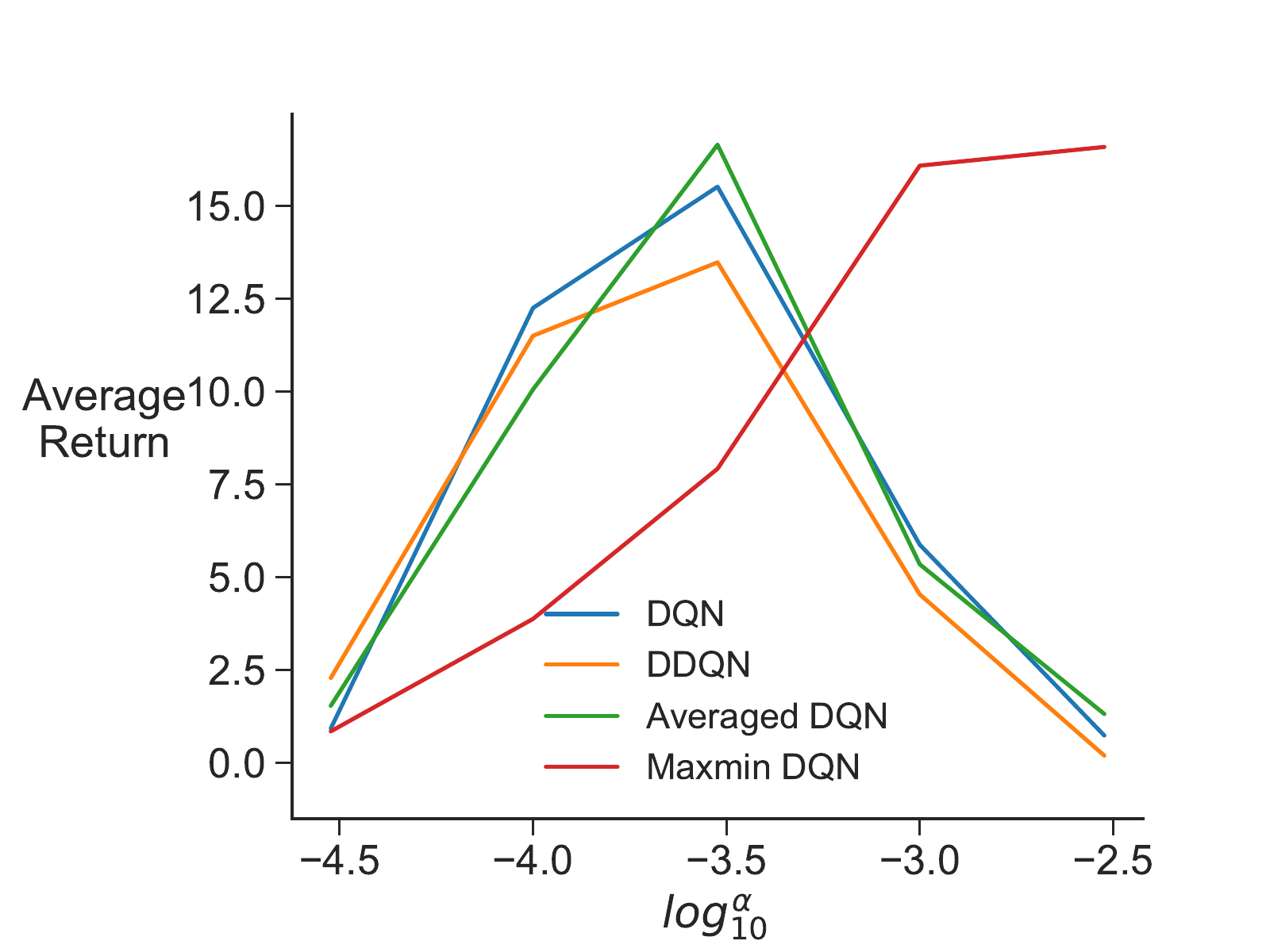}}
\end{center}
\caption{Sensitivity analysis}
\label{fig:sensitivity}
\end{figure}

\end{document}